\definecolor{darkgreen}{rgb}{0.0,0,0.9}
\newtheorem{case}{Case}
\DeclareSymbolFont{rsfs}{U}{rsfs}{m}{n}
\DeclareSymbolFontAlphabet{\mathscrsfs}{rsfs}
\newcommand\reallywidehat[1]{%
\savestack{\tmpbox}{\stretchto{%
  \scaleto{%
    \scalerel*[\widthof{\ensuremath{#1}}]{\kern.1pt\mathchar"0362\kern.1pt}%
    {\rule{0ex}{\textheight}}%WIDTH-LIMITED CIRCUMFLEX
  }{\textheight}% 
}{2.4ex}}%
\stackon[-6.9pt]{#1}{\tmpbox}%
}
\DeclareSymbolFont{rsfs}{U}{rsfs}{m}{n}
\DeclareSymbolFontAlphabet{\mathscrsfs}{rsfs}
\numberwithin{equation}{section}
\newtheoremstyle{myexample} % name
    {\topsep}                    % Space above
    {\topsep}                    % Space below
    {\rm }                   % Body font
    {}                           % Indent amount
    {\bf }                   % Theorem head font
    {.}                          % Punctuation after theorem head
    {.5em}                       % Space after theorem head
    {}  % Theorem head spec (can be left empty, meaning normal)
\newcommand*{\rom}[1]{\expandafter\@slowromancap\romannumeral #1@}
\begin{document}

\title{\bf Analysis of Regularized Least-Squares in Reproducing Kernel Kre\u{\i}n Spaces}

\author{Fanghui Liu\thanks{Department of Electrical Engineering
		(ESAT-STADIUS), KU Leuven.~~~~ E-mail: fanghui.liu@kuleuven.be} \;\;\;\; Lei Shi\thanks{Shanghai Key Laboratory for Contemporary Applied Mathematics, School of Mathematical Sciences, Fudan University} \;\;\;\; Xiaolin Huang\thanks{Institute of Image Processing and Pattern Recognition, Institute of Medical Robotics, Shanghai Jiao Tong University} \;\;\;\; Jie Yang\footnotemark[3] \;\;\;\;
	Johan A.K. Suykens\footnotemark[1]}

\maketitle

\begin{abstract}
	\noindent In this paper, we study the asymptotic properties of regularized least squares with indefinite kernels in reproducing kernel Kre\u{\i}n spaces (RKKS). By introducing a bounded hyper-sphere constraint to such non-convex regularized risk minimization problem, we theoretically demonstrate that this problem has a globally optimal solution with a closed form on the sphere, which makes approximation analysis feasible in RKKS. Regarding to the original regularizer induced by the indefinite inner product, we modify traditional error decomposition techniques, prove convergence results for the introduced hypothesis error based on matrix perturbation theory, and derive learning rates of such regularized regression problem in RKKS. Under some conditions, the derived learning rates in RKKS are the same as that in reproducing kernel Hilbert spaces (RKHS), which is actually the first work on approximation analysis of regularized learning algorithms in RKKS.
\end{abstract}

%\tableofcontents

\section{Introduction}
Kernel methods \cite{Sch2003Learning,suykens2002least,liu2020learning} have demonstrated success in statistical learning, such as classification \cite{Zhu2002Kernel,shang2019unsupervised}, regression \cite{shi2019sparse,farooq2019learning}, and clustering \cite{dhillon2004kernel,terada2019kernel,liu2020simplemkkm}.
The key ingredient of kernel methods is a kernel function, that is positive definite (RD) and can be associated with the inner product of two vectors in a reproducing kernel Hilbert space (RKHS).
Nevertheless, in real-world applications, the used kernels might be \emph{indefinite} (real, symmetric, but not positive definite) \cite{Ying2009Analysis,Ga2016Learning,oglic2019scalable} due to \emph{intrinsic} and \emph{extrinsic} factors.
Here, \emph{intrinsic} means that we often meet some indefinite kernels such as \textit{tanh} kernel \cite{smola2001regularization}, \textit{TL1} kernel \cite{huang2017classification}, \textit{log} kernel \cite{boughorbel2005conditionally}, and hyperbolic kernel \cite{cho2019large}. 
%\changed{M2.2}{\link{R2.2}}
Meanwhile, \emph{extrinsic} indicates that some positive definite kernels degenerate to indefinite ones in some cases.
An intuitive example is that a linear combination of PD kernels (with negative coefficient) \cite{Cheng2005Learning} is an indefinite kernel.
Polynomial kernels on the unit sphere are not always PD \cite{pennington2015spherical}.
In manifold learning, the Gaussian kernel with some geodesic distances would lead to be an indefinite one.
In neural networks, the sigmoid kernel with various values of hyper-parameters are mostly indefinite \cite{Cheng2004Learning}.
We refer to a survey \cite{Schleif2015Indefinite} for details.
%In this paper, we aim to study the asymptotic properties of the regularized risk minimization problem in a reproducing kernel Kre\u{\i}n space (RKKS) \cite{Cheng2004Learning,Ga2016Learning,oglic2019scalable}, which can be associated with a (reproducing) indefinite kernel \cite{azizov1981linear,hassibi1996linear}.

Efforts on indefinite kernels are often based on a reproducing kernel Kre\u{\i}n space (RKKS) \cite{Cheng2004Learning,Ga2016Learning,Alabdulmohsin2016Large,saha2020learning} which is endowed by the indefinite inner product and is associated with a (reproducing) indefinite kernel.
The related optimization problem is often non-convex due to the non-positive definiteness of the used indefinite kernel.
{Since the indefinite inner product in RKKS does not define a norm, typical empirical risk minimization that minimizes over a class of functionals in RKHS is transformed to \emph{stabilization} in RKKS via projection \cite{ando2009projections}. 
	Here \emph{stabilization} means finding a stationary point (more precisely, saddle points) instead of a minimum, as conducted by a series of indefinite kernel based algorithms \cite{Cheng2004Learning,Ga2016Learning,saha2020learning}.
	In stabilization, the indefinite inner product in RKKS via a projection view can be still served as a valid regularization mechanism \cite{Ga2016Learning}.
	Although stabilization based algorithms achieve promising performance and nice theoretical results, Oglic and G\"artner \cite{oglic18a,oglic2019scalable} directly consider empirical risk minimization in RKKS restricted in a hyper-sphere, and demonstrate that this setting is able to generalize well.}

In learning theory, the asymptotic behavior of these regularized indefinite kernel learning based algorithms in RKKS has not been fully investigated in an approximation theory view.
Current literature \cite{Wu2006Learning,Steinwart2009Optimal,lin2017distributed,jun2019kernel} on approximation analysis often focus on regularized methods in RKHS, but their results could not be directly applied to that in RKKS due to the following two reasons.
{First, approximation analysis in RKHS often requires a (globally) optimal solution yielded by learning algorithms.
	While the target of most indefinite kernel based methods via stabilization in RKKS is to seek for a saddle point instead of a minimum. In this case, traditional concentration estimates could be invalid to that in RKKS.}
%This can be hardly given by stabilization in RKKS, as the target of stabilization is to seek for a saddle point instead of a minimum.
%Moreover, this stabilization problem does not necessarily have a unique saddle point, which makes the traditional approximation analysis infeasible to define the concentration of certain empirical hypotheses around some target hypothesis.
%In this case, traditional concentration estimates in RKHS are invalid due to an unattainable optimal solution.
Second, in RKKS, the regularizer endowed by the indefinite inner product might be negative, which would fail to quantify complexity of a hypothesis. The classical error decomposition technique \cite{cucker2007learning,lin2017distributed} might be infeasible to our setting in RKKS.
%In this case, the excess error in RKKS can not be bounded by the sample error and the regularization error when using the classical error decomposition technique \cite{cucker2007learning}, which is not suitable for our analysis in RKKS.

To overcome the mentioned essential problems, in this paper, we study learning rates of least squares regularized  regression in RKKS.
Motivated by \cite{oglic18a}, we focus on a typical empirical risk minimization in RKKS, i.e., indefinite kernel ridge regression in a hyper-sphere region endowed by the indefinite inner product. For this purpose, we provide a detailed error analysis and then derive learning rates.
To be specific, in algorithm, we demonstrate that, the solution to our considered kernel ridge regression model in RKKS with a spherical constraint can be achieved on the hyper-sphere. Subsequently, albeit non-convex, this model admits a global minimum with a closed form as demonstrated by \cite{oglic18a}.
%Such globally optimal solution will help us to study the asymptotical properties of learning problems in RKKS.
We start the analysis from the regularized algorithm that has an analytical solution and obtain the first-step to understand the learning behavior in RKKS.
In theory, we modify the traditional error decomposition approach, and thus the excess error can be bounded by the sample error, the regularization error, and the additional hypothesis error. We provide estimates for the introduced hypothesis error based on matrix perturbation theory for non-Hermitian and non-diagonalizable matrices and then derive convergence rates of such model. 
Our analysis is able to bridge the gap between the least squares regularized regression problem in RKHS and RKKS.
Under some conditions, the derived learning rates in RKKS is the same as that in RKHS (the best case).
%Our results can be also extended to problem \eqref{fzrs} with non-negative RKHS regularizers.
To the best of our knowledge, this is the first work to study learning rates of regularized risk minimization in RKKS.

The rest of the paper is organized as follows.
We briefly introduce the basic concepts of Kre\u{\i}n spaces and RKKS in Section~\ref{sec:pre}.
Section~\ref{sec:prosetting} presents the problem setting and main results under some fair assumptions.
In Section~\ref{sec:algo}, we present the least squares regularized regression model in RKKS and give a globally optimal solution to aid the proof.
In Section~\ref{sec:prooframe}, we give the framework of convergence analysis for the modified error decomposition technique, detail the estimates for the introduced hypothesis error, and derive the learning rates.
In Section~\ref{sec:exp}, we report numerical experiments to demonstrate our theoretical results and the conclusion is drawn in Section~\ref{sec:conclusion}.

\section{Preliminaries}
\label{sec:pre}

In this section, we briefly introduce the definitions and basic properties of Kre\u{\i}n spaces and the reproducing kernel Kre\u{\i}n space (RKKS) that we shall need later.
Detailed expositions can be found in the book \cite{bognar1974indefinite}.

We begin with a vector space $\mathcal{H_K}$ defined on the scalar field $\mathbb{R}$. 
\begin{definition}\label{def1}
	{(inner product space) An inner product space is a vector space  $\mathcal{H_K}$ defined on the scalar field $\mathbb{R}$ together with a bilinear form $\langle \cdot, \cdot \rangle_{\mathcal{H_K}}$ called inner product that satisfies the following conditions\\
		\textit{i)} symmetry: $\forall f,g \in \mathcal{H_K}$, we have $\langle f, g \rangle_{\mathcal{H_K}} = \langle g,f \rangle_{\mathcal{H_K}}$.\\
		\textit{ii)} linearity:$\forall f,g,h \in \mathcal{H_K}$ and two scalrs $a,b \in \mathbb{R}$, we have $\langle af+bg, h \rangle_{\mathcal{H_K}} = a\langle f, h \rangle_{\mathcal{H_K}} +  b\langle g, h \rangle_{\mathcal{H_K}} $.\\
		\textit{iii)} non-degenerate: for $f \in \mathcal{H_K}$, if $\langle f, g \rangle_{\mathcal{H_K}} = 0 $ for all $g \in \mathcal{H_K}$ implies that $f=0$.}
\end{definition}

{If $\langle f, f \rangle_{\mathcal{H_K}} > 0 $ holds for any $f \in \mathcal{H_K}$ with $f \neq 0$, then the inner product on $\mathcal{H_K}$ is \emph{positive}.
	If there exists $f,g \in \mathcal{H_K}$ such that $\langle f, f \rangle_{\mathcal{H_K}} > 0 $ and $\langle g, g \rangle_{\mathcal{H_K}} < 0 $, then the inner product is called \emph{indefinite}, and $\mathcal{H_K}$ is an indefinite inner product space.
	Recall that Hilbert spaces satisfy the above conditions and admit the positive inner product.
	After reviewing the indefinite inner product, we are ready to introduce the definition of Kre\u{\i}n space.}

\begin{definition}\label{definiterkks}
	(Kre\u{\i}n space \cite{bognar1974indefinite}) The vector space $\mathcal{H_K}$ with the inner product $\langle \cdot, \cdot \rangle_{\mathcal{H_K}}$ is a Kre\u{\i}n space if there exist two Hilbert spaces $\mathcal{H}_+$ and $\mathcal{H}_-$ such that\\
	\textit{i)} the vector space $\mathcal{H_K}$ admits a direct orthogonal sum decomposition $\mathcal{H_K} = \mathcal{H}_+ \oplus \mathcal{H}_-$.\\
	\textit{ii)} all $f \in \mathcal{H_K}$ can be decomposed into $f=f_++f_-$, where $f_+ \in \mathcal{H}_+$ and $f_- \in \mathcal{H}_-$, respectively.\\
	\textit{iii)} $\forall f,g \in \mathcal{H_K}$, $\langle f,g\rangle_{\mathcal{H_K}}=\langle f_+,g_+\rangle_{\mathcal{H}_+} - \langle f_-,g_-\rangle_{\mathcal{H}_-}$.
\end{definition}
%For $f \in \mathcal{H_K}$, if $\langle f,g\rangle_{\mathcal{H_K}} = 0$ for any $g \in \mathcal{H_K}$ implies that $f = 0$.
From the definition, the decomposition $\mathcal{H_K} = \mathcal{H}_+ \oplus \mathcal{H}_-$ is not necessarily unique. For a fixed decomposition, the inner product $ \langle f, g \rangle_{\mathcal{H_K}}$ is given accordingly \cite{Ga2016Learning,oglic18a}.
Kre\u{\i}n spaces are indefinite inner product spaces endowed with a Hilbertian topology.
The key difference with Hilbert spaces is that the inner products might be negative for Kre\u{\i}n spaces, i.e., there exists $f \in \mathcal{H_K}$ such that $\langle f,f\rangle_{\mathcal{H_K}} < 0$.
If $\mathcal{H}_+$ and $\mathcal{H}_-$ are two RKHSs, the Kre\u{\i}n space $\mathcal{H_K}$ is a RKKS associated with a unique indefinite reproducing kernel $k$ such that the reproducing property holds, i.e., $\forall f \in \mathcal{H_K},~f(x) = \langle f,k(x,\cdot) \rangle_{\mathcal{H_K}}$.
\begin{proposition}
	(positive decomposition \cite{bognar1974indefinite})
	An indefinite kernel $k$ associated with a RKKS admits a positive decomposition $k = k_+ - k_-$, with two positive definite kernels $k_+$ and $k_-$.
	\vspace{-0.1cm}
\end{proposition}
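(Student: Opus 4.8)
The plan is to work with a fixed fundamental decomposition $\mathcal{H_K} = \mathcal{H}_+ \oplus \mathcal{H}_-$ as in Definition~\ref{definiterkks}, where $\mathcal{H}_+$ and $\mathcal{H}_-$ are RKHSs of real-valued functions on a common domain $\mathcal{X}$ with reproducing kernels $k_+$ and $k_-$, respectively. Since $k_+$ and $k_-$ are reproducing kernels of genuine Hilbert spaces, they are automatically symmetric and positive definite, so it remains only to establish the pointwise identity $k(x,y) = k_+(x,y) - k_-(x,y)$ for all $x,y \in \mathcal{X}$. Equivalently, fixing $x \in \mathcal{X}$, I will show that the representer $k(x,\cdot) \in \mathcal{H_K}$ equals $k_+(x,\cdot) - k_-(x,\cdot)$ as an element of $\mathcal{H_K}$; evaluating this identity at $y$ then yields the claim.

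First I would use that the sum $\mathcal{H}_+ \oplus \mathcal{H}_-$ is direct to write, uniquely, $k(x,\cdot) = \phi_x^+ + \phi_x^-$ with $\phi_x^+ \in \mathcal{H}_+$ and $\phi_x^- \in \mathcal{H}_-$. Combining the reproducing property of $k$ in $\mathcal{H_K}$ with part~\textit{iii)} of Definition~\ref{definiterkks}, for every $f = f_+ + f_- \in \mathcal{H_K}$ one gets
\[
f_+(x) + f_-(x) \;=\; f(x) \;=\; \langle f, k(x,\cdot)\rangle_{\mathcal{H_K}} \;=\; \langle f_+, \phi_x^+\rangle_{\mathcal{H}_+} - \langle f_-, \phi_x^-\rangle_{\mathcal{H}_-}.
\]
Next I would specialize this identity. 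Taking $f_- = 0$ gives $f_+(x) = \langle f_+, \phi_x^+\rangle_{\mathcal{H}_+}$ for all $f_+ \in \mathcal{H}_+$; comparing with the reproducing property $f_+(x) = \langle f_+, k_+(x,\cdot)\rangle_{\mathcal{H}_+}$ in $\mathcal{H}_+$ and invoking non-degeneracy of $\langle\cdot,\cdot\rangle_{\mathcal{H}_+}$ (Definition~\ref{def1}, applied inside the Hilbert space $\mathcal{H}_+$) forces $\phi_x^+ = k_+(x,\cdot)$. Symmetrically, taking $f_+ = 0$ gives $f_-(x) = -\langle f_-, \phi_x^-\rangle_{\mathcal{H}_-}$ for all $f_- \in \mathcal{H}_-$, and comparison with $f_-(x) = \langle f_-, k_-(x,\cdot)\rangle_{\mathcal{H}_-}$ forces $\phi_x^- = -k_-(x,\cdot)$. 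Hence $k(x,\cdot) = \phi_x^+ + \phi_x^- = k_+(x,\cdot) - k_-(x,\cdot)$, and evaluating at $y \in \mathcal{X}$ completes the argument.

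The algebraic manipulations above are routine; the point that deserves care — and the main obstacle — is the bookkeeping that makes ``$f_+(x)$'' meaningful, namely that the projections $f \mapsto f_\pm$ attached to a fundamental decomposition act on functions pointwise (so $\mathcal{H}_+$ and $\mathcal{H}_-$ sit inside $\mathcal{H_K}$ as spaces of functions on $\mathcal{X}$ and $f = f_+ + f_-$ is pointwise addition), together with the fact that point evaluations on $\mathcal{H}_+$ and $\mathcal{H}_-$ are bounded so that the reproducing kernels $k_+, k_-$ exist. Both are part of the standing hypothesis that $\mathcal{H_K}$ is an RKKS with RKHS summands, so once they are recorded explicitly the proof goes through. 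One should also note that, although the fundamental decomposition of $\mathcal{H_K}$ need not be unique, the displayed identity holds for whichever decomposition is fixed, with $k_+$ and $k_-$ the corresponding reproducing kernels.
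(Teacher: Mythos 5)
Your proof is correct. The paper does not actually prove this proposition---it is quoted from Bogn\'ar's book \cite{bognar1974indefinite} without argument---but your derivation is the standard one: decompose the representer $k(x,\cdot)=\phi_x^++\phi_x^-$ along the fixed fundamental decomposition, use the Kre\u{\i}n reproducing property together with $\langle f,g\rangle_{\mathcal{H_K}}=\langle f_+,g_+\rangle_{\mathcal{H}_+}-\langle f_-,g_-\rangle_{\mathcal{H}_-}$ to identify $\phi_x^+=k_+(x,\cdot)$ and $\phi_x^-=-k_-(x,\cdot)$ via Riesz uniqueness in each Hilbert summand, and evaluate pointwise. You also correctly flag the only point needing care (that the projections onto $\mathcal{H}_\pm$ act pointwise on functions and that point evaluations are bounded on each summand), and the resulting $k_\pm$ are positive definite as reproducing kernels of genuine RKHSs, so the argument is complete.
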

Typical examples include a wide range of commonly used indefinite kernels, such as a linear combination of PD kernels \cite{Cheng2005Learning}, and conditionally PD kernels \cite{schaback1999native,wendland2004scattered}.
It is important to note that, not every indefinite kernel function admits such positive decomposition as a difference between two positive definite kernels.
Nevertheless, this can be conducted on finite discrete spaces, e.g., eigenvalue decomposition of indefinite kernel matrices.
{In fact, for any given an indefinite kernel, whether it can be associated with RKKS still remains a long-lasting open question.
	For example, the hyperbolic kernel \cite{cho2019large} is based on the hyperboloid model \cite{sala2018representation}, in which the distance between two point is defined as the length of the geodesic path on the hyperboloid that connects the two points. 
	Although the used hyperboloid space stems from a finite-dimensional Kre\u{\i}n space, it is unclear whether the derived hyperbolic kernel is associated with RKKS or not.
	Besides, the existence of positive decomposition for the \textit{TL1} kernel \cite{huang2017classification}, defined by the truncated $\ell_1$ distance, is also unknown.
	Our results in this paper are based on RKKS, and can be applied to these kernels if they can be associated with RKKS.
}
%
%This decomposition of $k$ can be associated with the RKHS $\mathcal{H_{\bar K}}$ with the following definition.
\begin{definition}\label{defassrkks}
	(Associated RKHS of RKKS \cite{Cheng2004Learning})
	Let ${\mathcal{H_K}}$ be a RKKS with the direct orthogonal sum decomposition into two RKHSs ${\mathcal{H}_{+}}$ and ${\mathcal{H}_{-}}$.
	Then the associated RKHS ${\mathcal{H_{\bar K}}}$ endowed by ${\mathcal{H_K}}$ is defined with the positive inner product
	\begin{equation*} \langle f,g\rangle_{{\mathcal{H_{\bar K}}}}=\langle f_+,g_+\rangle_{{\mathcal{H_{+}}}} +\langle f_-,g_-\rangle_{{\mathcal{H_{-}}}},~~\forall f,g \in {\mathcal{H_K}}\,.
	\end{equation*}
\end{definition}
Note that ${\mathcal{H_{\bar K}}}$ is the smallest Hilbert space majorizing the RKKS ${\mathcal{H_{K}}}$ with $| \langle f,f\rangle_{{\mathcal{H_K}}} | \leq \| f\|_{{\mathcal{H_{\bar K}}}}^2 = \| f_+\|_{\mathcal{H_+}}^2 + \| f_-\|_{\mathcal{H_-}}^2$.
Denote $C(X)$ as the space of continuous functions on $X$ with the norm $\| \cdot \|_{\infty}$, and suppose that $\kappa \coloneqq  \sqrt{2}\sup_{\bm x \in X}  \sqrt{k_+(\bm x, \bm x) + k_-(\bm x, \bm x')} < \infty $. The reproducing property in RKKS indicates that $\forall f \in \mathcal{H_K}$, we have
\begin{equation}\label{fnormdiff}
\|f\|_{\infty} = \sup_{\bm x \in X} \big| \big \langle f, k(\bm x,\cdot) \big \rangle \big|  \leq \kappa \| f\|_{\mathcal{H_{\bar K}}}  \,.
\end{equation}
\begin{definition}\label{defemkernel}
	(The empirical covariance operator in RKKS \cite{Pekalska2009Kernel})
	Let $k$ be an indefinite kernel associated with a RKKS ${\mathcal{H_K}}$, $\psi: X \rightarrow \mathcal{H_K}$ be a mapping of the data in ${\mathcal{H_K}}$ and $\bm \Psi = [\psi(\bm x_1), \psi(\bm x_2), \cdots, \psi(\bm x_m)]$ be a sequence of images of the training data in ${\mathcal{H_K}}$, then its empirical non-centered covariance operator $T: {\mathcal{H_K}} \rightarrow {\mathcal{H_K}}$ is defined by
	\begin{equation}\label{emkernel}
	T = \frac{1}{m} \bm \Psi \bm \Psi^{*} \,,
	\end{equation}
	which is not positive definite in the Hilbert sense, but it is in the Kre\u{\i}n sense satisfying  $\langle \zeta, T \zeta \rangle_{\mathcal{H_K}} \geq 0$ for $\zeta \neq 0$.
\end{definition}
The operator $T$ actually depends on the sample set and it can be linked to an empirical kernel \cite{Guo2017Optimal}.
In our paper, we choose the mapping $\psi(\bm x) : = k(\bm x, \cdot)$ to obtain the empirical covariance operator $T$. 
Since $\langle f, T f \rangle_{\mathcal{H_K}}$ is nonnegative, we use it as a regularizer to aid our proof.
%Denote the indefinite kernel matrix as $\bm K=[k(\bm x_i, \bm x_j)]_{m \times m}$, which is computed by the indefinite kernel $k$ on the sample set.
%Obviously, the Gram matrix generated by $k_X$ is $\frac{1}{m} \bm K^{\!\top}\bm K$, also $\frac{1}{m} \bm K^2$ as $\bm K$ is symmetric.
%Specifically, $\widetilde{\mathcal{H}}$ and ${\mathcal{H_{\bar K}}}$ are not only topologically equivalent \cite{langer1962spektraltheoriej}, but also admit that for any $\bm x, \bm x' \in X$
%which indicates that the reproducing kernel $k_+  + k_-$ associated with ${\mathcal{H_{\bar K}}}$ is the square root of the limiting kernel on the training data \cite{Guo2017Optimal}.
%So we consider $k_X$ with $\widetilde{\mathcal{H}}$ to aid our proof.

\section{Problem Setting and Main Results}
\label{sec:prosetting}

In this section, we present our problem setting and give its convergence rate under some fair assumptions.

\subsection{Problem setting}
Let $X$ be a compact metric space and $Y \subseteq \mathbb{R}$, we assume that a sample set $\bm z = \{  (\bm x_i, y_i) \}_{i=1}^m \in Z^m $ is drawn from a non-degenerate Borel probability measure $\rho$ on $X \times Y$.
In the context of statistical learning theory, the \emph{target function} of $\rho$ is defined by
$f_{\rho}(\bm x) = \int_Y y \mathrm{d} \rho(y|\bm x), \bm x \in X$,
where $\rho(\cdot|\bm x)$ is the conditional distribution of $\rho$ at $\bm x \in X$.
The indefinite kernel function $k: X \times X \rightarrow \mathbb{R}$ is endowed by the RKKS $\mathcal{H_K}$.
The associated indefinite kernel matrix is given by $\bm K = [k(\bm x_i, \bm x_j)]_{i,j=1}^m$ on the sample set.
The goal of a supervised learning task in RKKS endowed by $k$ is to find a hypothesis $f: X \rightarrow Y$ such that $f(\bm x)$ is a good approximation of the label $y \in Y$ corresponding to a new instance $\bm x \in X$.

Motivated by \cite{oglic18a}, we consider the least squares regularized regression problem in a bounded region induced by the original regularization mechanism of RKKS
\begin{equation}\label{fzrs}
f_{\bm{z},\lambda} \coloneqq  \argmin_{f \in\mathcal{B} (r)} \left\{ \frac{1}{m} \sum_{i=1}^{m} \big(f(\bm x_i) - y_i \big)^2 + \lambda \langle f,f\rangle_{\mathcal{H_K}} \right\}\,,
\end{equation}
%with $\mathcal{B} (r) \coloneqq  \left\{ f \in \mathcal{H_K}: \frac{1}{m} \sum_{i=1}^{m} \left( f(\bm x_i)  \right)^2 \leq r^2 \right\}$.
where $\mathcal{B} (r)$ is spanned by the training data $\{ \bm x_i \}_{i=1}^m$ in $\mathcal{H_K}$ in a bounded hyper-sphere
	\begin{equation*}
		\mathcal{B} (r) \coloneqq  \left\{ f \in \operatorname{span}\{ k(\bm x_1,\cdot), k(\bm x_2,\cdot), \cdots, k(\bm x_m,\cdot) \}: \frac{1}{m} \sum_{i=1}^{m} \left( f(\bm x_i)  \right)^2 \leq r^2 \right\}\,.
	\end{equation*}
Here we employ the original regularization mechanism of RKKS, which aims to understand the learning behavior in RKKS and avoid the inconsistency when using various regularizers spanned by different spaces.
Our result can be applied to other settings with different regularizers in the next description.
Following with \cite{oglic18a}, we consider a risk minimization problem in a hyper-sphere instead of stabilization.
The considered hyper-spherical constraint in RKKS is able to prohibit the objective function value in problem~\eqref{fzrs} approaches to infinity, avoiding a meaningless solution.
The radius $r$ can be chosen by cross validation or hyper-parameter optimization \cite{oglic18a} in practice and is naturally needed and common in classical approximation analysis in RKHS \cite{Wu2006Learning,cucker2007learning,Steinwart2009Optimal}.
Such risk minimization problem still preserves the specifics of learning in RKKS, i.e., there exists some points $ f \in \mathcal{B} (r)$ such that $\langle f,f\rangle_{\mathcal{H_K}} < 0$, and generalizes well when compared to stabilization, as indicated by \cite{oglic18a}.
{One main reason why we consider the risk minimization problem is that, stabilization in RKKS does not necessarily have a unique saddle point, which makes approximation analysis infeasible to define the concentration of certain empirical hypotheses around some target hypothesis.
	Conversely, the studied risk minimization in a hyper-sphere, problem~\ref{fzrs}, leads to a (globally) optimal solution.}
This nice result on minimum motivates us to obtain the first-step to understand the learning behavior in RKKS.

\subsection{Main results}
\label{sec:appro}
In this section, we state and discuss our main results.
To illustrate our analysis, we need the following notations and assumptions.

In the least squares regression problem, the expected (quadratic) risk is defined as $\mathcal{E}(f) = \int_Z (f(\bm x) - y)^2 \mathrm{d} \rho$.
The empirical risk functional is defined on the sample $\bm z$, i.e., $\mathcal{E}_{\bm z}(f) = \frac{1}{m} \sum_{i=1}^{m} \big(f( \bm x_i) - y_i \big)^2$.
To measure the estimation quality of $f_{\bm z, \lambda}$, one natural way in approximation theory is the \emph{excess risk}: 
$\mathcal{E}(f_{\bm z, \lambda}) - \mathcal{E}(f_{\rho})$.
%Specifically, in least squares, the excess risk admits $\mathcal{E}(f_{\bm z, \lambda}) - \mathcal{E}(f_{\rho})] = \| f_{\bm{z},\lambda}  - f_{\rho} \|^2_{\mathcal{L}_{\rho_X}^{2}}$, which is exactly in the weighted $\mathcal{L}^{2}$-space with the norm $\|f\|^2_{\mathcal{L}^{2}_{\rho_{{X}}}} =  \int_{{X}} |f(\bm x)|^{2} \mathrm{d} \rho_{X}(\bm x) $.

\begin{assumption}\label{assrho}
	(Existence and boundedness of $f_{\rho}$) we assume that the \emph{target function} $f_{\rho} \in \mathcal{H_K}$ exists and bounded. There exits a constant $M^* \geq 1$, such that
	\begin{equation*}\label{assumption1}
	|f_{\rho}|\leq M^* \mbox{ for almost $ \bm x \in
		{X}$ with respect to $\rho_{X}$}\,.
	\end{equation*}
\end{assumption}
\noindent{\bf Remark:} This is a standard assumption in approximation analysis \cite{cucker2007learning,lin2017distributed,Rudi2017Generalization}.
{Here we remark that existence of $f_{\rho}$ implies a bounded hyper-sphere region is needed, e.g., the used radius $r$ in problem~\eqref{fzrs}.}
In fact, the existence of $f_{\rho}$ is not ensured if we consider a potentially infinite dimensional RKKS $\mathcal{H_K}$, possibly universal \cite{Steinwart2008SVM}.
Instead, in approximation analysis, the infinite dimensional RKKS is substituted by a finite one, i.e., $\mathcal{H}_{\mathcal{K}}^r = \{ f \in \mathcal{H_K}: \| f \| \leq r \}$ with $r$ fixed a priori, where the norm $\| f \|$ is defined in some associated Hilbert spaces, e.g., ${\mathcal{H_{\bar K}}}$ or using the non-negative inner product $\langle f, T f \rangle_{\mathcal{H_K}}$ by the empirical covariance operator.
In this case, a minimizer of risk $\mathcal{E}$ always exists but $r$ is fixed with a prior and $\mathcal{H}_{\mathcal{K}}^r$ cannot be universal.
As a result, assuming the existence of $f_{\rho}$ implies that $f_{\rho}$ belongs to a ball of radius $r_{\rho, \mathcal{H_K}}$.
So this is the reason why the spherical constraint is indeed taken into account in approximation analysis.

For a tighter bound, we need the following \emph{projection operator}.
\begin{definition}\label{proj}
	(projection operator \cite{Chen2004Support})
	For $B > 0$, the projection operator $\pi \coloneqq  \pi_{B}$ is defined on the space of measurable functions $f: X \rightarrow \mathbb{R}$ as
	\begin{equation*}\label{BBPdef}
	\pi_B(f)(\bm x)= \left\{
	\begin{array}{rcl}
	\begin{split}
	& B,  ~~\text{if}~~ f(\bm x) > B ; \\
	&-B, ~~\text{if}~~f(\bm x) < -B ; \\
	& f(\bm x), ~~\text{if}~~-B \leq f(\bm x) \leq B\,,
	\end{split}
	\end{array} \right.
	\end{equation*}
	and then the projection of $f$ is denoted as $\pi_B(f)(\bm x) = \pi_B(f(\bm x))$.
\end{definition}
The projection operator is beneficial to the $\| \cdot \|_{\infty}$-bounds for sharp estimation.
Besides, we consider the standard output assumption\footnote{For unbounded outputs, the moment hypothesis \cite{Wang2011Optimal} is suitable but the introduced hypothesis error in our analysis depends on the standard output assumption.} $|y| \leq M$, and then we have
$\mathcal{E}_{\bm z}\big(\pi_B(f_{\bm{z},\lambda})\big)  \leq \mathcal{E}_{\bm z}\big(f_{\bm{z},\lambda}\big)$.
So it is more accurate to estimate $f_{\rho}$ by $\pi_{M^*}(f_{\bm{z},\lambda})$ instead of $f_{\bm{z},\lambda}$.
Therefore, our approximation analysis attempts to bound the error $\| \pi_{M^*}(f_{\bm{z},\lambda})  - f_{\rho} \|^2_{L_{\rho_X}^{p^*}}$ in the space ${L_{\rho_X}^{p^*}}$ with some $p^*>0$, where $L^{p^*}_{\rho_{{X}}}$ is a weighted $L^{p^*}$-space with the norm $\|f\|_{L^{p^*}_{\rho_{{X}}}} = \Big( \int_{{X}} |f(\bm x)|^{p^*} \mathrm{d} \rho_{X}(\bm x) \Big)^{1/{p^*}} $.
Specifically, in our analysis, the excess error is exactly the distance in $L_{\rho_X}^{2}$ due to the strong convexity of the squared loss.

To derive the learning rates, we need to consider the approximation ability of $\mathcal{H_K}$ with respect to its capacity and $f_{\rho}$ in $L_{\rho_X}^{2}$.
Since the original regularizer $\langle f,f\rangle_{\mathcal{H_K}}$ in RKKS fails to quantify complexity of a hypothesis, here we use the empirical regularizer $\langle f, T f \rangle_{\mathcal{H_K}}$ in Definition~\ref{defemkernel} as an alternative. Note that, other RKHS regularizers, such as $\langle f,f \rangle_{{\mathcal{H_{\bar K}}}}$ in Definition~\ref{defassrkks}, is also acceptable, but the used $\langle f, Tf \rangle_{{\mathcal{H_K}}}$ will result in elegant and concise theoretical results.
Accordingly, the approximation ability of $\mathcal{H_K}$ can be characterised by the regularization error.
\begin{assumption}\label{assreg} (regularity condition)
	The regularization error of $\mathcal{H_K}$ is defined as
	\begin{equation}\label{Dlamdadef}
	D(\lambda)=\inf_{f \in \mathcal{H_K}} \Big\{ \mathcal{E}(f) - \mathcal{E}(f_{\rho}) + \lambda \langle f, Tf \rangle_{{\mathcal{H_K}}} \Big\} \,.
	\end{equation}
	The target function $f_{\rho}$ can be approximated by $\mathcal{H_K}$ with exponent $0 < \beta \leq 1$ if there exists a constant $C_0$ such that
	\begin{equation}\label{Dlambda}
	D(\lambda) \leq C_0\lambda^{\beta},~~\forall \lambda>0\,.
	\end{equation}
\end{assumption}
\noindent{\bf Remark:} This is a natural assumption and approximation theory requires it, e.g., \cite{Wu2006Learning,Wang2011Optimal,Steinwart2008SVM}.
Note that $\beta=1$ is the best choice as we expect, which is equivalent to $f_{\rho} \in \mathcal{H_K}$ when $\mathcal{H_K}$ is dense.

Furthermore, to quantitatively understand how the complexity of $\mathcal{H_K}$ affects the learning ability of algorithm~\eqref{fzrs}, we need the capacity (roughly speaking the ``size'') of $\mathcal{H_K}$ measured by covering numbers.
\begin{definition} (covering numbers \cite{cucker2007learning,shi2019sparse})
	For a subset $Q$ of $C(X)$ and $\epsilon> 0$, the \emph{covering number} $\mathscr{N}(Q, \epsilon)$ is the minimal
	integer $l \in \mathbb{N}$ such that there exist $l$ disks with radius $\epsilon$ covering $Q$.
\end{definition}
In this paper, the covering numbers of balls are defined by
\begin{equation}\label{BRradius}
\mathcal{B}_R = \{ f \in \mathcal{H_K}: \sqrt{\langle f, Tf \rangle_{{\mathcal{H_K}}}} \leq R \}\,,
\end{equation}
as subsets of $L^{\infty}(X)$.
{Note that the used $R$ in Eq.~\eqref{BRradius} and $r$ in problem~\eqref{fzrs} admits $R=Cr$ for some positive constant $C$, as the definition of such non-negative inner product leads to a hyper-sphere with different radius.
	Hence there is no difference of using $R$ or $r$ in our analysis and thus we directly use $R$ for convenience.}

\begin{assumption}\label{asscap} (capacity)
	We assume that for some $s>0$ and $C_s>0$ such that
	\begin{equation}\label{assumpN}
	\log \mathscr{N}(\mathcal{B}_1,\epsilon) \leq C_s \Big(\frac{1}{\epsilon}\Big)^s, \quad \forall \epsilon>0\,.
	\end{equation}
\end{assumption}
\noindent{\bf Remark:} This is a standard assumption to measure the capacity of $\mathcal{H_K}$ that follows with that of RKHS \cite{cucker2007learning,Wang2011Optimal,shi2019sparse}, 
When $X$ is bounded in $\mathbb{R}^d$ and $k \in C^{\tau}(X \times X)$, Eq.~\eqref{assumpN} always holds true with $s=\frac{2d}{\tau}$.
In particular, if $k \in C^{\infty}(X \times X)$, Eq.~\eqref{assumpN} is still valid for an arbitrary small $s>0$.

{It can be noticed that, the capacity of a RKHS can be also measured by eigenvalue decay of the PSD kernel matrix, which has been has been fully studied, e.g., \cite{bach2013sharp}. A small RKHS indicates a fast eigenvalue decay so as to obtain a promising prediction performance. In other words, functions in the RKHS are potentially smoother than what is necessary, which means an arbitrary small $s$ in Assumption~\ref{assumpN}.} Nevertheless, eigenvalue decay of the indefinite kernel matrix has not been studied before due to the extra negative eigenvalues.  
By virtue of eigenvalue decomposition $\bm K = \bm K_+ - \bm K_-$ with two PSD matrices $\bm K_{\pm}$, we can easily make the assumption for $\bm K $ based on the eigenvalue decay of $\bm K_{\pm}$.
\begin{assumption}\label{eigenassump}
	(eigenvalue assumption)
	Suppose that the indefinite kernel matrix $\bm K = \bm V \bm \Sigma \bm V^{\!\top}$ has $p$ positive eigenvalues, $q$ negative eigenvalues, and $m-p-q$ zero eigenvalues, i.e., $\bm \Sigma = \bm \Sigma_+ + \bm \Sigma_-$, where $\bm \Sigma_+ = \diag( \sigma_1, \sigma_2, \cdots \sigma_{p}, 0,\cdots, 0)$,
	$\bm \Sigma_- = \diag(0,\cdots,0, \sigma_{m-q+1}, \cdots, \sigma_{m})$ with the decreasing order $\sigma_1 \geq \cdots \geq \sigma_p > 0 > \sigma_{m-q+1} \geq \cdots \geq \sigma_{m}$ and $\sigma_{p+1} = \sigma_{p+2}  = \cdots = \sigma_{m-q} = 0$.
	Here we assume that its (positive) largest eigenvalue satisfies $\sigma_1 \geq c_1 m^{\eta_1}$ with $c_1>0$, $\eta_1 > 0$ and its smallest (negative) eigenvalue admits $\sigma_m \leq c_m m^{\eta_2}$ with $c_m<0$, $\eta_2 > 0$.
	And we denote $\eta \coloneqq  \min\{ \eta_1, \eta_2 \} $.
\end{assumption}
\noindent{\bf Remark:} {Our assumption only requires the lower bound of the largest (positive) eigenvalue and the upper bound of the smallest (negative) eigenvalue, which is weaker than the common decay of a PSD kernel matrix, e.g., polynomial/exponential decay.
	In particular, if we take these common eigenvalue decays of $\bm K_{\pm}$, then our assumption on $\sigma_1$ and $\sigma_m$ is naturally satisfied.}
To be specific, Bach \cite{bach2013sharp} considers three eigenvalue decays of a PSD kernel matrix, including i) the exponential decay $\sigma_i \propto m e^{-ci}$ with $c > 0$, ii) the polynomial decay $\sigma_i \propto m i^{-2t}$ with $t \geq 1$, and iii) the slowest decay with $\sigma_i \propto m /i$.
%In random matrix theory \cite{Tao2012Topics}, $\eta_1$ can be chosen as 0.5 by the Wigner semi-circle law.
Hence, under such three eigenvalue decays of $\bm K_{\pm}$, then our assumption on $\sigma_1 \geq c_1 m^{\eta_1}$ and  $\sigma_m \leq c_m m^{\eta_2}$ always holds.
%It can be found that, our condition just considers the lower bound of $\sigma_1$ and the upper bound of $\sigma_m$, which is a natural generalization of the above three decays of a PSD kernel matrix.
Specifically, although the number of positive/negative eigenvalues depends on the sample set, our theoretical results will be independent of the unknown $p$ and $q$.

Formally, our main result about least squares regularized regression in RKKS is stated as follows.
\begin{theorem}\label{theorem1ls}
	Suppose that $|f_{\rho}(x)|\leq M^*$ with $M^* \geq 1$ in Assumption~\ref{assrho}, $\rho$ satisfies the regularity condition in Eq.~\eqref{Dlambda} with $0 < \beta \leq 1$ in Assumption~\ref{assreg}, the indefinite kernel matrix $\bm K$ satisfies the eigenvalue assumption in Assumption~\ref{eigenassump} with $\eta = \min \{ \eta_1, \eta_2 \} > 0$.
	For some $s>0$ in Assumption~\ref{asscap}, take $\lambda\coloneqq m^{-\gamma}$ with $ 0 < \gamma \leq 1$.
	Let
	\begin{equation}\label{epst}
	0 < \epsilon < \frac{1}{s} - (\gamma+ s\gamma-1)(2+s)\,.
	\end{equation}
	Then for $0<\delta<1$ with confidence $1-\delta$,  when $\gamma + \eta > 1$,
	we have
	\begin{equation*}\label{result1ls}
	\begin{split}
	\big\| \pi_{M^*}(f_{\bm{z},\lambda})  - f_{\rho} \big\|^2_{L_{\rho_X}^{2}}  \leq  \widetilde{C}  \left(\log\frac{2}{\epsilon} \right)^{\!2} \log\frac{2}{\delta} m^{- \Theta } \,,
	\end{split}
	\end{equation*}
	where $\widetilde{C}$ is a constant independent of $m$ or $\delta$ and the power index $\Theta$ is
	\begin{equation}\label{rate1ls}
	\begin{split}
	\Theta & \!=\! \min \bigg\{ \gamma \beta, \gamma + \eta -1, \frac{2 - s \gamma (1 -\beta) }{2(1+s)}, \frac{2-s(1-\eta)}{2(1+s)}, \frac{1-s(\gamma+s\gamma-1)(2+s)-s\epsilon}{1+s} \bigg\}\,,
	\end{split}
	\end{equation}
	where $\eta$ is further restricted by $\max\{0, 1-2/s\} < \eta < 1$ for a positive $\Theta$, i.e., a valid learning rate.
\end{theorem}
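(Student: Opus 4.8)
The backbone is a \emph{modified} error decomposition that isolates a new \emph{hypothesis error} caused by the mismatch between the algorithmic regularizer $\langle\cdot,\cdot\rangle_{\mathcal{H_K}}$ in~\eqref{fzrs} (indefinite, hence useless for measuring complexity) and the non-negative surrogate $\langle\cdot,T\cdot\rangle_{\mathcal{H_K}}$ appearing in the regularity condition~\eqref{Dlamdadef} and the capacity condition~\eqref{assumpN}. I would first pick a data-free comparison function $f_\lambda$ realizing (up to arbitrarily small slack) the infimum defining $D(\lambda)$, and --- using that the square loss gives $\mathcal{E}(f)-\mathcal{E}(f_\rho)=\|f-f_\rho\|_{L^2_{\rho_X}}^2$ and that $\mathcal{E}_{\bm z}(\pi_{M^*}(f_{\bm z,\lambda}))\le\mathcal{E}_{\bm z}(f_{\bm z,\lambda})$ --- write
\begin{equation*}
\big\|\pi_{M^*}(f_{\bm z,\lambda})-f_\rho\big\|_{L^2_{\rho_X}}^2\le \mathcal{S}_1+\mathcal{S}_2+\mathcal{H}(\bm z,\lambda)+D(\lambda),
\end{equation*}
with the two sample errors $\mathcal{S}_1=\big[\mathcal{E}(\pi_{M^*}(f_{\bm z,\lambda}))-\mathcal{E}(f_\rho)\big]-\big[\mathcal{E}_{\bm z}(\pi_{M^*}(f_{\bm z,\lambda}))-\mathcal{E}_{\bm z}(f_\rho)\big]$ and $\mathcal{S}_2=\big[\mathcal{E}_{\bm z}(f_\lambda)-\mathcal{E}(f_\lambda)\big]-\big[\mathcal{E}_{\bm z}(f_\rho)-\mathcal{E}(f_\rho)\big]$, and the hypothesis error
\begin{equation*}
\mathcal{H}(\bm z,\lambda)=\lambda\big[\langle f_{\bm z,\lambda},Tf_{\bm z,\lambda}\rangle_{\mathcal{H_K}}-\langle f_{\bm z,\lambda},f_{\bm z,\lambda}\rangle_{\mathcal{H_K}}\big]+\lambda\big[\langle f_\lambda,f_\lambda\rangle_{\mathcal{H_K}}-\langle f_\lambda,Tf_\lambda\rangle_{\mathcal{H_K}}\big].
\end{equation*}
This comes out of using the optimality of $f_{\bm z,\lambda}$ over $\mathcal{B}(r)$ against $f_\lambda$ (which lies in $\mathcal{B}(r)$ for $r$ large enough, since $\langle f_\lambda,Tf_\lambda\rangle_{\mathcal{H_K}}\le D(\lambda)/\lambda$ and by Assumption~\ref{assrho}), then inserting and cancelling $\pm\lambda\langle\cdot,T\cdot\rangle_{\mathcal{H_K}}$ and discarding the non-positive term $-\lambda\langle f_{\bm z,\lambda},Tf_{\bm z,\lambda}\rangle_{\mathcal{H_K}}$.

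I would then bound the three pieces. The regularization error is immediate from Assumption~\ref{assreg}: $D(\lambda)\le C_0\lambda^\beta=C_0m^{-\gamma\beta}$, and along the way $\langle f_\lambda,Tf_\lambda\rangle_{\mathcal{H_K}}\le C_0\lambda^{\beta-1}$, so $f_\lambda$ sits in a ball of radius $R_\lambda\asymp m^{\gamma(1-\beta)/2}$ and, by~\eqref{fnormdiff}, $\|f_\lambda\|_\infty$ is controlled. For $\mathcal{S}_2$, which involves only the fixed function $f_\lambda$, a one-sided Bernstein inequality gives, with confidence $1-\delta/3$, $\mathcal{S}_2\le\tfrac12\big(\mathcal{E}(f_\lambda)-\mathcal{E}(f_\rho)\big)+Cm^{-1}\log(3/\delta)$, the first term being absorbed into $D(\lambda)$. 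For $\mathcal{S}_1$ I would apply a ratio-type uniform Bernstein inequality over the ball $\mathcal{B}_R$ containing $\pi_{M^*}(f_{\bm z,\lambda})$, using the covering-number estimate $\log\mathscr{N}(\mathcal{B}_R,\epsilon)=\log\mathscr{N}(\mathcal{B}_1,\epsilon/R)\le C_s(R/\epsilon)^s$ from Assumption~\ref{asscap} and the fact that the variance of the excess square loss is at most a constant multiple of its mean (as $|y|$ and $|\pi_{M^*}(\cdot)|$ are bounded); after the induced $\tfrac12$-absorption this yields, with confidence $1-\delta/3$, $\mathcal{S}_1\le\tfrac12\big\|\pi_{M^*}(f_{\bm z,\lambda})-f_\rho\big\|_{L^2_{\rho_X}}^2+C\,R^{s/(1+s)}m^{-1/(1+s)}\log(3/\delta)$ up to logarithmic factors. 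To use this I first need an a priori bound on $R=\sqrt{\langle f_{\bm z,\lambda},Tf_{\bm z,\lambda}\rangle_{\mathcal{H_K}}}$: the crude bound $R\le r$ from the constraint in~\eqref{fzrs} is sharpened by re-inserting $f_{\bm z,\lambda}$ into the objective, comparing with $f_\lambda$, and bootstrapping the resulting self-bounding inequality a constant number of times --- it is this bootstrap that introduces the free parameter $\epsilon$ of~\eqref{epst} and the $\big(\log\tfrac2\epsilon\big)^2$ factor.

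The core is the hypothesis error. Writing $f_{\bm z,\lambda}=\sum_i\alpha_i k(\bm x_i,\cdot)$ one has $\langle f_{\bm z,\lambda},f_{\bm z,\lambda}\rangle_{\mathcal{H_K}}=\bm\alpha^\top\bm K\bm\alpha$ and $\langle f_{\bm z,\lambda},Tf_{\bm z,\lambda}\rangle_{\mathcal{H_K}}=\tfrac1m\bm\alpha^\top\bm K^2\bm\alpha=\tfrac1m\sum_i(f_{\bm z,\lambda}(\bm x_i))^2$, so the first bracket of $\mathcal{H}(\bm z,\lambda)$ equals $\bm\alpha^\top\bm K\big(\tfrac1m\bm K-\bm I\big)\bm\alpha$. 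I would substitute the global minimizer of~\eqref{fzrs} from Section~\ref{sec:algo} --- on the active sphere it satisfies a trust-region stationarity equation of the form $\big((1+\mu)\bm K+\lambda m\bm I\big)\bm\alpha=\bm y$ with a Lagrange multiplier $\mu\ge0$ --- and diagonalize $\bm K=\bm V\bm\Sigma\bm V^\top=\bm K_+-\bm K_-$, so $\mathcal{H}(\bm z,\lambda)$ becomes a spectral sum in the filtered coordinates $\gamma_i\big/\big((1+\mu)\sigma_i+\lambda m\big)$ with $\bm\gamma=\bm V^\top\bm y$ and $\|\bm\gamma\|^2=\|\bm y\|^2\le mM^2$. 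The delicate point is that the relevant solution operator $\bm K\big((1+\mu)\bm K+\lambda m\bm I\big)^{-1}$, and its comparison with the positive-definite reference operator built from $\bm K_+$, are in general non-Hermitian and non-diagonalizable once $\bm K_+$ and $\bm K_-$ are not simultaneously diagonalizable; one therefore has to use matrix perturbation bounds for such operators (resolvent/Neumann-type estimates governed by the numerical range rather than the spectrum) together with the extreme-eigenvalue control of Assumption~\ref{eigenassump}, namely $\sigma_1\ge c_1m^{\eta_1}$ and $\sigma_m\le c_mm^{\eta_2}$, to conclude
\begin{equation*}
\mathcal{H}(\bm z,\lambda)\le C(M^*)^2\,\lambda\,m^{1-\eta}=C(M^*)^2\,m^{-(\gamma+\eta-1)}
\end{equation*}
up to logarithmic factors, the bound being the maximum of a contribution governed by $\eta_1$ (from the positive spectrum, entering through $1/\sigma_1$) and one governed by $\eta_2$ (from the negative spectrum, through $1/|\sigma_m|$), which is why only $\eta=\min\{\eta_1,\eta_2\}$, and neither $p$ nor $q$, appears. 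This is the step I expect to be the main obstacle: the RKHS theory has no hypothesis error at all, whereas here the indefinite algorithmic regularizer must be traded for $\langle\cdot,T\cdot\rangle_{\mathcal{H_K}}$ without letting the smallest-magnitude eigenvalue of $\bm K$ enter the bound, and it is precisely the spectral-filter form of the solution together with Assumption~\ref{eigenassump} that makes this possible --- and that forces the condition $\gamma+\eta>1$.

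Finally I would assemble everything: $D(\lambda)\asymp m^{-\gamma\beta}$, $\mathcal{H}(\bm z,\lambda)\asymp m^{-(\gamma+\eta-1)}$, $\mathcal{S}_2\asymp m^{-\gamma\beta}$ (after absorption, the $m^{-1}$ remainder being dominated), and $\mathcal{S}_1$ evaluated at the two relevant radii --- $R_\lambda\asymp m^{\gamma(1-\beta)/2}$ for the part carrying $f_\lambda$, which gives the exponent $\tfrac{2-s\gamma(1-\beta)}{2(1+s)}$; the bootstrapped radius of $f_{\bm z,\lambda}$, whose scale is set by the hypothesis-error order $m^{1-\eta}$, giving $\tfrac{2-s(1-\eta)}{2(1+s)}$; and the final bootstrap round, giving $\tfrac{1-s(\gamma+s\gamma-1)(2+s)-s\epsilon}{1+s}$. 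Taking $\lambda=m^{-\gamma}$ and the minimum of these five exponents yields $\Theta$ as in~\eqref{rate1ls}; the side conditions $\gamma+\eta>1$, $\max\{0,1-2/s\}<\eta<1$ and~\eqref{epst} are exactly what make every entry of the minimum strictly positive. A union bound over the three probabilistic events (each at confidence $1-\delta/3$) produces the $\log\tfrac2\delta$ factor, and carrying the infimum-slack and bootstrap logarithms along produces the $\big(\log\tfrac2\epsilon\big)^2$ factor, completing the proof.
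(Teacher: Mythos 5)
Your overall architecture coincides with the paper's: a modified error decomposition yielding a regularization error, two sample errors handled by a one-sided and a ratio-type Bernstein inequality under the covering-number condition, an iteration/bootstrap for the radius $R=\sqrt{\langle f_{\bm z,\lambda},Tf_{\bm z,\lambda}\rangle_{\mathcal{H_K}}}$ that produces the free parameter $\epsilon$ and the $\left(\log\frac{2}{\epsilon}\right)^2$ factor, and a hypothesis error of order $m^{-(\gamma+\eta-1)}$ controlled through the closed-form spherical solution, its multiplier, and perturbation theory for a non-Hermitian, non-diagonalizable matrix (the paper uses the Henrici generalization of Bauer--Fike, Lemma~\ref{henrici}). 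All five exponents you assemble agree with Eq.~\eqref{rate1ls}.

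The genuine gap is in your definition of the hypothesis error. Your $\mathcal{H}(\bm z,\lambda)$ contains the term $\lambda\langle f_\lambda,f_\lambda\rangle_{\mathcal{H_K}}$ with a positive sign, where $f_\lambda$ is the population regularized minimizer, and nothing in Assumptions~\ref{assrho}--\ref{eigenassump} controls this indefinite self-inner-product: the regularity condition~\eqref{Dlamdadef} only gives $\langle f_\lambda,Tf_\lambda\rangle_{\mathcal{H_K}}\le D(\lambda)/\lambda$, i.e.\ a bound on the empirical second moment $\frac1m\sum_j f_\lambda(\bm x_j)^2$, which says nothing about $\langle f_\lambda,f_\lambda\rangle_{\mathcal{H_K}}$ (bounded in absolute value only by the uncontrolled $\|f_\lambda\|_{\mathcal{H_{\bar K}}}^2$). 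The paper sidesteps this entirely by interposing the auxiliary \emph{empirical} minimizer $\widetilde{f_{\bm z,\lambda}}$ of problem~\eqref{lsprimalreg} (same sphere, nonnegative regularizer $\langle f,Tf\rangle_{\mathcal{H_K}}$) and defining $P(\bm z,\lambda)=\mathcal{E}_{\bm z}(f_{\bm z,\lambda})+\lambda\langle f_{\bm z,\lambda},Tf_{\bm z,\lambda}\rangle_{\mathcal{H_K}}-\mathcal{E}_{\bm z}(\widetilde{f_{\bm z,\lambda}})-\lambda\langle\widetilde{f_{\bm z,\lambda}},T\widetilde{f_{\bm z,\lambda}}\rangle_{\mathcal{H_K}}$; since both solutions sit on the sphere, the $T$-regularizers cancel and $P(\bm z,\lambda)=\frac2m\bm y^{\!\top}\bm K(\widetilde{\bm\alpha}_{\bm z,\lambda}-\bm\alpha_{\bm z,\lambda})$ becomes a difference of two explicit spectral sums, one with the exactly computable $\widetilde\mu=-\|\bm y\|_2/(m\sqrt m r)$ and one with the perturbed $\mu$, so the indefinite inner product never enters the hypothesis error at all. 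To repair your route you would either need an additional assumption bounding $\|f_\lambda\|_{\mathcal{H_{\bar K}}}$ (or the sign of $\langle f_\lambda,f_\lambda\rangle_{\mathcal{H_K}}$), or you should insert $\widetilde{f_{\bm z,\lambda}}$ as an intermediate comparator as the paper does. A smaller point: your comparison of $f_{\bm z,\lambda}$ against $f_\lambda$ requires $f_\lambda\in\mathcal{B}(r)$, yet $\mathcal{B}(r)$ consists of functions in the span of the data, which $f_\lambda$ generically is not; the paper's Proposition~\ref{errdec} relies on the same feasibility, so I only flag it rather than count it against you.
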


We hence directly have the following corollary that corresponds to learning rates in RKHS.

\begin{corollary}\label{corrkhs} (link to learning rates in RKHS)
	When $\eta \geq 1$, the power index $\Theta$ in Eq.~\eqref{rate1ls} can be simplified as
	\begin{equation}\label{rate1rkhs}
	\begin{split}
	\Theta & = \min \bigg\{ \gamma \beta, \frac{2 - s \gamma (1 - \beta) }{2(1+ s)}, \frac{1-(s\gamma(1+s)-s)(2+s)-s\epsilon}{1+s} \bigg\},
	\end{split}
	\end{equation}
	which is actually the learning rate for least squares regularized regression in RKHS, independent of $\eta$.
\end{corollary}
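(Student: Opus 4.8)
The plan is to prove the corollary by a purely elementary comparison inside the five-term minimum that defines $\Theta$ in Eq.~\eqref{rate1ls}: I claim that when $\eta \geq 1$ the two entries that depend on $\eta$, namely $\gamma + \eta - 1$ and $\frac{2 - s(1-\eta)}{2(1+s)}$, are each bounded below by an entry that is already present, and hence may be deleted from the $\min$ without changing its value. After that, only an algebraic rewriting of the last entry is needed to match Eq.~\eqref{rate1rkhs} verbatim. Throughout I would use only the standing hypotheses $0 < \beta \leq 1$, $0 < \gamma \leq 1$ of Theorem~\ref{theorem1ls} together with the new assumption $\eta \geq 1$.

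First, since $\eta \geq 1$ we have $\gamma + \eta - 1 \geq \gamma$, and since $\beta \leq 1$ we have $\gamma \geq \gamma\beta$; thus $\gamma + \eta - 1 \geq \gamma\beta$, and as $\gamma\beta$ is itself one of the five entries, removing $\gamma + \eta - 1$ leaves the minimum unchanged. Second, $\eta \geq 1$ gives $1 - \eta \leq 0 \leq \gamma(1-\beta)$, hence $-s(1-\eta) \geq -s\gamma(1-\beta)$ and therefore
\[
\frac{2 - s(1-\eta)}{2(1+s)} \;\ge\; \frac{2 - s\gamma(1-\beta)}{2(1+s)}\,,
\]
so this entry is dominated by one already in the list and can likewise be dropped. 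What survives is $\min\bigl\{\gamma\beta,\ \tfrac{2 - s\gamma(1-\beta)}{2(1+s)},\ \tfrac{1 - s(\gamma+s\gamma-1)(2+s) - s\epsilon}{1+s}\bigr\}$.

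It remains to recognise the third surviving entry: the identity $s(\gamma + s\gamma - 1) = s\gamma(1+s) - s$ turns it into $\frac{1 - (s\gamma(1+s) - s)(2+s) - s\epsilon}{1+s}$, which is exactly the last term in Eq.~\eqref{rate1rkhs}; none of the three surviving entries contains $\eta$, so $\Theta$ is independent of $\eta$ and equals the rate for least squares regularized regression in RKHS. I do not expect a genuine obstacle: every step is a one-line inequality, and the only point requiring care is the bookkeeping principle that an entry may be deleted from a $\min$ precisely when it is pointwise $\geq$ some retained entry — which is exactly what the two comparisons above supply. (Note that $\eta \geq 1$ and $\gamma > 0$ force $\gamma + \eta > 1$, so the relevant case of Theorem~\ref{theorem1ls} applies, and Eq.~\eqref{epst} is used only insofar as Theorem~\ref{theorem1ls} already presupposes it.) Conceptually, the simplification reflects the fact that the $\eta$-dependent terms in $\Theta$ stem solely from the hypothesis-error estimate built on the matrix-perturbation analysis of the negative part of $\bm K$, and the regime $\eta \geq 1$ makes that contribution cease to be the bottleneck, so the RKHS rate is recovered.
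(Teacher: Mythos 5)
Your proposal is correct and matches the paper's own route: in the appendix the authors obtain $\Theta=\min\{\gamma\beta,\ \gamma+\eta-1,\ (1-s\theta_{\epsilon})/(1+s)\}$ and, for $\eta\geq 1$, discard the $\eta$-dependent entries exactly as you do (noting $(1-\eta)/2\leq 0\leq \gamma(1-\beta)/2$ inside $\theta_{\epsilon}$ and $\gamma+\eta-1\geq\gamma\geq\gamma\beta$), with the same identity $s(\gamma+s\gamma-1)=s\gamma(1+s)-s$ reconciling the last term. No gaps.
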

\noindent{\bf Remark:} We provide learning rates in RKKS in Theorem~\ref{theorem1ls} and also demonstrate the relation of the derived learning rates between RKKS and RKHS in Corollary~\ref{corrkhs}.
We make the following remarks.\\
\textit{i})
In Theorem~\ref{theorem1ls}, our results choose $\lambda \coloneqq  m^{-\gamma}$ and the radius $R$ (or $r$) is implicit in Eq.~\eqref{rate1ls}.
The estimation for $R$ depends on a bound for $\lambda \langle f_{\bm{z},\lambda} ,Tf_{\bm{z},\lambda} \rangle_{\mathcal{H_K}}$, see Lemma~\ref{lemmafzr} for details.
Note that $s$ can be arbitrarily small when the kernel $k$ is $C^{\infty}(X \times X)$.
In this case, $\Theta$ in Eq.~\eqref{rate1ls} can be arbitrarily close to $\min ({\gamma \beta},  \gamma + \eta -1)$. \\
\textit{ii}) Corollary~\ref{corrkhs} derives the learning rates in RKHS, which recovers the result of \cite{Wang2011Optimal} for least squares in RKHS.
%ensuring $\mathcal{O}(m^{-\gamma \beta})$ expected risk rate when $s$ is small enough
That is, when choosing $\beta=1$ and $s$ is small enough, the derived learning rate in Corollary~\ref{corrkhs} can be arbitrary close to 1, and hence is optimal \cite{Wang2011Optimal}.\\
\textit{iii})
Based on Theorem~\ref{theorem1ls} and Corollary~\ref{corrkhs}, we find that if $\eta=\min \{  \eta_1, \eta_2 \} \geq 1$, our analysis for RKKS is the same as that in RKHS.
This is the best case.
However, if $\eta=\min \{  \eta_1, \eta_2 \} \leq 1$, the derived learning rate in RKKS demonstrated by Eq.~\eqref{rate1ls} is not faster than that in RKHS.
It is reasonable since the spanning space of RKKS is larger than that of RKHS.

The proof of Theorem~\ref{theorem1ls} is fairly technical and lengthy, and we briefly sketch some main ideas in the next section.

Furthermore, if problem~\eqref{fzrs} considers some nonnegative regularizers, such as $\langle f,Tf\rangle_{\mathcal{H_K}}$ in problem~\eqref{lsprimalreg}, or $\| f\|_{{\mathcal{H_{\bar K}}}}^2$ in Definition~\ref{defassrkks}, the analysis would be simplified due to the used nonnegative regularizer.
To be specific, denote
$\overline{f_{\bm{z},\lambda}} \coloneqq  \argmin_{f \in\mathcal{B} (r)} \left\{ \frac{1}{m} \sum_{i=1}^{m} \big(f(\bm x_i) - y_i \big)^2 + \lambda \| f\|_{{\mathcal{H_{\bar K}}}}^2 \right\}$ as demonstrated by \cite{oglic18a}, its learning rate could be given by the following corollary.
\begin{corollary}\label{corolr}
	Under the same assumption with Theorem~\ref{theorem1ls} (without the eigenvalue assumption), by defining the regularization error as
	\begin{equation*}
	D'(\lambda)=\inf_{f \in \mathcal{H_K}} \Big\{ \mathcal{E}(f) - \mathcal{E}(f_{\rho}) + \lambda \| f\|_{{\mathcal{H_{\bar K}}}}^2 \Big\}\,,
	\end{equation*}
	satisfying $  D'(\lambda) \leq C'_0\lambda^{\beta'}$ with a constant $C'_0$ and $\beta' \in (0,1]$, we have
	\begin{equation*}
	\big\| \pi_{M^*}(\overline{f_{\bm{z},\lambda}})  - f_{\rho} \big\|^2_{L_{\rho_X}^{2}}  \leq  \widetilde{C}'  \left(\log\frac{2}{\epsilon} \right)^{\!2} \log\frac{2}{\delta} m^{- \Theta' }\,,
	\end{equation*}
	where $\widetilde{C}'$ is a constant independent of $m$ or $\delta$ and the power index $\Theta'$ is defined as Eq.~\eqref{rate1rkhs} with $\beta'$.
\end{corollary}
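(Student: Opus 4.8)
The plan is to reuse the machinery behind Theorem~\ref{theorem1ls}, with the essential simplification that the penalty $\|f\|_{\mathcal{H_{\bar K}}}^2$ is already nonnegative and directly quantifies the complexity of a hypothesis; hence the classical error decomposition of \cite{cucker2007learning,lin2017distributed} applies directly and no hypothesis-error term is produced. First I would introduce the regularizing function
\[
f_\lambda := \argmin_{f\in\mathcal{H_K}}\Big\{ \mathcal{E}(f) - \mathcal{E}(f_\rho) + \lambda\|f\|_{\mathcal{H_{\bar K}}}^2 \Big\},
\]
so that $\mathcal{E}(f_\lambda) - \mathcal{E}(f_\rho) + \lambda\|f_\lambda\|_{\mathcal{H_{\bar K}}}^2 = D'(\lambda) \leq C_0'\lambda^{\beta'}$, and then decompose
\[
\mathcal{E}\big(\pi_{M^*}(\overline{f_{\bm z,\lambda}})\big) - \mathcal{E}(f_\rho) \leq \big\{ S_1(\bm z) + S_2(\bm z) \big\} + D'(\lambda),
\]
with $S_1(\bm z) = [\mathcal{E}(\pi_{M^*}(\overline{f_{\bm z,\lambda}})) - \mathcal{E}(f_\rho)] - [\mathcal{E}_{\bm z}(\pi_{M^*}(\overline{f_{\bm z,\lambda}})) - \mathcal{E}_{\bm z}(f_\rho)]$ and $S_2(\bm z) = [\mathcal{E}_{\bm z}(f_\lambda) - \mathcal{E}_{\bm z}(f_\rho)] - [\mathcal{E}(f_\lambda) - \mathcal{E}(f_\rho)]$; this uses $\mathcal{E}_{\bm z}(\pi_{M^*}(\overline{f_{\bm z,\lambda}})) \leq \mathcal{E}_{\bm z}(\overline{f_{\bm z,\lambda}}) \leq \mathcal{E}_{\bm z}(f_\lambda) + \lambda\|f_\lambda\|_{\mathcal{H_{\bar K}}}^2$, valid because $\overline{f_{\bm z,\lambda}}$ minimizes the penalized empirical risk over $\mathcal{B}(r)$ while a rescaling of $f_\lambda$ into $\mathcal{B}(r)$ is feasible for the prescribed $r = r(\lambda)$. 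Since the square loss is strongly convex, $\mathcal{E}(g) - \mathcal{E}(f_\rho) = \|g - f_\rho\|_{L^2_{\rho_X}}^2$, so it remains to bound $S_1$, $S_2$ and the radius.

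For the radius, comparing the penalized objective of $\overline{f_{\bm z,\lambda}}$ with $f=0$ gives the crude bound $\lambda\|\overline{f_{\bm z,\lambda}}\|_{\mathcal{H_{\bar K}}}^2 \leq \mathcal{E}_{\bm z}(0) \leq M^2$, and iterating this with the sample-error estimates — exactly as in Lemma~\ref{lemmafzr} — sharpens it to $\|\overline{f_{\bm z,\lambda}}\|_{\mathcal{H_{\bar K}}}^2 \lesssim \lambda^{-1}D'(\lambda) \lesssim \lambda^{\beta'-1}$ up to logarithmic and $\delta$-dependent factors; denote the resulting radius by $R$. By the reproducing property \eqref{fnormdiff}, $\langle f, Tf\rangle_{\mathcal{H_K}} = \tfrac1m\sum_i f(\bm x_i)^2 \leq \|f\|_\infty^2 \leq \kappa^2\|f\|_{\mathcal{H_{\bar K}}}^2$, so the $\mathcal{H_{\bar K}}$-ball of radius $R$ lies inside $\mathcal{B}_{\kappa R}$, and Assumption~\ref{asscap} gives $\log\mathscr{N}\big(\{f : \|f\|_{\mathcal{H_{\bar K}}}\leq R\}, \epsilon\big) \leq C_s(\kappa R/\epsilon)^s$, which is all the capacity information needed below.

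I would then estimate the sample error as in the RKHS case: $S_2$, involving the single fixed function $f_\lambda$, is controlled by a one-sided Bernstein inequality; $S_1$, involving the data-dependent $\pi_{M^*}(\overline{f_{\bm z,\lambda}}) \in \pi_{M^*}\big(\{f : \|f\|_{\mathcal{H_{\bar K}}}\leq R\}\big)$, is controlled by the ratio-type (relative-deviation) concentration inequality together with the covering-number bound just recorded, producing a term of order $R^{s/(1+s)} m^{-1/(1+s)}$ (modulo a $(\log(2/\epsilon))^2\log(2/\delta)$ factor) plus a remainder absorbed into $\tfrac12\|\pi_{M^*}(\overline{f_{\bm z,\lambda}}) - f_\rho\|_{L^2_{\rho_X}}^2$. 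Plugging in $\lambda = m^{-\gamma}$, $R^2 = O(\lambda^{\beta'-1})$ and $D'(\lambda) \le C_0'\,m^{-\gamma\beta'}$, and choosing $\epsilon$ as in \eqref{epst}, the power index becomes the minimum of $\gamma\beta'$, $\frac{2-s\gamma(1-\beta')}{2(1+s)}$ and $\frac{1-(s\gamma(1+s)-s)(2+s)-s\epsilon}{1+s}$, i.e. exactly $\Theta'$ from \eqref{rate1rkhs} with $\beta$ replaced by $\beta'$; the two $\eta$-dependent entries $\gamma+\eta-1$ and $\frac{2-s(1-\eta)}{2(1+s)}$ of \eqref{rate1ls} are simply absent because no matrix-perturbation step is invoked.

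The hard part will be bookkeeping rather than conceptual: one must track how the $\lambda$-dependent radius $R$ propagates through the covering-number and Bernstein estimates and verify, after balancing the exponents, that they coincide exactly with \eqref{rate1rkhs}; one also has to confirm that a rescaling of $f_\lambda$ is feasible in $\mathcal{B}(r)$ for the chosen $r = r(\lambda)$ so that the comparison $\mathcal{E}_{\bm z}(\overline{f_{\bm z,\lambda}}) \leq \mathcal{E}_{\bm z}(f_\lambda) + \lambda\|f_\lambda\|_{\mathcal{H_{\bar K}}}^2$ is legitimate. The genuinely difficult ingredient of Theorem~\ref{theorem1ls} — perturbation bounds for the non-Hermitian, non-diagonalizable matrix coming from the indefinite regularizer — is not needed, which is precisely why the eigenvalue assumption can be dropped.
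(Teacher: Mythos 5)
Your proposal matches the route the paper intends for this corollary: it gives no separate proof, but the intended argument is exactly to run the Theorem~\ref{theorem1ls} machinery with the hypothesis error $P(\bm z,\lambda)$ and the matrix-perturbation step deleted (the algorithmic regularizer now coincides with a nonnegative norm, so Proposition~\ref{errdec} reduces to the classical decomposition $D'+S_1+S_2$), followed by the same covering-number sample-error bounds and the iteration technique of Lemma~\ref{lemmafzr} for the radius, yielding \eqref{rate1rkhs} with $\beta'$. The one caveat you flag --- that the comparison $\mathcal{E}_{\bm z}(\overline{f_{\bm z,\lambda}}) + \lambda\|\overline{f_{\bm z,\lambda}}\|_{\mathcal{H_{\bar K}}}^2 \le \mathcal{E}_{\bm z}(f_\lambda)+\lambda\|f_\lambda\|_{\mathcal{H_{\bar K}}}^2$ needs $f_\lambda$ (or a surrogate) to be feasible for the constrained problem over $\mathcal{B}(r)$ --- is a gap the paper's own Proposition~\ref{errdec} shares, not one you introduce.
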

\noindent{\bf Remark:} Corollary~\ref{corolr} can be in fact applied to the model in \cite{oglic18a}. 
Note that the learning rates would be effected by different regularizers, as indicated by the regularization error in Assumption~\ref{assreg}.
In Table~\ref{Tabcomn} we summarize the learning rates of problem~\eqref{fzrs} with different non-negative regularizers.
Although the associated Hilbert space norms generated by different decomposition of the Krein space are topologically equivalent \cite{langer1962spektraltheoriej}, the derived learning rates cannot be ensured to be the same due to their respective spanning/solving spaces.
Besides, Oglic and G\"artner \cite{oglic2019scalable} demonstrate that, stabilization of support vector machine (SVM) in RKKS can be transformed to a risk minimization problem with a PSD kernel matrix by taking the absolute value of negative eigenvalues of the original indefinite one. 
That means, stabilization of SVM in RKKS could also achieve the same convergence behavior as risk minimization with a PSD kernel matrix in RKHS, e.g., \cite{steinwart2007fast}.
{Accordingly, we conclude that, risk minimization in RKKS with the original regularizer induced by the indefinite inner product is general, provide the worst case, and can be improved to the same learning rates as other settings, e.g., minimization in RKKS but with non-negative regularizers, least-squares in RKHS.} 

\begin{table}[t]
	\centering
	%\fontsize{8}{8}\selectfont
	\begin{threeparttable}
		\caption{Comparisons of different least squares regression problems.}
		\label{Tabcomn}
		\begin{tabular}{cccccccc}
			\toprule
			learning problem in RKKS  &learning rates  \cr
			\midrule
			$ f_{\bm{z},\lambda} \coloneqq  \argmin \limits_{f \in\mathcal{B} (r)}  \left\{ \mathcal{E}_{\bm z}(f)+ \lambda \langle f,f\rangle_{\mathcal{H_K}} \right\}$  &Eq.~\eqref{rate1ls} \\
			$ \overline{f_{\bm{z},\lambda}} \!\coloneqq  \argmin \limits_{f \in\mathcal{B} (r)} \left\{ \mathcal{E}_{\bm z}(f)+\! \lambda \langle f,f\rangle_{{\mathcal{H_{\bar K}}}} \right\}$ & Corollary~\ref{corolr} (applied to \cite{oglic18a}) \\ $ \widetilde{f_{\bm{z},\lambda}} \!\coloneqq \! \argmin \limits_{f \in\mathcal{B} (r)} \left\{ \mathcal{E}_{\bm z}(f)+\! \lambda \langle f,Tf\rangle_{\mathcal{H_K}} \!\right\}$   &Corollary~\ref{corolr} ($\beta$ is different)  \\
			\bottomrule
		\end{tabular}
	\end{threeparttable}
\end{table}

\section{Solution to Regularized Least-Squares in RKKS}
\label{sec:algo}

In this section, we study the optimization problem~\eqref{fzrs}, obtain a globally optimal solution, and provide another regularization scheme to aid our analysis.

%Here we attempt to solve problem~\eqref{fzrs} to obtain a globally optimal solution.
%We need the following representer theorem.
%\begin{theorem}\label{reprrkks}
%	Let $f^*$ be an optimal solution to problem~\eqref{fzrs}, then $f^*$ admits the expansion $f^*=\sum_{i=1}^m \alpha_i k(\bm x_i, \cdot)$ by the reproducing kernel $k$ with $\alpha_i \in \mathbb{R}$.
%\end{theorem}
%\begin{proof}
%	The proof is similar to Theorem~2 in \cite{oglic18a}, and thus we present it in Appendix~\ref{app:representer} just for completeness.
%\end{proof}
Problem \eqref{fzrs} can be formulated as
\begin{equation}\label{fzrsdual}
\bm \alpha_{\bm{z},\lambda}\! \coloneqq \! \argmin_{ \bm \alpha \in \mathbb{R}^m: \bm \alpha^{\!\top} \bm K^2 \bm \alpha \leq mr^2} \bigg\{ \frac{1}{m} \| \bm K \bm \alpha - \bm y \|_2^2 +  \lambda \bm \alpha^{\!\top} \bm K \bm \alpha \bigg\}\,,
\end{equation}
where the output is $\bm y=[y_1, y_2, \cdots, y_m]^{\!\top}$.
We can see that the above regularized risk minimization problem is in essence non-convex due to the non-positive definiteness of $\bm K$. But more exactly, problem~\eqref{fzrsdual} is non-convex when $\frac{1}{m} \bm K^2 + \lambda \bm K$ is indefinite. 
This condition always holds in practice due to $m \gg \lambda$.
Following \cite{oglic18a}, we do not strictly distinguish between the two differences in this paper.
This is because, approximation analysis considers the $m \rightarrow \infty$ and $\lambda \rightarrow 0$ case, so it always holds true when $m$ is large enough.
%Even if we consider a finite number $m$, suppose that the negative eigenvalue of $\bm K$ satisfies exponential decay $\sigma_i \propto -m e^{-ci}$ with $c > \gamma/m \ln m$, the polynomial decay $\sigma_i \propto -m i^{-2t}$ with $t \geq 1$, or the slowest decay with $\sigma_i \propto -m /i$, we can derive that$\frac{1}{m} \bm K^2 + \lambda \bm K$ is still indefinite. 
Even if $\frac{1}{m} \bm K^2 + \lambda \bm K$ is PSD, our analysis for problem~\eqref{fzrsdual} is still applicable and reduces to a special case (i.e., using a RKHS regularizer), of which the learning rates are demonstrated by Corollary~\ref{corolr}.

%We will explain this in Supplementary Material~\ref{sec:appsimu}.

To obtain a global minimum of problem \eqref{fzrsdual}, we need the following proposition.
\begin{proposition}\label{proineqeq}
	Problem~\eqref{fzrsdual} is equivalent to
	\begin{equation}\label{fzrsdualnew}
	\bm \alpha_{\bm{z},\lambda}\! \coloneqq \! \argmin_{\bm \alpha \in \mathbb{R}^m: \bm \alpha^{\!\top} \bm K^2 \bm \alpha = mr^2} \bigg\{ \frac{1}{m} \| \bm K \bm \alpha - \bm y \|_2^2 +  \lambda \bm \alpha^{\!\top} \bm K \bm \alpha \bigg\}.
	\end{equation}
	\vspace{-0.1cm}
\end{proposition}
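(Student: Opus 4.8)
The plan is to show that the inequality constraint in \eqref{fzrsdual} is active at any minimizer, so that the feasible set may be replaced by its boundary sphere without changing the optimal value or the optimal set. I would argue by contradiction: suppose $\bm\alpha^\star$ is a minimizer of \eqref{fzrsdual} with $(\bm\alpha^\star)^{\!\top}\bm K^2\bm\alpha^\star < mr^2$, i.e.\ $\bm\alpha^\star$ lies in the \emph{interior} of the feasible region. Then $\bm\alpha^\star$ is an unconstrained local minimizer of the objective $F(\bm\alpha)\coloneqq\frac1m\|\bm K\bm\alpha-\bm y\|_2^2+\lambda\bm\alpha^{\!\top}\bm K\bm\alpha$ on a neighborhood, so the first-order condition $\nabla F(\bm\alpha^\star)=\tfrac2m\bm K(\bm K\bm\alpha^\star-\bm y)+2\lambda\bm K\bm\alpha^\star=0$ holds, and moreover the Hessian $\tfrac2m\bm K^2+2\lambda\bm K$ must be positive semidefinite \emph{when restricted to directions that keep $F$ from decreasing}. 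But the standing assumption (made just before the proposition) is precisely that $\frac1m\bm K^2+\lambda\bm K$ is \emph{indefinite}; hence there is a direction $\bm v$ with $\bm v^{\!\top}(\tfrac1m\bm K^2+\lambda\bm K)\bm v<0$.

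The key step is then to exploit this negative-curvature direction. Moving from $\bm\alpha^\star$ along $\pm\bm v$ we get, by a second-order Taylor expansion (exact here, since $F$ is quadratic),
\begin{equation*}
F(\bm\alpha^\star+t\bm v)=F(\bm\alpha^\star)+t\,\nabla F(\bm\alpha^\star)^{\!\top}\bm v+t^2\,\bm v^{\!\top}\Bigl(\tfrac1m\bm K^2+\lambda\bm K\Bigr)\bm v\,.
\end{equation*}
If $\nabla F(\bm\alpha^\star)=0$ this is strictly less than $F(\bm\alpha^\star)$ for every $t\neq 0$; if $\nabla F(\bm\alpha^\star)\neq 0$ one chooses the sign of $t$ so that the linear term is nonpositive, and again the quadratic term drives $F$ strictly below $F(\bm\alpha^\star)$ for all small $t>0$. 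Since $\bm\alpha^\star$ is interior, $\bm\alpha^\star+t\bm v$ remains feasible for $|t|$ small, contradicting optimality of $\bm\alpha^\star$. Therefore every minimizer satisfies $(\bm\alpha^\star)^{\!\top}\bm K^2\bm\alpha^\star=mr^2$, which gives the inclusion ``minimizers of \eqref{fzrsdual} $\subseteq$ feasible set of \eqref{fzrsdualnew}''; the reverse inequality between optimal values is trivial because the feasible set of \eqref{fzrsdualnew} is contained in that of \eqref{fzrsdual}. Combining, the two problems have the same optimal value and the same set of optimal solutions, which is the asserted equivalence. Translating back through the correspondence $f=\sum_i\alpha_i k(\bm x_i,\cdot)$ (so that $\frac1m\sum_i f(\bm x_i)^2=\frac1m\bm\alpha^{\!\top}\bm K^2\bm\alpha$ and $\langle f,f\rangle_{\mathcal H_K}=\bm\alpha^{\!\top}\bm K\bm\alpha$) gives the statement for \eqref{fzrs} itself.

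The main obstacle is the boundary case where the objective is PSD rather than genuinely indefinite: if $\frac1m\bm K^2+\lambda\bm K\succeq 0$ the negative-curvature argument fails, and indeed the conclusion can be false (an interior minimizer may exist). I would handle this exactly as the text already signals: appeal to the standing convention that, for the regime of interest ($m$ large, $\lambda\to 0$), $\frac1m\bm K^2+\lambda\bm K$ is indefinite, so the proposition is stated under that hypothesis; and I would remark that in the complementary PSD case problem \eqref{fzrsdual} reduces to a convex program covered separately (Corollary~\ref{corolr}). A secondary, purely technical point is checking that the interior assumption really yields an \emph{unconstrained} stationarity/curvature condition and not merely a KKT condition with a nonzero multiplier — but this is immediate since an interior point of the feasible region has no active inequality constraint, so the multiplier vanishes.
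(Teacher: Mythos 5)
Your proof is correct, and it takes a genuinely different (and in fact tighter) route than the paper's. The paper argues globally: starting from an $\bm\alpha_0$ with $\bm\alpha_0^{\!\top}(\tfrac1m\bm K^2+\lambda\bm K)\bm\alpha_0<0$, it builds the scaled sequence $\bm\alpha_{i+1}=c\bm\alpha_i$ with $c>1$ and shows $F(\bm\alpha_t)\to-\infty$, concluding that the unconstrained problem is unbounded below and hence that the constrained solution must sit on the sphere. You instead argue locally at an arbitrary putative interior minimizer $\bm\alpha^\star$: since $F$ is an exact quadratic, the indefiniteness of $\tfrac1m\bm K^2+\lambda\bm K$ supplies a negative-curvature direction $\bm v$ along which $F(\bm\alpha^\star+t\bm v)<F(\bm\alpha^\star)$ for small $t$ of an appropriate sign, while feasibility is preserved because $\bm\alpha^\star$ is interior --- a contradiction. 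Your version buys something real: global unboundedness of the unconstrained infimum does not, by itself, preclude an interior point from being a \emph{constrained} minimizer (the divergent ray may exit the ball), so the paper's argument implicitly needs exactly the local step you make explicit. Both proofs share the same essential hypothesis (indefiniteness of $\tfrac1m\bm K^2+\lambda\bm K$), and you correctly flag that the conclusion can fail in the PSD regime, which the paper also sets aside by convention. The only technicality neither treatment addresses is that when $\bm K$ is singular the feasible set is unbounded along $\ker\bm K$; since $F$ is constant in those directions the issue is harmless, but a one-line remark restricting to the range of $\bm K$ would make the existence of a minimizer on the compact sphere airtight.
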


\begin{proof}
	Denote the objective function in problem~\eqref{fzrsdual} as $F(\bm \alpha) = \frac{1}{m} \| \bm K \bm \alpha - \bm y \|_2^2 +  \lambda \bm \alpha^{\!\top} \bm K \bm \alpha$, 
	we aim to prove that the solution $\bm \alpha^* \coloneqq \argmin_{\bm \alpha} F(\bm \alpha)$ of this unconstrained optimization problem would be unbounded.
	Due to the non-positive definiteness of $\frac{1}{m} \bm K^2 + \lambda \bm K$, there exists an initial solution $\bm \alpha_0$ such that
	\begin{equation*}
	\bm \alpha_0^{\!\top} \Big( \frac{1}{m} \bm K^2 + \lambda \bm K \Big) \bm \alpha_0 < 0\,.
	\end{equation*}
	By constructing a solving sequence $\{ \bm \alpha_i \}_{i=0}^{\infty}$ admitting $ \bm \alpha_{i+1} \coloneqq  c \bm \alpha_{i}$ with $c > 1$, we have
	\begin{equation*}
	{F}(c \bm \alpha_{i+1}) - c {F} (\bm \alpha_i) = c(c-1) \bm \alpha_i^{\!\top} \Big( \frac{1}{m} \bm K^2 + \lambda \bm K \Big) \bm \alpha_i - \frac{c-1}{m} \| \bm y \|_2^2 < 0 \,,
	\end{equation*}
	which indicates that, after the $t$-th iteration, ${F}(\bm \alpha_t) < c^t {F} (\bm \alpha_0) < 0$ and $\| \bm \alpha_t\|_2 = c^t \| \bm \alpha_0 \|_2 $ with $c>1$.
	Therefore, the minimum $F(\bm \alpha^*)$ is unbounded, and tends to negative infinity.
	In this case, $\| \bm \alpha^* \|_2$ would also approach to infinity, i.e., a meaningless solution.
	Based on the above analyses, for problem $\min_{\bm \alpha} F(\bm \alpha)$, by introducing the constraint $\bm \alpha^{\!\top} \bm K^2 \bm \alpha \leq mr^2$, its solution is obtained on the hyper-sphere, i.e., $\bm \alpha^{\!\top} \bm K^2 \bm \alpha = mr^2$, which concludes the proof.
\end{proof}

As demonstrated by Proposition~\ref{proineqeq}, the inequality constraint in problem~\eqref{fzrsdual} can be transformed into an equality constraint, which is also suitable to problem~\eqref{fzrs}.
%Albeit non-convex, problem~\eqref{fzrs} has the globally optimal solution with closed form.
%It can be found that, problem~\eqref{fzrsdualnew} is hidden convex \cite{Xia2016S} and strong duality holds. 
{Then, albeit non-convex, problem~\eqref{fzrsdualnew} can be formulated as solving a constrained eigenvalue problem \cite{Gander1988A,oglic18a}, yielding an optimal solution with closed-form.}\footnote{As a generalized trust-region subproblem, problem~\eqref{fzrsdualnew} can be also solved by the S-lemma with equality to yield a globally optimal solution \cite{Adachi2017Eigenvalue,Xia2016S}.}
%Here we solve this problem by \cite{Gander1988A}.
Accordingly, the optimal solution $\bm \alpha_{\bm z, \lambda}$ is given by
\begin{equation}\label{fzlambdasol}
\bm \alpha_{\bm z, \lambda} = \frac{1}{m}(\lambda \bm I - {\mu} \bm K)^{\dag} \bm y\,,
\end{equation}
where the notation $(\cdot)^{\dag}$ denotes the pseudo-inverse, $\bm I$ is the identity matrix,  and ${\mu}$ is the smallest real eigenvalue of the following non-Hermitian matrix
\begin{equation}\label{Incre}
{\bm G} = \left[                
\begin{array}{ccc} 
\lambda \bm K^{\dag} & -\bm I\\
-{\bm y \bm y^{\!\top}}/{m^3 r^2} & \lambda \bm K^{\dag} \\
\end{array}
\right]\,,
\end{equation}
where $\bm K^{\dag}$ is the pseudo-inverse of $\bm K$, i.e. $\bm K^{\dag} = \bm V \diag \big( \bm \Sigma_1, \bm 0_{m-p-q}, \bm \Sigma_2 \big) \bm V^{\!\top}$
with two invertible diagonal matrices
\begin{equation}\label{diagsimg}
\bm \Sigma_1 = \diag \Big( \frac{\lambda}{\sigma_1}, \dots, \frac{\lambda}{\sigma_p} \Big),~~ \bm \Sigma_2 = \diag \Big( \frac{\lambda}{\sigma_{m-q+1}}, \dots, \frac{\lambda}{\sigma_{m}} \Big).
\end{equation}
It is clear that we cannot directly calculate $\mu$.
However, $\mu$ is very important in our analysis and thus we attempt to estimate it based on matrix perturbation theory \cite{stewart1990matrix}.
We will detail this in Section~\ref{sec:prooframe}.

Besides, to aid our analysis, we introduce another nonnegative regularization scheme in RKKS to problem~\eqref{fzrs}
\begin{equation}\label{lsprimalreg}
\begin{split}
\widetilde{f_{\bm z, \lambda}} \!\coloneqq \mathop{\mathrm{argmin}}\limits_{f \in \mathcal{B}(r)}\left\{ \frac{1}{m} \sum_{i=1}^{m} \big( f(\bm x_i) - y_i \big)^2 \!+\! \lambda \langle f, Tf \rangle_{{\mathcal{H_K}}}  \right\},  \\
\end{split}
\end{equation}
where the empirical covariance operator $T$ is defined in RKKS but nonnegative, see Definition~\ref{defemkernel}.
%Note that this problem is still learned in RKKS with the indefinite reproducing kernel $k$ but the regularization term $\langle f, Tf \rangle_{{\mathcal{H_K}}}$ is nonnegative.
Based on the above regularized risk minimization problem, following Theorem~2 in \cite{oglic18a}, we can easily prove the following representer theorem with omitting the proof here.
\begin{theorem}\label{theorepre}
	Let $\widetilde{f_{\bm z, \lambda}}$ be an optimal solution to the regularized risk minimization problem~\eqref{lsprimalreg}, then it admits the expansion $\widetilde{f_{\bm z, \lambda}} = \sum_{i=1}^m \alpha_i k(\bm x_i, \cdot)$ by the reproducing kernel $k$ with $\alpha_i \in \mathbb{R}$.
\end{theorem}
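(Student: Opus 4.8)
The plan is to mimic the argument of Theorem~2 in \cite{oglic18a}, exploiting the fact that both the data-fitting term and the new regularizer $\langle f, Tf\rangle_{\mathcal{H_K}}$ depend on $f$ only through its values on the sample, so that an orthogonal-type decomposition of $f$ relative to $\operatorname{span}\{k(\bm x_1,\cdot),\dots,k(\bm x_m,\cdot)\}$ cannot increase the objective while staying feasible. First I would fix a minimizer $\widetilde{f_{\bm z,\lambda}}\in\mathcal{B}(r)$ and write $\mathcal{H_K}=V\oplus V^{\perp}$, where $V=\operatorname{span}\{k(\bm x_i,\cdot)\}_{i=1}^m$ and $\perp$ is taken with respect to the \emph{positive} inner product $\langle\cdot,\cdot\rangle_{\mathcal{H_{\bar K}}}$ of the associated RKHS (Definition~\ref{defassrkks}), which is a genuine Hilbert-space inner product so the decomposition is well-defined. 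Write $\widetilde{f_{\bm z,\lambda}}=f_{\parallel}+f_{\perp}$ accordingly. By the reproducing property in RKKS, $\langle f_{\perp},k(\bm x_i,\cdot)\rangle_{\mathcal{H_K}}=f_{\perp}(\bm x_i)$, and one checks that $f_{\perp}$ being $\mathcal{H_{\bar K}}$-orthogonal to every $k(\bm x_i,\cdot)$ forces $f_{\perp}(\bm x_i)=0$ for all $i$ (this uses $\langle f,g\rangle_{\mathcal{H_K}}$ and $\langle f,g\rangle_{\mathcal{H_{\bar K}}}$ agreeing up to the sign flip on the $\mathcal{H}_-$ component, together with $k(\bm x_i,\cdot)\in V$). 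Hence $f_{\parallel}$ and $\widetilde{f_{\bm z,\lambda}}$ have identical values on the sample, so the empirical term $\frac1m\sum_i(f(\bm x_i)-y_i)^2$ is unchanged and the constraint $\frac1m\sum_i f(\bm x_i)^2\le r^2$ remains satisfied by $f_{\parallel}$.

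The key remaining step is to show the regularizer does not increase, i.e.\ $\langle f_{\parallel},Tf_{\parallel}\rangle_{\mathcal{H_K}}\le\langle \widetilde{f_{\bm z,\lambda}},T\widetilde{f_{\bm z,\lambda}}\rangle_{\mathcal{H_K}}$. Here I would use $T=\frac1m\bm\Psi\bm\Psi^{*}$ with $\psi(\bm x)=k(\bm x,\cdot)$ (Definition~\ref{defemkernel}), so that $\langle f,Tf\rangle_{\mathcal{H_K}}=\frac1m\sum_{i=1}^m\langle f,k(\bm x_i,\cdot)\rangle_{\mathcal{H_K}}^2=\frac1m\sum_{i=1}^m f(\bm x_i)^2$ by the reproducing property. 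But this quantity depends only on the sample values, which we have already shown coincide for $f_{\parallel}$ and $\widetilde{f_{\bm z,\lambda}}$; therefore the regularizer is in fact \emph{equal}, not merely smaller. Consequently $f_{\parallel}$ achieves the same objective value and lies in $\mathcal{B}(r)\cap V$, so we may take the minimizer to be of the form $\sum_{i=1}^m\alpha_i k(\bm x_i,\cdot)$ with $\alpha_i\in\mathbb{R}$, which is the claimed expansion.

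The main obstacle is the bookkeeping around which inner product governs the orthogonal decomposition: because $\langle\cdot,\cdot\rangle_{\mathcal{H_K}}$ is indefinite it does not directly support an orthogonal projection onto $V$, so one must pass to $\langle\cdot,\cdot\rangle_{\mathcal{H_{\bar K}}}$ (or equivalently handle $\mathcal{H}_+$ and $\mathcal{H}_-$ separately) and then carefully verify that $\mathcal{H_{\bar K}}$-orthogonality to $V$ still implies vanishing sample values in the Kre\u{\i}n reproducing sense. Once that compatibility is established, the fact that both terms of the objective are functions of the sample values alone makes the rest immediate; this is exactly the structural simplification afforded by the choice of the empirical regularizer $\langle f,Tf\rangle_{\mathcal{H_K}}$ over the original $\langle f,f\rangle_{\mathcal{H_K}}$, and it is why the proof can be omitted as a direct adaptation of \cite[Theorem~2]{oglic18a}.
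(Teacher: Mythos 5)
Your proposal proves more than the paper asserts, and the extra work is where it goes wrong. In this paper the feasible set of problem~\eqref{lsprimalreg} is $\mathcal{B}(r)$, which is \emph{defined} as a subset of $\operatorname{span}\{k(\bm x_1,\cdot),\dots,k(\bm x_m,\cdot)\}$; any optimal solution therefore lies in that span by construction, and the expansion $\widetilde{f_{\bm z,\lambda}}=\sum_{i=1}^m\alpha_i k(\bm x_i,\cdot)$ is immediate from the problem formulation. This is precisely why the paper states the result ``following Theorem~2 in \cite{oglic18a}'' and omits the proof. Your observation that $\langle f,Tf\rangle_{\mathcal{H_K}}=\frac{1}{m}\sum_{i=1}^m f(\bm x_i)^2$, so that the objective and the constraint depend on $f$ only through its sample values, is correct and is the right structural remark, but it is not needed to obtain the stated conclusion; it would only be needed for the stronger claim that optimizing over all of $\mathcal{H_K}$ loses nothing.

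For that stronger claim, the orthogonality step in your projection argument fails. Write $f_\perp=(f_\perp)_++(f_\perp)_-$ and note that the decomposition of $k(\bm x_i,\cdot)$ in $\mathcal{H}_+\oplus\mathcal{H}_-$ has components $k_+(\bm x_i,\cdot)$ and $-k_-(\bm x_i,\cdot)$. Then $\mathcal{H_{\bar K}}$-orthogonality of $f_\perp$ to $k(\bm x_i,\cdot)$ gives $0=\langle f_\perp,k(\bm x_i,\cdot)\rangle_{\mathcal{H_{\bar K}}}=(f_\perp)_+(\bm x_i)-(f_\perp)_-(\bm x_i)$, whereas the Kre\u{\i}n reproducing property gives $f_\perp(\bm x_i)=\langle f_\perp,k(\bm x_i,\cdot)\rangle_{\mathcal{H_K}}=(f_\perp)_+(\bm x_i)+(f_\perp)_-(\bm x_i)$. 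The sign flip on the $\mathcal{H}_-$ component, which you invoke as the reason the two pairings are compatible, is exactly what prevents $\mathcal{H_{\bar K}}$-orthogonality to $V$ from forcing $f_\perp(\bm x_i)=0$. To repair this you would have to take the orthogonal complement of $V$ with respect to the indefinite inner product itself (which requires $V$ to be a non-degenerate subspace of the Kre\u{\i}n space), or else project $f_+$ and $f_-$ separately inside $\mathcal{H}_+$ and $\mathcal{H}_-$; the latter only places the minimizer in $\operatorname{span}\{k_+(\bm x_i,\cdot)\}+\operatorname{span}\{k_-(\bm x_i,\cdot)\}$, which is in general strictly larger than $\operatorname{span}\{k(\bm x_i,\cdot)\}$, so the claimed expansion in terms of $k$ alone would still not follow without the a priori restriction to $\mathcal{B}(r)$.
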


By virtue of Theorem~\ref{theorepre} and Eq.~\eqref{emkernel}, the regularizer can be represented as
\begin{equation*}
\begin{split}
\langle f, Tf \rangle_{{\mathcal{H_K}}} &= \frac{1}{m} \sum_{i,i'=1}^{m} \alpha_i \alpha_{i'} \sum_{j=1}^{m} k(\bm x_i, \bm x_j) k(\bm x_{i'}, \bm x_j) = \frac{1}{m} \bm \alpha^{\!\top} \bm K^2 \bm \alpha.
\end{split}
\end{equation*}
Accordingly, problem~\eqref{lsprimalreg} can be formulated as
\begin{equation}\label{fzrsreg}
\widetilde{\bm \alpha_{\bm z, \lambda}} \coloneqq \mathop{\mathrm{argmin}}\limits_{\bm \alpha \in \mathbb{R}^m: \bm \alpha^{\!\top} \bm K^2 \bm \alpha = mr^2} \left\{ \frac{1}{m} \| \bm K \bm \alpha - \bm y \|_2^2 + \frac{\lambda}{m} \bm \alpha^{\!\top} \bm K^2 \bm \alpha \right\}\,,
\end{equation}
with $\widetilde{\bm \alpha_{\bm z, \lambda}} = -\frac{1}{m \widetilde{\mu}} \bm K^{\dag} \bm y$, and $\widetilde{\mu}$ is the smallest real eigenvalue of the matrix
\begin{equation}\label{k2al}
\widetilde{\bm G} = \left[               
\begin{array}{ccc}  
\bm 0_{m} & -\bm I\\
-\bm y \bm y^{\!\top}/m^3 r^2 & \bm 0_{m} \\
\end{array}
\right]\,.
\end{equation}
By Sylvester's determinant identity, we directly calculate the largest and smallest real eigenvalues of $\widetilde{\bm G}$ as $\frac{\| \bm y \|_2}{\sqrt{m}mr}$ and $-\frac{\| \bm y \|_2}{\sqrt{m}mr}$, respectively.
So we have $\widetilde{\mu} = -\frac{\| \bm y \|_2}{m\sqrt{m}r} < 0$.
Note that the regularizer in problem~\eqref{lsprimalreg} can be also chosen to be other RKHS regularizers, such as $\langle f,f \rangle_{{\mathcal{H_{\bar K}}}}$ in Definition~\ref{defassrkks}.
But using the empirical kernel regularizer $\langle f, Tf \rangle_{{\mathcal{H_K}}}$, one obtains elegant and concise theoretical results, i.e., directly compute $\widetilde{\mu}$.

\section{Framework of proofs}
\label{sec:prooframe}
\vspace{-0.1cm}
In this section, we establish the framework of proofs for Theorem~\ref{theorem1ls}.
By the modified error decomposition technique in section~\ref{sec:ed}, the total error can be decomposed into the regularization
error, the sample error, and an additional hypothesis error.
We detail the estimates for the hypothesis error in section \ref{sec:hypoerr}.
These two points are the main elements on novelty in the proof.
We briefly introduce estimates for the sample error in section~\ref{sec:se} and derive the learning rates in section~\ref{sec:learn}.

\subsection{Error Decomposition}
\label{sec:ed}
In order to estimate error $\| \pi_{M^*}(f_{\bm{z},\lambda})  - f_{\rho} \|$ in the $L_{\rho_X}^{2}$ space, i.e., to bound $\| \pi_{B}(f_{\bm{z},\lambda})  - f_{\rho} \|$ for any $B \geq M^*$, we need to estimate the excess error $\mathcal{E}\big(\pi_B(f_{\bm{z},\lambda})\big) - \mathcal{E}(f_{\rho})$ which can be conducted by an error decomposition technique \cite{cucker2007learning}.
However, since $\langle f_{\bm{z},\lambda},f_{\bm{z},\lambda} \rangle_{\mathcal{H_K}}$ might be negative, traditional techniques are invalid.
Formally, our modified error decomposition technique is given by the following proposition by introducing an additional hypothesis error.
\begin{proposition}\label{errdec}
	Let
	$f_\lambda = \argmin_{f \in \mathcal{H_K}} \Big\{ \mathcal{E}(f) - \mathcal{E}(f_{\rho}) + \lambda \langle f,Tf\rangle_{\mathcal{H_K}}  \Big\}$, 
	then $\mathcal{E}\big(\pi_B(f_{\bm{z},\lambda})\big) - \mathcal{E}(f_{\rho})$ can be bounded by
	\begin{equation*}
	\begin{split}
	\mathcal{E}\!\big(\pi_B(f_{\bm{z},\lambda})\big) - \mathcal{E}(f_{\rho})
	&  \leq  \mathcal{E} \big(\pi_B(f_{\bm{z},\lambda})\big) - \mathcal{E}(f_{\rho}) + \lambda \langle f_{\bm{z},\lambda} , T f_{\bm{z},\lambda} \rangle_{\!\mathcal{H_K}} \\
	& \leq D(\lambda) + S(\bm z, \lambda) + P(\bm z, \lambda) \,,
	\end{split}
	\end{equation*}
	where $D(\lambda)$ is the regularization error defined by Eq.~\eqref{Dlamdadef}.
	The sample error $S(\bm z, \lambda)$ is given by
	\begin{equation*}
	S(\bm z, \lambda)  = \mathcal{E}\big(\pi_B(f_{\bm{z},\lambda})\big) -  \mathcal{E}_{\bm{z}}\big(\pi_B(f_{\bm{z},\lambda})\big) + \mathcal{E}_{\bm{z}}\big(f_{\lambda}\big) - \mathcal{E}\big(f_{\lambda}\big)\,.
	\end{equation*}
	The introduced hypothesis error $P(\bm z, \lambda)$ is defined by
	\begin{equation}\label{hypoformu}
	\begin{split}
	P(\bm z, \lambda) &=\! {\mathcal{E}_{\bm{z}}\big(f_{\bm{z},\lambda}\big)} + {\lambda \langle f_{\bm{z},\lambda} ,Tf_{\bm{z},\lambda} \rangle_{\mathcal{H_K}} } 
	- {\mathcal{E}_{\bm{z}}\big(\widetilde{f_{\bm{z},\lambda}}\big)} - {\lambda \langle \widetilde{f_{\bm{z},\lambda}}, T\widetilde{f_{\bm{z},\lambda}} \rangle_{\mathcal{H_K}} } \,,
	\end{split}
	\end{equation}
	where $f_{\bm z, \lambda}$ and $\widetilde{f_{\bm z, \lambda}}$ are optimal solutions of problem~\eqref{fzrs} and problem~\eqref{lsprimalreg}, respectively.
\end{proposition}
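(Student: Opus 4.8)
\emph{Proof sketch.} The plan is to prove both displayed inequalities by telescoping, i.e.\ by inserting a chain of intermediate quantities and controlling each consecutive difference by an elementary step. The first inequality is immediate from Definition~\ref{defemkernel}: the empirical covariance operator satisfies $\langle\zeta,T\zeta\rangle_{\mathcal{H_K}}\ge 0$, hence $\lambda\langle f_{\bm{z},\lambda},Tf_{\bm{z},\lambda}\rangle_{\mathcal{H_K}}\ge 0$, and adding this nonnegative quantity to $\mathcal{E}(\pi_B(f_{\bm{z},\lambda}))-\mathcal{E}(f_{\rho})$ can only enlarge the bound. This is the step that trades the possibly-negative Kre\u{\i}n regularizer $\langle f,f\rangle_{\mathcal{H_K}}$ for the nonnegative surrogate $\langle f,Tf\rangle_{\mathcal{H_K}}$, at the price of the hypothesis error $P(\bm z,\lambda)$.

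For the second inequality I would first record two auxiliary facts. \emph{(a)} For every $f\in\mathcal{H_K}$, $\langle f,Tf\rangle_{\mathcal{H_K}}=\frac{1}{m}\sum_{j=1}^{m}f(\bm x_j)^2$, which follows in one line from $T=\frac1m\bm\Psi\bm\Psi^{*}$, the choice $\psi(\bm x)=k(\bm x,\cdot)$, and the reproducing property; in particular the surrogate regularizer, like $\mathcal{E}_{\bm z}$, depends on $f$ only through its values on the sample. \emph{(b)} If $B\ge M$, then $|\pi_B(t)-y|\le|t-y|$ whenever $|y|\le B$, so $\mathcal{E}_{\bm z}(\pi_B(f_{\bm{z},\lambda}))\le\mathcal{E}_{\bm z}(f_{\bm{z},\lambda})$. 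With $f_\lambda$ as in the statement, I telescope
\begin{align*}
& \mathcal{E}\big(\pi_B(f_{\bm{z},\lambda})\big)-\mathcal{E}(f_{\rho})+\lambda\langle f_{\bm{z},\lambda},Tf_{\bm{z},\lambda}\rangle_{\mathcal{H_K}}\\
&\quad=\big[\mathcal{E}(\pi_B(f_{\bm{z},\lambda}))-\mathcal{E}_{\bm{z}}(\pi_B(f_{\bm{z},\lambda}))\big]\\
&\qquad+\big[\mathcal{E}_{\bm{z}}(\pi_B(f_{\bm{z},\lambda}))+\lambda\langle f_{\bm{z},\lambda},Tf_{\bm{z},\lambda}\rangle_{\mathcal{H_K}}-\mathcal{E}_{\bm{z}}(\widetilde{f_{\bm{z},\lambda}})-\lambda\langle\widetilde{f_{\bm{z},\lambda}},T\widetilde{f_{\bm{z},\lambda}}\rangle_{\mathcal{H_K}}\big]\\
&\qquad+\big[\mathcal{E}_{\bm{z}}(\widetilde{f_{\bm{z},\lambda}})+\lambda\langle\widetilde{f_{\bm{z},\lambda}},T\widetilde{f_{\bm{z},\lambda}}\rangle_{\mathcal{H_K}}-\mathcal{E}_{\bm{z}}(f_{\lambda})-\lambda\langle f_{\lambda},Tf_{\lambda}\rangle_{\mathcal{H_K}}\big]\\
&\qquad+\big[\mathcal{E}_{\bm{z}}(f_{\lambda})-\mathcal{E}(f_{\lambda})\big]+\big[\mathcal{E}(f_{\lambda})-\mathcal{E}(f_{\rho})+\lambda\langle f_{\lambda},Tf_{\lambda}\rangle_{\mathcal{H_K}}\big].
\end{align*}
Using \emph{(b)} to replace $\mathcal{E}_{\bm z}(\pi_B(f_{\bm{z},\lambda}))$ by $\mathcal{E}_{\bm z}(f_{\bm{z},\lambda})$ inside the second bracket upgrades it to an upper bound equal to $P(\bm z,\lambda)$ of \eqref{hypoformu}; the last bracket is precisely $D(\lambda)$ of \eqref{Dlamdadef}; and the first bracket together with the fourth one is precisely the sample error $S(\bm z,\lambda)$. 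So the whole proposition reduces to showing that the third bracket is $\le 0$.

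That third bracket is the \textbf{main obstacle}: $\widetilde{f_{\bm{z},\lambda}}$ minimizes $\mathcal{E}_{\bm z}(\cdot)+\lambda\langle\cdot,T\cdot\rangle_{\mathcal{H_K}}$ only over the hyper-sphere $\mathcal{B}(r)$, while $f_\lambda$ ranges over all of $\mathcal{H_K}$, so optimality of $\widetilde{f_{\bm{z},\lambda}}$ does not apply without an intermediate feasibility argument --- this mismatch is exactly what forced $\widetilde{f_{\bm{z},\lambda}}$ into the picture. I would resolve it thus. Since $f_\rho$ is a competitor in the infimum defining $D(\lambda)$ and $\mathcal{E}(f_\lambda)\ge\mathcal{E}(f_\rho)$, one gets $\lambda\langle f_\lambda,Tf_\lambda\rangle_{\mathcal{H_K}}\le D(\lambda)\le\lambda\langle f_\rho,Tf_\rho\rangle_{\mathcal{H_K}}$, hence by fact \emph{(a)} and Assumption~\ref{assrho}, $\langle f_\lambda,Tf_\lambda\rangle_{\mathcal{H_K}}\le\frac1m\sum_{j}f_\rho(\bm x_j)^2\le (M^*)^2$. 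Therefore, as soon as the radius satisfies $r\ge M^*$ --- which is precisely the reason, flagged in the remark after Assumption~\ref{assrho}, that the hyper-sphere enters the analysis --- $f_\lambda$ lies in the admissible ball, and by the representer theorem (Theorem~\ref{theorepre}) the minimizer of $\mathcal{E}_{\bm z}(\cdot)+\lambda\langle\cdot,T\cdot\rangle_{\mathcal{H_K}}$ over that ball is attained inside $\operatorname{span}\{k(\bm x_i,\cdot)\}$, i.e.\ coincides with $\widetilde{f_{\bm{z},\lambda}}$; optimality then yields $\mathcal{E}_{\bm z}(\widetilde{f_{\bm{z},\lambda}})+\lambda\langle\widetilde{f_{\bm{z},\lambda}},T\widetilde{f_{\bm{z},\lambda}}\rangle_{\mathcal{H_K}}\le\mathcal{E}_{\bm z}(f_\lambda)+\lambda\langle f_\lambda,Tf_\lambda\rangle_{\mathcal{H_K}}$, i.e.\ the third bracket is $\le 0$. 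Once facts \emph{(a)} and \emph{(b)}, the nonnegativity of $T$, and this feasibility-of-$f_\lambda$ step are in place, the remaining manipulations are pure bookkeeping; all the quantitative work is deferred to the later bounds on $S(\bm z,\lambda)$, $P(\bm z,\lambda)$, and $D(\lambda)$.
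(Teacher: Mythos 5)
Your proof is correct and follows essentially the same telescoping decomposition as the paper's own proof; the paper merely merges your second and third brackets into a single term $\big\{\mathcal{E}_{\bm z}(\pi_B(f_{\bm z,\lambda}))+\lambda\langle f_{\bm z,\lambda},Tf_{\bm z,\lambda}\rangle_{\mathcal{H_K}}\big\}-\big\{\mathcal{E}_{\bm z}(f_\lambda)+\lambda\langle f_\lambda,Tf_\lambda\rangle_{\mathcal{H_K}}\big\}$ and bounds it by $P(\bm z,\lambda)$ using the same two ingredients (your facts \emph{(a)}--\emph{(b)} and the optimality of $\widetilde{f_{\bm z,\lambda}}$). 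The one place you go beyond the paper is worth keeping: the paper's justification is a one-line appeal to $\widetilde{f_{\bm z,\lambda}}$ being a global minimizer of problem~\eqref{lsprimalreg}, which silently assumes that the \emph{unconstrained} minimizer $f_\lambda$ is admissible for that \emph{constrained} problem, and your feasibility step --- $\lambda\langle f_\lambda,Tf_\lambda\rangle_{\mathcal{H_K}}\le D(\lambda)\le\lambda\langle f_\rho,Tf_\rho\rangle_{\mathcal{H_K}}\le\lambda (M^*)^2$, so $f_\lambda$ lies in the admissible ball once $r\ge M^*$ --- supplies exactly the argument the paper omits.
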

\begin{proof}
	We write $\mathcal{E}\big(\pi_B(f_{\bm{z},\lambda})\big) - \mathcal{E}(f_{\rho}) +\lambda \langle f_{\bm{z},\lambda} ,Tf_{\bm{z},\lambda} \rangle_{\mathcal{H_K}}$ as
	\begin{equation*}
	\begin{split}
	& \mathcal{E}\big(\pi_B(f_{\bm{z},\lambda})\big) - \mathcal{E}(f_{\rho}) +\lambda \langle f_{\bm{z},\lambda} ,Tf_{\bm{z},\lambda} \rangle_{\mathcal{H_K}} 
	=  \Big\{ \mathcal{E}\big(\pi_B(f_{\bm{z},\lambda})\big) - \mathcal{E}_{\bm{z}}\big(\pi_B(f_{\bm{z},\lambda})\big) \Big\} \\
	& \quad + \Big\{ \mathcal{E}_{\bm{z}}\big(\pi_B(f_{\bm{z},\lambda})\big) +{\lambda \langle f_{\bm{z},\lambda} ,Tf_{\bm{z},\lambda} \rangle_{\mathcal{H_K}} }  \Big\}  - \Big\{ \mathcal{E}_{\bm{z}}(f_{\lambda}) + \lambda  \langle f_{\lambda} ,Tf_{\lambda} \rangle_{\mathcal{H_K}}  \Big\}  + \Big\{ \mathcal{E}_{\bm{z}}(f_{\lambda})  - \mathcal{E}(f_{\lambda}) \Big\} \\
	& \quad +  \Big\{ \mathcal{E}(f_{\lambda}) -  \mathcal{E}(f_{\rho}) + \lambda \langle f_{\lambda} ,Tf_{\lambda} \rangle_{\mathcal{H_K}} \Big\} \\
	& \leq  D(\lambda) + P(\bm z, \lambda)  + S(\bm z, \lambda)  \,,
	\end{split}
	\end{equation*}
	where we use $\mathcal{E}_{\bm z}\big(\pi_B(f_{\bm{z},\lambda})\big)  \leq \mathcal{E}_{\bm z}\big(f_{\bm{z},\lambda}\big)$ in the first inequality, and the second inequality holds by the condition that $\widetilde{f_{\bm{z},\lambda}}$ is a global minimizer of problem~\eqref{lsprimalreg}.
\end{proof}
It can be found that the additional hypothesis error stems from the difference between $\langle f_{\bm{z},\lambda}, f_{\bm{z},\lambda}\rangle_{\mathcal{H_{K}}}$-regularization and $\langle f_{\bm{z},\lambda} ,Tf_{\bm{z},\lambda} \rangle_{\mathcal{H_K}}$-regularization in essence.
Hence, we estimate the introduced hypothesis error in the following descriptions.

\subsection{Bound Hypothesis Error}
\label{sec:hypoerr}

Since $\widetilde{f_{\bm z, \lambda}}$ is an optimal solution of problem~\eqref{lsprimalreg}, obviously, we have $P(\bm z, \lambda) \geq 0$.
To bound the hypothesis error, we need to estimate the objective function value difference of the two learning problems~\eqref{fzrs} and \eqref{fzrsreg} by the following proposition.
\begin{proposition}\label{propos2pz}
	Suppose that the spectrum of the indefinite kernel matrix $\bm K$ satisfies Assumption~\ref{eigenassump}, denote the condition number of two invertible matrices $\bm \Sigma_1$,  $\bm \Sigma_2$ in Eq.~\eqref{diagsimg} as $C_1, C_2 < \infty$.
	When $\eta + \gamma > 1$ with $\eta = \min \{ \eta_1, \eta_2 \}$, the hypothesis error defined in Eq.~\eqref{hypoformu} holds with probability 1 such that
	\begin{equation*}
	P(\bm z, \lambda) \leq \widetilde{C_1} m^{-\Theta_1} \,,
	\end{equation*}
	where $\widetilde{C_1} \coloneqq  2Mr + 2M^2 \big( \frac{-c_m}{C_2} + \frac{M^2}{r^2} + \frac{C_{1}}{c_1} \big) $ and the power index is
	$ \Theta_1 = \min \big\{ 1, \gamma+\eta-1 \big\}$.
\end{proposition}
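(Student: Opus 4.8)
\noindent\emph{Proof plan.} The plan is to start from the fact that \emph{both} minimizers lie on the sphere $\frac{1}{m}\sum_{i=1}^m f(\bm x_i)^2=r^2$: for $f_{\bm z,\lambda}$ this is Proposition~\ref{proineqeq}, and for $\widetilde{f_{\bm z,\lambda}}$ it is built into the closed form of Section~\ref{sec:algo}. On that sphere $\langle f,Tf\rangle_{\mathcal H_K}=\frac{1}{m}\bm\alpha^{\!\top}\bm K^2\bm\alpha=r^2$ is constant, so the two regularization terms in Eq.~\eqref{hypoformu} cancel, giving $P(\bm z,\lambda)=\mathcal E_{\bm z}(f_{\bm z,\lambda})-\mathcal E_{\bm z}(\widetilde{f_{\bm z,\lambda}})$ (which is $\ge 0$, as already noted, since $\widetilde{f_{\bm z,\lambda}}$ solves~\eqref{lsprimalreg} and $f_{\bm z,\lambda}\in\mathcal B(r)$). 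Expanding $\mathcal E_{\bm z}(f)=\frac{1}{m}\|\bm K\bm\alpha-\bm y\|_2^2=r^2-\frac{2}{m}\bm y^{\!\top}\bm K\bm\alpha+\frac{1}{m}\|\bm y\|_2^2$ on the sphere makes the $r^2$ and $\|\bm y\|_2^2$ pieces cancel too, and I arrive at the clean identity $P(\bm z,\lambda)=\frac{2}{m}\bm y^{\!\top}\bm K\big(\widetilde{\bm\alpha_{\bm z,\lambda}}-\bm\alpha_{\bm z,\lambda}\big)$.

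Next I would substitute the closed forms $\bm\alpha_{\bm z,\lambda}=\frac{1}{m}(\lambda\bm I-\mu\bm K)^{\dag}\bm y$ from Eq.~\eqref{fzlambdasol} and $\widetilde{\bm\alpha_{\bm z,\lambda}}=-\frac{1}{m\widetilde\mu}\bm K^{\dag}\bm y$ with $\widetilde\mu=-\|\bm y\|_2/(m\sqrt{m}\,r)$, and diagonalize $\bm K=\bm V\bm\Sigma\bm V^{\!\top}$, $\bm c=\bm V^{\!\top}\bm y$. A short computation then turns the identity into
\[
P(\bm z,\lambda)=\frac{2}{m^2}\sum_{\sigma_i\neq0}c_i^2\Big(-\frac{1}{\widetilde\mu}-\frac{\sigma_i}{\lambda-\mu\sigma_i}\Big)=\frac{2}{m^2}\sum_{\sigma_i\neq0}c_i^2\,\frac{(\mu-\widetilde\mu)\sigma_i-\lambda}{\widetilde\mu(\lambda-\mu\sigma_i)},
\]
so the whole estimate reduces to controlling $\mu-\widetilde\mu$ and the denominators $\lambda-\mu\sigma_i$ (everything here is deterministic, which is why the bound holds with probability one). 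Two ingredients are needed. First, a lower bound $|\lambda-\mu\sigma_i|\ge\frac{1}{2}|\mu||\sigma_i|$ for every nonzero $\sigma_i$: once $\mu$ is shown to be negative and of order $m^{-1}$, one has $|\mu\sigma_i|\gtrsim m^{\eta-1}$ by Assumption~\ref{eigenassump} together with the condition numbers $C_1,C_2<\infty$ (which lower-bound $\sigma_p$ and $|\sigma_{m-q+1}|$ in terms of $\sigma_1$ and $|\sigma_m|$), and $m^{\eta-1}$ dominates $\lambda=m^{-\gamma}$ exactly when $\gamma+\eta>1$ --- this is where the hypothesis of the proposition enters. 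Second, the estimate $|\mu-\widetilde\mu|\lesssim m^{-\gamma-\eta}$.

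The perturbation estimate is the crux, and I expect it to be the main obstacle. Here $\mu$ and $\widetilde\mu$ are the smallest real eigenvalues of the non-Hermitian matrices $\bm G$ and $\widetilde{\bm G}$ of Eqs.~\eqref{Incre},~\eqref{k2al}, differing only by $\bm G-\widetilde{\bm G}=\diag(\lambda\bm K^{\dag},\lambda\bm K^{\dag})$, of operator norm $\lambda\|\bm K^{\dag}\|\le\lambda/\min_{\sigma_i\neq0}|\sigma_i|\lesssim m^{-\gamma-\eta}$. Because $\bm y\bm y^{\!\top}$ has rank one, $\widetilde{\bm G}$ carries a large non-diagonalizable (Jordan) structure at the eigenvalue $0$, so no symmetric perturbation bound applies and one must argue carefully that the relevant simple eigenvalue moves only at the benign rate $\|\bm G-\widetilde{\bm G}\|$ even though the spectral gap shrinks like $|\widetilde\mu|\sim m^{-1}$. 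The way I would make this tractable is to reduce both eigenvalue problems to the scalar secular equation $\bm y^{\!\top}(\lambda\bm K^{\dag}-\mu\bm I)^{-2}\bm y=m^3r^2$ (respectively $\|\bm y\|_2^2/\widetilde\mu^2=m^3r^2$ for $\widetilde{\bm G}$), verify that its smallest real root is simple and isolated, and use monotonicity of the secular function near that root to transfer the $O(m^{-\gamma-\eta})$ size of the coefficient perturbation to $|\mu-\widetilde\mu|$. Finally I would feed $|\mu-\widetilde\mu|\lesssim m^{-\gamma-\eta}$ and $|\lambda-\mu\sigma_i|\gtrsim|\mu||\sigma_i|$ back into the displayed formula, bound $\sum_i c_i^2\le\|\bm y\|_2^2\le mM^2$ and $|\widetilde\mu|\asymp|\mu|\asymp\|\bm y\|_2/(m\sqrt{m}\,r)$, so that every summand is $O(m^{1-\gamma-\eta})$; tracking the constants carefully (they assemble into $\widetilde{C_1}=2Mr+2M^2(\frac{-c_m}{C_2}+\frac{M^2}{r^2}+\frac{C_1}{c_1})$) and noting that $m^{1-\gamma-\eta}\le m^{-1}$ whenever $\gamma+\eta\ge2$ yields $P(\bm z,\lambda)\le\widetilde{C_1}m^{-\Theta_1}$ with $\Theta_1=\min\{1,\gamma+\eta-1\}$.
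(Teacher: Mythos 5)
Your reduction of $P(\bm z,\lambda)$ coincides with the paper's: both solutions sit on the sphere $\bm\alpha^{\!\top}\bm K^2\bm\alpha=mr^2$, the two $T$-regularizers and the quadratic terms cancel, and one is left with $P(\bm z,\lambda)=\frac{2}{m}\bm y^{\!\top}\bm K\big(\widetilde{\bm\alpha_{\bm z,\lambda}}-\bm\alpha_{\bm z,\lambda}\big)$, which in the eigenbasis of $\bm K$ is exactly the paper's $P_1+P_2$ from Proposition~\ref{prohyto}. Where you genuinely diverge is the localization of $\mu$. The paper writes $\bm G=\bm G_1+\bm G_2$ with $\bm G_2$ the rank-one block $-\bm y\bm y^{\!\top}/m^3r^2$, applies the Henrici theorem (Lemma~\ref{henrici}) for non-diagonalizable matrices to get $|\mu-\sigma(\bm G_1)|\le\varsigma+\sqrt{\varsigma}$ with $\sqrt{\varsigma}=|\widetilde\mu|$, and thus only pins $\mu$ down as $\widetilde{c_a}\widetilde\mu+\cdots$ with an undetermined $\widetilde{c_a}\in[-1,0)\cup(0,1]$; it then removes this ambiguity by a separate consistency argument that exploits $P(\bm z,\lambda)\ge 0$ to force $\widetilde{c_a}=1$. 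You instead characterize $\mu$ as the smallest real root of the secular equation $\bm y^{\!\top}(\lambda\bm K^{\dag}-\mu\bm I)^{-2}\bm y=m^3r^2$, which is correct: eliminating $\bm v=(\lambda\bm K^{\dag}-\mu\bm I)\bm u$ from the $2m\times 2m$ eigenproblem yields exactly this scalar equation, the unique root to the left of all poles $\lambda/\sigma_i$ is real, and it lies below every non-secular eigenvalue (those are the poles themselves). Since under Assumption~\ref{eigenassump} and the condition-number bounds all poles have magnitude $O(m^{-(\gamma+\eta)})=o(|\widetilde\mu|)$ when $\gamma+\eta>1$, monotonicity of the secular function on that leftmost interval gives $\mu=\widetilde\mu\,(1+O(m^{1-\gamma-\eta}))$ directly, with the correct sign and no auxiliary step. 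This is a legitimate and arguably cleaner route: it buys a sharper, self-contained estimate of $\mu-\widetilde\mu$ (and in fact yields $P(\bm z,\lambda)\lesssim m^{-(\gamma+\eta-1)}$ without the additional $m^{-1}$ floor the paper's $\widetilde{c_b}\widetilde\mu^2$ correction introduces), at the cost of having to verify the elementary properties of the secular function that the paper sidesteps by citing Henrici. The remaining bookkeeping in your plan --- $|\lambda/\sigma_i-\mu|\ge\frac12|\mu|$ for large $m$, $\sum_i c_i^2\le\|\bm y\|_2^2\le mM^2$, $|\widetilde\mu|=\|\bm y\|_2/(m\sqrt m\,r)$ --- is the same as the paper's Step 2 and assembles the same constant, so the plan is sound.
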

\begin{proof}
	The proof can be found in Section~\ref{ratehypo}.
\end{proof}
{\bf Remark:} The condition number of invertible matrices is finite, which is mild as demonstrated by \cite{gao2015minimax}.
%To prove this proposition, we firstly estimate ${\mu}$ and then derive the bound for $P(\bm z, \lambda)$.

{ In the next, we give the proof of Proposition~\ref{propos2pz}. For better presentation, we divide the proof into three parts: in Section~\ref{decomhypo}, we decompose the hypothesis error $P(\bm z, \lambda)$ into the sum of two terms that would depend on ${\mu}$, i.e., the smallest real eigenvalue of a non-Hermitian matrix $\bm G$ in Eq.~\eqref{Incre}.
	Then we estimate ${\mu}$ in Section~\ref{estimu} so as to bound $P_2(\bm z, \lambda)$ and $P(\bm z,\lambda)$ in Section~\ref{ratehypo}.}

\subsubsection{Decomposition of hypothesis error}
\label{decomhypo}
{The hypothesis error $P(\bm z, \lambda)$ can be decomposed into the sum of two parts that depend on $\mu$ and $\tilde{\mu}$ by the following proposition.}

\begin{proposition}\label{prohyto}
	{Given the hypothesis error $P(\bm z, \lambda)$ defined in Eq.~\eqref{hypoformu}, it can be decomposed as
		\begin{equation*}
		\begin{split}
		P(\bm z, \lambda) &= P_1(\bm z, \lambda) + P_2(\bm z, \lambda) := -\frac{2}{m^2 \tilde{\mu}} \bm y^{\!\top} \bm K \bm K^{\dag} \bm y -\frac{2}{m^2} \bm y^{\!\top} \bm K (\lambda \bm I - {\mu} \bm K)^{\dag}  \bm y\,,
		\end{split}
		\end{equation*}
		where $P_1(\bm z, \lambda) $ depends on $\widetilde{\mu} := -\frac{\| \bm y \|_2}{m\sqrt{m}r}$ and $P_2(\bm z, \lambda) $ depends on ${\mu}$, i.e., the smallest real eigenvalue of a non-Hermitian matrix $\bm G$ in Eq.~\eqref{Incre}.}
\end{proposition}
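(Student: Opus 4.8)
The plan is to reduce the statement to a short finite-dimensional linear-algebra computation using the representer theorem and the two equivalent matrix reformulations already established. Since $\mathcal{B}(r)\subseteq\operatorname{span}\{k(\bm x_1,\cdot),\dots,k(\bm x_m,\cdot)\}$ and by Theorem~\ref{theorepre}, both minimizers have kernel expansions, so I would write $f_{\bm z,\lambda}=\sum_i(\bm\alpha_{\bm z,\lambda})_i k(\bm x_i,\cdot)$ and $\widetilde{f_{\bm z,\lambda}}=\sum_i(\widetilde{\bm\alpha_{\bm z,\lambda}})_i k(\bm x_i,\cdot)$, with $\bm\alpha_{\bm z,\lambda}$ given by \eqref{fzlambdasol} and $\widetilde{\bm\alpha_{\bm z,\lambda}}=-\frac{1}{m\widetilde{\mu}}\bm K^\dag\bm y$ from just below \eqref{fzrsreg}. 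Using $\mathcal{E}_{\bm z}(f)=\frac1m\|\bm K\bm\alpha-\bm y\|_2^2$ and the identity $\langle f,Tf\rangle_{\mathcal{H_K}}=\frac1m\bm\alpha^\top\bm K^2\bm\alpha$ derived right before \eqref{fzrsreg}, the definition \eqref{hypoformu} of $P(\bm z,\lambda)$ becomes the difference of the objective function of problem~\eqref{fzrsreg}, evaluated at $\bm\alpha_{\bm z,\lambda}$ and at $\widetilde{\bm\alpha_{\bm z,\lambda}}$.

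The first key observation is that both arguments saturate the spherical constraint: $\widetilde{\bm\alpha_{\bm z,\lambda}}$ by the equality constraint in \eqref{fzrsreg}, and $\bm\alpha_{\bm z,\lambda}$ by Proposition~\ref{proineqeq}, which upgrades the inequality constraint of \eqref{fzrsdual} to $\bm\alpha^\top\bm K^2\bm\alpha=mr^2$. Hence $\frac\lambda m\bm\alpha_{\bm z,\lambda}^\top\bm K^2\bm\alpha_{\bm z,\lambda}=\frac\lambda m\widetilde{\bm\alpha_{\bm z,\lambda}}^\top\bm K^2\widetilde{\bm\alpha_{\bm z,\lambda}}=\lambda r^2$, so the regularization terms cancel in $P(\bm z,\lambda)$. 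For the data-fidelity terms I would expand $\|\bm K\bm\alpha-\bm y\|_2^2=\bm\alpha^\top\bm K^2\bm\alpha-2\bm y^\top\bm K\bm\alpha+\|\bm y\|_2^2$; on the sphere the first summand is the constant $mr^2$ and the last is the constant $\|\bm y\|_2^2$, so these cancel as well, leaving
\[
P(\bm z,\lambda)=-\frac2m\bm y^\top\bm K\bm\alpha_{\bm z,\lambda}+\frac2m\bm y^\top\bm K\widetilde{\bm\alpha_{\bm z,\lambda}}.
\]
Substituting the two closed forms then gives $-\frac2m\bm y^\top\bm K\bm\alpha_{\bm z,\lambda}=-\frac{2}{m^2}\bm y^\top\bm K(\lambda\bm I-\mu\bm K)^\dag\bm y=P_2(\bm z,\lambda)$ and $\frac2m\bm y^\top\bm K\widetilde{\bm\alpha_{\bm z,\lambda}}=-\frac{2}{m^2\widetilde{\mu}}\bm y^\top\bm K\bm K^\dag\bm y=P_1(\bm z,\lambda)$, which is exactly the claimed decomposition.

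The manipulations are elementary, so the only point requiring genuine care — and where I expect any hidden difficulty — concerns the pseudo-inverses: $\bm K$ is in general singular in the Kre\u{\i}n setting, so one must keep track of the range projection $\bm K\bm K^\dag=\bm K^\dag\bm K$ and confirm that the closed-form expressions \eqref{fzlambdasol} and the one below \eqref{fzrsreg} genuinely land on the constraint sphere $\bm\alpha^\top\bm K^2\bm\alpha=mr^2$ (equivalently, that the scalars $\mu$ and $\widetilde{\mu}$ are precisely those forced by activating the constraint). Once this is confirmed, every ``constant'' term truly cancels and no residual error term appears; in particular the identity holds deterministically, i.e.\ with probability $1$, which is how it is used later in Proposition~\ref{propos2pz}. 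I would also double-check the signs, namely $\widetilde{\mu}=-\|\bm y\|_2/(m\sqrt m\,r)<0$ and the sign of $\mu$, to make sure $P_1,P_2\ge0$, consistent with $P(\bm z,\lambda)\ge0$.
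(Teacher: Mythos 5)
Your proposal is correct and follows essentially the same route as the paper: both minimizers saturate the spherical constraint $\bm\alpha^{\!\top}\bm K^2\bm\alpha = mr^2$, so the regularization terms and the quadratic and constant parts of the data-fidelity terms cancel, leaving only the cross terms $-\frac{2}{m}\bm y^{\!\top}\bm K\bm\alpha_{\bm z,\lambda}+\frac{2}{m}\bm y^{\!\top}\bm K\widetilde{\bm\alpha_{\bm z,\lambda}}$, into which the closed forms of Eq.~\eqref{fzlambdasol} and $\widetilde{\bm\alpha_{\bm z,\lambda}}=-\frac{1}{m\widetilde{\mu}}\bm K^{\dag}\bm y$ are substituted. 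One minor caveat: your concluding sign check should target only $P_1\ge 0$ and $P_1+P_2\ge 0$, since the paper neither claims nor needs $P_2\ge 0$ individually (its subsequent analysis bounds $P_2$ from both sides).
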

\begin{proof}
	According to the definition of the hypothesis error $P(\bm z, \lambda)$, we have
	\begin{equation*}
	\begin{split}
	P(\bm z, \lambda) &= {\mathcal{E}_{\bm{z}}\big(f_{\bm{z},\lambda}\big)} + {\lambda \langle f_{\bm{z},\lambda} ,Tf_{\bm{z},\lambda} \rangle_{\mathcal{H_K}} } - {\mathcal{E}_{\bm{z}}\big(\widetilde{f_{\bm{z},\lambda}}\big)}
	- {\lambda \langle \widetilde{f_{\bm{z},\lambda}} ,T\widetilde{f_{\bm{z},\lambda}} \rangle_{\mathcal{H_K}} } \,,
	\end{split}
	\end{equation*}
	where $f_{\bm z, \lambda}$ and $\widetilde{f_{\bm z, \lambda}}$ are optimal solutions of problem~\eqref{fzrs} and problem~\eqref{lsprimalreg}, respectively.
	Therefore, both of them can be obtained on the hyper-sphere. Besides, the regularizer is $\bm \alpha_{z, \lambda}^{\!\top} \bm K^2 \bm \alpha_{z, \lambda} = mr^2$ can be canceled out in $P(\bm z, \lambda)$.
	Based on this, $P(\bm z, \lambda)$ can be further represented as
	\begin{equation*}
	\begin{split}
	P(\bm z, \lambda)  &= \frac{1}{m} \sum_{i=1}^{m} \big( f_{\bm z, \lambda}(\bm x_i) - y_i \big)^2 + \lambda\langle f_{\bm{z},\lambda} ,Tf_{\bm{z},\lambda} \rangle_{\mathcal{H_K}} 
	- \frac{1}{m} \sum_{i=1}^{m} \big( \widetilde{f_{\bm z, \lambda}}(\bm x_i) - y_i \big)^2 - \lambda \langle \widetilde{f_{\bm{z},\lambda}} ,T\widetilde{f_{\bm{z},\lambda}} \rangle_{\mathcal{H_K}} \\
	&= \frac{1}{m} \| \bm K \bm \alpha_{\bm z, \lambda} - \bm y \|_2^2 - \frac{1}{m} \| \bm K \widetilde{ \bm \alpha_{\bm z, \lambda}} - \bm y \|_2^2  =  \frac{2}{m} \bm y^{\!\top}\bm K \widetilde{\bm \alpha_{\bm z,\lambda}}  -\frac{2}{m} \bm y^{\!\top} \bm K \bm \alpha_{\bm z,\lambda} \\
	& = \underbrace{-\frac{2}{m^2 \tilde{\mu}} \bm y^{\!\top} \bm K \bm K^{\dag} \bm y}_{\triangleq P_1(\bm z,\lambda) } \underbrace{-\frac{2}{m^2} \bm y^{\!\top} \bm K (\lambda \bm I - {\mu} \bm K)^{\dag}  \bm y}_{\triangleq P_2(\bm z,\lambda) }\,.
	\end{split}
	\end{equation*}
\end{proof}

\subsubsection{Estimate ${\mu}$}
\label{estimu}
To bound $P(\bm z, \lambda)$, we need to bound $P_1(\bm z, \lambda)$ and $P_2(\bm z, \lambda)$ respectively.
The estimation for $P_1(\bm z, \lambda)$ is simple (we will illustrate it in the next subsection). However, $P_2(\bm z, \lambda)$ involves with ${\mu}$, i.e., the smallest real eigenvalue of a non-Hermitian matrix $\bm G$, which makes our estimation for $P_2(\bm z, \lambda)$ quite intractable.
Based on this, here we attempt to present an estimation for $\mu$ based on matrix perturbation theory \cite{stewart1990matrix}.

Typically, there are three classical and well-known perturbation bounds for matrix eigenvalues, including the Bauer-Fike theorem and the Hoffman-Wielandt theorem for diagonalizable matrices \cite{hoffman2003variation}, and Weyl's theorem for Hermitian matrices \cite{stewart1990matrix}.
However, $\bm G$ is neither Hermitian nor diagonalizable. To aid our proof, we need the following lemma.
\begin{lemma}\label{henrici}
	(Henrici theorem \cite{Chu1986Generalization})
	Let $\bm A$ be an $m \times m$ matrix with Schur decomposition $\bm Q^{{H}} \bm A \bm Q = \bm D + \bm U$, where $\bm Q$ is unitary, $\bm D$ is a diagonal matrix and $\bm U$ is a strict upper triangular matrix, with $(\cdot)^{{H}}$ denoting the Hermitian transpose.
	For each eigenvalue $\tilde{\sigma}$ of $\bm A + \widetilde{\Delta}$, there exists an eigenvalue $\sigma(\bm A)$ of $\bm A$ such that
	\begin{equation*}
	| \tilde{\sigma} - \sigma(\bm A) | \leq \max(\varsigma, \sqrt[^b]{\varsigma} )\,,~ \mbox{where}~ \varsigma \coloneqq  \| \widetilde{\Delta} \|_2 \sum_{i=1}^{b-1} \| \bm U \|_2^i\,,
	\end{equation*}
	where $b \leq m$ is the smallest integer satisfying $\bm U^b=0$, i.e., the nilpotent index of $\bm U$.
\end{lemma}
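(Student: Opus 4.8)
The final statement to prove is Lemma~\ref{henrici} (the Henrici theorem), so the plan is to reproduce the classical perturbation argument of Henrici/Bauer–Fike adapted to non-diagonalizable matrices via the Schur form. First I would fix the Schur decomposition $\bm Q^{H}\bm A\bm Q=\bm D+\bm U$ with $\bm D=\diag(\sigma_1(\bm A),\dots,\sigma_m(\bm A))$ and $\bm U$ strictly upper triangular, hence nilpotent with index $b$ (so $\bm U^b=0$, $\bm U^{b-1}\neq 0$). Let $\tilde\sigma$ be an eigenvalue of $\bm A+\widetilde\Delta$ with unit eigenvector; conjugating by $\bm Q$ reduces everything to the perturbation $\bm E\coloneqq\bm Q^{H}\widetilde\Delta\bm Q$ with $\|\bm E\|_2=\|\widetilde\Delta\|_2$. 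If $\tilde\sigma$ happens to equal some $\sigma_i(\bm A)$ we are done, so assume $\bm D-\tilde\sigma\bm I$ is invertible.

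The core step is the resolvent bound. From $(\bm D+\bm U+\bm E-\tilde\sigma\bm I)\bm v=0$ I would write $\bm v=-(\bm D-\tilde\sigma\bm I)^{-1}(\bm U+\bm E)\bm v$ and iterate, or more cleanly bound $\|(\bm A-\tilde\sigma\bm I)^{-1}\|_2=\|((\bm D-\tilde\sigma\bm I)+\bm U)^{-1}\|_2$ from below/above using the Neumann-type expansion
\begin{equation*}
\big((\bm D-\tilde\sigma\bm I)+\bm U\big)^{-1}=\sum_{k=0}^{b-1}(-1)^k\big((\bm D-\tilde\sigma\bm I)^{-1}\bm U\big)^k(\bm D-\tilde\sigma\bm I)^{-1},
\end{equation*}
which terminates because $\bm U$ is nilpotent of index $b$. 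Taking norms and using $\|(\bm D-\tilde\sigma\bm I)^{-1}\|_2=1/\mathrm{dist}(\tilde\sigma,\mathrm{spec}(\bm A))$ together with the fact that $\bm D-\tilde\sigma\bm I$ commutes only diagonally (so one keeps the factor carefully) gives
\begin{equation*}
\big\|(\bm A-\tilde\sigma\bm I)^{-1}\big\|_2\le\sum_{k=0}^{b-1}\frac{\|\bm U\|_2^{\,k}}{\delta^{\,k+1}},\qquad \delta\coloneqq\min_i|\tilde\sigma-\sigma_i(\bm A)|.
\end{equation*}
On the other hand, since $\tilde\sigma$ is an eigenvalue of $\bm A+\widetilde\Delta$, standard perturbation reasoning forces $\|(\bm A-\tilde\sigma\bm I)^{-1}\|_2\ge 1/\|\widetilde\Delta\|_2$ (otherwise $\bm A-\tilde\sigma\bm I+\widetilde\Delta$ would be invertible). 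Combining the two inequalities yields $1\le\|\widetilde\Delta\|_2\sum_{k=0}^{b-1}\|\bm U\|_2^k/\delta^{k+1}$, i.e. $\delta^{b}\le\|\widetilde\Delta\|_2\sum_{k=0}^{b-1}\|\bm U\|_2^k\,\delta^{\,b-1-k}$.

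To finish, I would split on whether $\delta\le 1$ or $\delta>1$. If $\delta\le 1$, each $\delta^{\,b-1-k}\le 1$, so $\delta^{b}\le\|\widetilde\Delta\|_2\sum_{k=0}^{b-1}\|\bm U\|_2^k=\varsigma$ wait --- one must be careful that the sum in the lemma is $\sum_{i=1}^{b-1}\|\bm U\|_2^i$; I would absorb the $k=0$ term ($=1$) into the constant or note $\sum_{k=0}^{b-1}\|\bm U\|_2^k\le 1+\sum_{i=1}^{b-1}\|\bm U\|_2^i$ and handle the normalization so the stated $\varsigma\coloneqq\|\widetilde\Delta\|_2\sum_{i=1}^{b-1}\|\bm U\|_2^i$ emerges (this is exactly the bookkeeping in \cite{Chu1986Generalization}). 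Then $\delta\le\varsigma^{1/b}=\sqrt[b]{\varsigma}$. If instead $\delta>1$, then $\delta^{\,b-1-k}\le\delta^{\,b-1}$ for all $k$, so $\delta^{b}\le\delta^{\,b-1}\varsigma$, giving $\delta\le\varsigma$. In either case $\delta\le\max(\varsigma,\sqrt[b]{\varsigma})$, which is the claim. The main obstacle is purely the careful estimate of $\|((\bm D-\tilde\sigma\bm I)+\bm U)^{-1}\|_2$ via the terminating Neumann series and the matching of constants so that the exponent range $\sum_{i=1}^{b-1}$ (rather than $\sum_{i=0}^{b-1}$) comes out; the rest is elementary case analysis. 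Since this is a quoted result from \cite{Chu1986Generalization}, I would keep the write-up short, citing that reference for the detailed bookkeeping and giving the resolvent argument above as the essential content.
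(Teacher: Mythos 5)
The paper itself offers no proof of this lemma---it is imported verbatim from \cite{Chu1986Generalization}---so there is nothing internal to compare against; your resolvent strategy (singularity of $\bm I+(\bm A-\tilde\sigma\bm I)^{-1}\widetilde{\Delta}$ forces $\|(\bm A-\tilde\sigma\bm I)^{-1}\|_2\geq 1/\|\widetilde{\Delta}\|_2$, combined with an upper bound on the resolvent from the Schur form and the case split $\delta\leq 1$ versus $\delta>1$) is the standard route, and the final case analysis is correct. However, there is a genuine gap at the central step: the claim that the Neumann expansion of $\bigl((\bm D-\tilde\sigma\bm I)+\bm U\bigr)^{-1}$ terminates at $k=b-1$ does not follow from $\bm U^b=0$. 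The matrix $(\bm D-\tilde\sigma\bm I)^{-1}\bm U$ is strictly upper triangular, hence nilpotent of index at most $m$, but its nilpotency index can strictly exceed that of $\bm U$: the entries of $\bm U^k$ are sums of products along paths, and $\bm U^b=0$ records cancellations among those path products that a non-scalar diagonal left factor reweights and destroys. (A $4\times 4$ example: take $u_{12}=u_{13}=u_{24}=1$, $u_{34}=-1$, all other entries zero, so $\bm U^2=0$; for $\bm M=\mathrm{diag}(1,1,2,1)$ one gets $((\bm M\bm U)^2)_{14}=-1\neq 0$.) So your expansion is only guaranteed to terminate at $k=m-1$, which yields the weaker bound with exponent $m$ and the sum running to $m-1$. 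The index-$b$ version requires that $\bm D$ and $\bm U$ commute---as in a Jordan-type decomposition, which is in fact the setting of Chu's generalization of Bauer--Fike---and this is the hypothesis your write-up silently drops. It does hold in the paper's actual application, where $\bm D$ consists of two identical diagonal blocks so that $\bm D\bm U=\bm U\bm D$ and $b=2$ survives, but as a proof of the lemma as stated your argument is incomplete without either adding the commutativity assumption or retreating to the exponent $m$.

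A secondary point: your own derivation produces $\delta^b\leq\|\widetilde{\Delta}\|_2\sum_{k=0}^{b-1}\|\bm U\|_2^k\,\delta^{\,b-1-k}$, so the quantity you can actually bound $\delta$ by is $\max(\varsigma_0,\varsigma_0^{1/b})$ with $\varsigma_0=\|\widetilde{\Delta}\|_2\sum_{i=0}^{b-1}\|\bm U\|_2^i$, which is strictly larger than the stated $\varsigma=\|\widetilde{\Delta}\|_2\sum_{i=1}^{b-1}\|\bm U\|_2^i$; the $k=0$ term cannot be ``absorbed'' to recover the smaller constant. This is almost certainly a typo in the lemma as printed (for $b=1$, i.e.\ a normal matrix, the stated $\varsigma$ is an empty sum and the bound becomes vacuously $|\tilde\sigma-\sigma(\bm A)|\leq 0$, contradicting Bauer--Fike), so you should state and prove the $\sum_{i=0}^{b-1}$ version explicitly rather than gesture at bookkeeping.
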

Based on the above lemma, $\mu$ admits the following representation.
\begin{proposition}\label{promu}
	Under the assumption of Proposition~\ref{propos2pz}, as the smallest real eigenvalue of a non-Hermitian matrix $\bm G$ in Eq.~\eqref{Incre}, ${\mu}$ admits the following expression
	\begin{equation}\label{mu1for}
	\mu =  \widetilde{c_a} \widetilde{\mu} + \widetilde{c_b} \widetilde{\mu}^2
	+ \left[ \frac{C_{2}}{c_m} +  \widetilde{c_d} \bigg( \frac{C_{1}}{c_1} - \frac{C_{2}}{c_m} \bigg) \right] m^{-(\gamma + \eta)} \,,
	\end{equation}
	with $\widetilde{c_a} \in [-1, 0) \bigcup (0,1]$, $\widetilde{c_b} \in [-1,1]$, $\widetilde{c_d} \in [0,1]$, and $\widetilde{\mu} := -\frac{\| \bm y \|_2}{m\sqrt{m}r} < 0$.
\end{proposition}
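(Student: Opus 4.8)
The plan is to analyze the characteristic polynomial of $\bm G$ directly. Writing $\bm G = \widetilde{\bm G} + \lambda \begin{bmatrix} \bm K^{\dag} & \bm 0 \\ \bm 0 & \bm K^{\dag}\end{bmatrix}$, I regard $\bm G$ as a perturbation of $\widetilde{\bm G}$ from Eq.~\eqref{k2al}, whose real eigenvalues $\pm\widetilde{\mu}$ we have already computed in closed form via Sylvester's determinant identity. First I would diagonalize as much structure as possible: since $\bm K^{\dag}$, $\bm K$, and $\bm y\bm y^{\!\top}$ all interact through the eigenbasis $\bm V$ of $\bm K$ and the single direction $\bm y$, the $2m\times 2m$ matrix $\bm G$ effectively decouples into a small number of $2\times 2$ blocks. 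Specifically, after rotating both coordinate blocks by $\bm V^{\!\top}$, the coupling term $\bm y\bm y^{\!\top}/m^3r^2$ is rank one, so the only nontrivial behavior lives on the two-dimensional span generated by the $\bm y$-direction in each copy, where the relevant $2\times 2$ matrix has the form $\begin{bmatrix}\lambda/\sigma_i & -1 \\ -\|\bm y\|_2^2/m^3r^2 & \lambda/\sigma_i\end{bmatrix}$ on the positive-eigenvalue part and similarly on the negative part; its eigenvalues are $\lambda/\sigma_i \pm \|\bm y\|_2/(m\sqrt m\, r) = \lambda/\sigma_i \mp \widetilde\mu$ when $\sigma_i>0$, but these indices $i$ get mixed because $\bm y$ is not an eigenvector of $\bm K$.

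Next I would use Lemma~\ref{henrici} (the Henrici bound) to control how far the true smallest real eigenvalue $\mu$ of $\bm G$ can drift from the model value. Taking $\bm A = \widetilde{\bm G}$ and $\widetilde\Delta = \lambda\,\mathrm{diag}(\bm K^{\dag},\bm K^{\dag})$, we have $\|\widetilde\Delta\|_2 = \lambda/|\sigma_m|$ up to the condition numbers $C_1,C_2$ of $\bm\Sigma_1,\bm\Sigma_2$, which under Assumption~\ref{eigenassump} is of order $\lambda\,m^{-\eta} = m^{-(\gamma+\eta)}$ (this is where $C_1/c_1$ and $C_2/c_m$ enter, and the sign of $c_m<0$ explains the $C_2/c_m$ term). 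The Schur factor $\bm U$ of $\widetilde{\bm G}$ is strictly upper triangular with a small nilpotent index (the off-diagonal structure of $\widetilde{\bm G}$ is very sparse), so $\varsigma = \|\widetilde\Delta\|_2\sum_i\|\bm U\|_2^i$ is again $O(m^{-(\gamma+\eta)})$, and since $\gamma+\eta>1$ this quantity is $o(1)$, so $\max(\varsigma,\sqrt[b]{\varsigma})=\varsigma$ eventually and the perturbation of $\mu$ away from $\{\pm\widetilde\mu + \lambda/\sigma_i\}$ is $O(m^{-(\gamma+\eta)})$. Tracking which unperturbed eigenvalue is closest to the smallest real one, and absorbing the $\lambda/\sigma_i$ shifts (themselves bounded by $C_1/c_1\cdot m^{-(\gamma+\eta)}$ and $C_2/c_m\cdot m^{-(\gamma+\eta)}$) into the error term, yields exactly the stated form: a leading part proportional to $\widetilde\mu$ with coefficient $\widetilde{c_a}\in[-1,0)\cup(0,1]$, a second-order correction $\widetilde{c_b}\widetilde\mu^2$ coming from the interaction between the rank-one coupling and the diagonal shift, and the residual $m^{-(\gamma+\eta)}$ term whose coefficient is a convex combination of $C_2/c_m$ and $C_1/c_1$, encoded by $\widetilde{c_d}\in[0,1]$.

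The main obstacle I anticipate is twofold. First, Henrici's theorem only gives a one-sided existential statement (for each eigenvalue of the perturbed matrix there exists a nearby eigenvalue of the unperturbed one), so I must argue the \emph{matching} carefully: I need to verify that the smallest real eigenvalue of $\bm G$ indeed corresponds to the branch near $\widetilde\mu$ (rather than near $-\widetilde\mu$ or near some $\lambda/\sigma_i$) and that it remains real after perturbation — this requires a continuity/degree argument in $\lambda$, using that $\widetilde\mu<0$ is strictly separated from the other real spectrum of $\widetilde{\bm G}$ when $m$ is large, and that complex eigenvalues of a real matrix come in conjugate pairs so the smallest \emph{real} one is stable under small perturbations once the relevant gap exceeds $\varsigma$. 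Second, pinning down the precise ranges of the constants $\widetilde{c_a},\widetilde{c_b},\widetilde{c_d}$ rather than just $O(\cdot)$ bounds forces an explicit second-order expansion of the relevant $2\times 2$ (or $3\times 3$, once the rank-one coupling is included) eigenvalue problem; this is bookkeeping-heavy but routine, and I would organize it by first writing $\mu = \widetilde\mu + \text{(shift)} + \text{(Henrici remainder)}$, expanding the shift to second order in the small parameter $\lambda/\sigma_i$, and then bounding each resulting coefficient by $1$ using $|\widetilde\mu|\le 1$ (which follows from $\|\bm y\|_2\le M\sqrt m$ and a lower bound on $r$) and the definition of the condition numbers.
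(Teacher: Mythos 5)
Your overall strategy (perturbation of eigenvalues via Lemma~\ref{henrici}) is the right one, but you have chosen the \emph{opposite} splitting of $\bm G$ from the one that makes the argument work, and this leads to a quantitative failure. You take the unperturbed matrix to be $\widetilde{\bm G}$ from Eq.~\eqref{k2al} and the perturbation to be $\widetilde{\Delta}=\lambda\,\mathrm{diag}(\bm K^{\dag},\bm K^{\dag})$, so that $\varsigma=\|\widetilde{\Delta}\|_2\|\bm U\|_2=O(m^{-(\gamma+\eta)})$. You then assert that since $\varsigma=o(1)$ we eventually have $\max(\varsigma,\sqrt[b]{\varsigma})=\varsigma$. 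This is backwards: for $0<\varsigma<1$ and $b=2$ one has $\sqrt{\varsigma}>\varsigma$, so Henrici only gives $|\mu-\sigma(\widetilde{\bm G})|\le\sqrt{\varsigma}=O(m^{-(\gamma+\eta)/2})$. That error term is far too large for Eq.~\eqref{mu1for}: the residual there is $O(m^{-(\gamma+\eta)})$ and the leading term $\widetilde{\mu}$ is $O(m^{-1})$, and since $\gamma+\eta>1$ only guarantees $(\gamma+\eta)/2>1/2$, your remainder $m^{-(\gamma+\eta)/2}$ can dominate \emph{both} of them (e.g.\ $\gamma+\eta=1.2$ gives $m^{-0.6}\gg m^{-1}$). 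So your route cannot isolate the coefficients $\widetilde{c_a},\widetilde{c_b},\widetilde{c_d}$ in the stated ranges; there is also no natural source for the $\widetilde{c_b}\widetilde{\mu}^2$ term in your expansion. Your explicit $2\times 2$ reduction does not rescue this, since, as you yourself note, $\bm y$ is not an eigenvector of $\bm K$ and the blocks do not actually decouple.

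The paper's proof reverses the roles: it writes $\bm G=\bm G_1+\bm G_2$ with $\bm G_1=\bigl[\begin{smallmatrix}\lambda\bm K^{\dag} & -\bm I\\ \bm 0 & \lambda\bm K^{\dag}\end{smallmatrix}\bigr]$ as the unperturbed matrix (whose Schur form is explicit, with $\bm U^2=0$ and $\|\bm U\|_2=1$) and the rank-one coupling $\bm G_2=-\bm y\bm y^{\!\top}/(m^3r^2)$ as the perturbation. Then $\varsigma=\|\bm G_2\|_2=\|\bm y\|_2^2/(m^3r^2)=\widetilde{\mu}^2$, so the unavoidable square root in Henrici's bound gives $\sqrt{\varsigma}=|\widetilde{\mu}|$ \emph{exactly}, i.e.\ the square-root loss is converted into the leading term $\widetilde{c_a}\widetilde{\mu}$ rather than into a parasitic error; $\varsigma$ itself supplies the $\widetilde{c_b}\widetilde{\mu}^2$ term, and the eigenvalues $\lambda/\sigma_l$ of $\bm G_1$ (bounded through the condition numbers $C_1,C_2$ and Assumption~\ref{eigenassump}) supply the $m^{-(\gamma+\eta)}$ term via a three-way case analysis on the sign of the nearby eigenvalue. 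Your concern about ``matching'' the correct branch is also more than is needed: the paper simply takes the union of the three cases to get a two-sided interval for $\mu$, which suffices for the parametrized representation~\eqref{mu1for}. To repair your argument you would have to adopt the paper's splitting (or otherwise obtain an $o(m^{-1})$ perturbation bound, which Henrici with your splitting does not provide).
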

\begin{proof}
	The non-Hermitian matrix $\bm G$ in Eq.~\eqref{Incre} can be reformulated as
	\begin{equation*}\label{muSnew}
	\bm G =\underbrace{ \left[                
		\begin{array}{ccc}   
		\lambda \bm K^{\dag} & -\bm I\\
		\bm 0_{m \times m} & \lambda \bm K^{\dag} \\ 
		\end{array}
		\right]}_{\triangleq \bm G_1}  + \underbrace{ \left[   
		\begin{array}{ccc}   
		\bm 0_{m \times m} &  \bm 0_{m \times m} \\
		-\bm y\bm y^{\!\top}/m^3r^2 &  \bm 0_{m \times m} \\ 
		\end{array}
		\right]}_{\triangleq \bm G_2}   \,.
	\end{equation*}
	As a result, $\bm G$ can be represented as a sum of a block upper triangular matrix $\bm G_1$ with a non-Hermitian perturbation $\bm G_2$.
	
	To estimate $\bm G_1$, by Lemma~\ref{henrici}, from the definition of Schur decomposition on $\bm G_1$, it can be easily verified that $\bm D$ and $\bm U$ are
	\begin{equation*}
	\begin{split}
	& \bm D = \diag \! \Big( \frac{\lambda}{\sigma_1}, \dots, \frac{\lambda}{\sigma_p}, 0, \dots,0, \frac{\lambda}{\sigma_{m-q+1}}, \dots, \frac{\lambda}{\sigma_{m}}, \frac{\lambda}{\sigma_1}, \dots, \frac{\lambda}{\sigma_p}, 0, \dots,0, \frac{\lambda}{\sigma_{m-p+1}}, \dots, \frac{\lambda}{\sigma_{m}} \!\Big), \\&~\mbox{and}~ 
	~ \bm U  = \left[        
	\begin{array}{ccc}   
	\bm 0_{m} & -\bm I \\
	\bm 0_{m} & \bm 0_{m} \\ 
	\end{array}
	\right] .
	\end{split}
	\end{equation*}
	Accordingly, $\bm U$ is a nilpotent matrix with $\bm U^2=0$, and thus we have $b=2$.
	According to Lemma~\ref{henrici}, there exists an eigenvalue of $\bm G_1$ denoting as $\sigma(\bm G_1)$ such that
	\begin{equation}\label{muvar}
	\left| \mu -\sigma(\bm G_1) \right| \leq \max(\varsigma, \sqrt[^b]{\varsigma} ) \leq \varsigma + \sqrt[^b]{\varsigma} \,,
	\end{equation}
	where $\varsigma$ is given by
	\begin{equation*}
	\varsigma \coloneqq  \| \bm G_2 \|_2 \sum_{i=1}^{b-1} \big\| \bm U \big\|_2^i = \| \bm G_2 \|_2 \| \bm U \|_2 =\| \bm G_2 \|_2 = \frac{\| \bm y \|^2}{m^3 r^2}\,.
	\end{equation*}
	Then we consider the following three cases based on the sign of $\sigma(\bm G_1)$.
	\begin{case}
		$\sigma(\bm G_1) = 0$\\
		The inequality in Eq.~\eqref{muvar} can be formulated as
		\begin{equation}\label{muvar2}
		- \frac{\| \bm y \|_2}{m \sqrt{m} r} - \frac{\| \bm y \|_2^2}{m^3 r^2} \leq \mu \leq  \frac{\| \bm y \|_2}{m \sqrt{m} r} + \frac{\| \bm y \|_2^2}{m^3 r^2}  \,.
		\end{equation}
	\end{case}
	
	\begin{case}
		$\sigma(\bm G_1) > 0$ \\
		Without loss of generality, we assume that $\sigma(\bm G_1) $ is $\lambda / \sigma_l$ with $l \in \{ 1,2,\cdots,p \}$.
		According to the definition of condition number $C_1$, we have
		\begin{equation*}
		0 < \frac{1}{\sigma_1} \leq \frac{1}{\sigma_l} \leq \frac{C_1}{c_1} m^{-\eta_1} \leq \frac{C_1}{c_1} m^{-\eta},~~ \mbox{$\eta \!=\! \min\{ \eta_1, \eta_2 \} $} \,.
		\end{equation*}
		Then, the inequality in Eq.~\eqref{muvar} can be formulated as
		\begin{equation}\label{muvarneg}
		- \frac{\| \bm y \|_2}{m \sqrt{m} r} - \frac{\| \bm y \|_2^2}{m^3 r^2} \leq \mu \leq  \frac{C_1}{c_1} m^{-(\gamma+\eta)} + \frac{\| \bm y \|_2}{m \sqrt{m} r} + \frac{\| \bm y \|_2^2}{m^3 r^2}  \,.
		\end{equation}
	\end{case}
	
	\begin{case}
		$\sigma(\bm G_1) < 0$ \\
		Likewise, we assume that $\sigma(\bm G_1) $ is $\lambda / \sigma_l$ with $l \in \{ m-q+1, m-q+2,\cdots,m \} $.
		According to the definition of condition number $C_2$, we have
		\begin{equation*}
		0 > \frac{1}{\sigma_m} \geq \frac{1}{\sigma_l} \geq \frac{C_2}{c_m} m^{-\eta_2} \geq \frac{C_2}{c_m} m^{-\eta},~~\mbox{$\eta = \min\{ \eta_1, \eta_2 \} $} \,.
		\end{equation*}
		Then, the inequality in Eq.~\eqref{muvar} can be formulated as
		\begin{equation}\label{muvarpog}
		\frac{C_2}{c_m} m^{-(\gamma+\eta)}  - \frac{\| \bm y \|_2}{m \sqrt{m} r} - \frac{\| \bm y \|_2^2}{m^3 r^2} \leq {\mu} \leq  \frac{\| \bm y \|_2}{m \sqrt{m} r} + \frac{\| \bm y \|_2^2}{m^3 r^2}.
		\end{equation}
	\end{case}	
	Combining Eq.~\eqref{muvar2}, Eq.~\eqref{muvarneg} and Eq.~\eqref{muvarpog}, we have
	\begin{equation*}
	\left\{
	\begin{array}{rcl}
	\begin{split}
	& \mu \geq \frac{C_2}{c_m} m^{-(\gamma+\eta)} - \frac{\| \bm y \|_2}{m \sqrt{m} r} - \frac{\| \bm y \|_2^2}{m^3 r^2} \\
	& \mu \leq \frac{C_1}{c_1} m^{-(\gamma+\eta)}  +  \frac{\| \bm y \|_2}{m \sqrt{m} r} + \frac{\| \bm y \|_2^2}{m^3 r^2}\,,
	\end{split}
	\end{array}\right.
	\end{equation*}	
	which can be further written as
	\begin{equation*}
	\frac{C_2}{c_m} m^{-(\gamma + \eta)} + \tilde{\mu} - \tilde{\mu}^2 \leq {\mu} \leq \frac{C_1}{c_1} m^{-(\gamma + \eta)} - \tilde{\mu} + \tilde{\mu}^2 \,.
	\end{equation*}
	Therefore, we have $\lim_{m \rightarrow \infty} {\mu} = 0$, and its convergence rate is $\mathcal{O}(1/m)$ due to $\gamma + \eta > 1$.
	Finally, ${\mu}$ can be represented in Eq.~\eqref{mu1for} with $\widetilde{c_a} \neq 0$, which concludes the proof.
\end{proof}

\subsubsection{Proofs of Proposition~\ref{propos2pz}}
\label{ratehypo}
Given the expression of $\mu$ with the convergence rate $\mathcal{O}(1/m)$ in Proposition~\ref{promu}, we are ready to present the estimates for $P_2(\bm z, \lambda)$ and $P(\bm z, \lambda)$ as demonstrated by Proposition~\ref{propos2pz}.

\begin{proof}[Proof of Proposition~\ref{propos2pz}]
	{	We cast the proof in two steps: firstly prove the \emph{consistency}, i.e., $\lim_{m \rightarrow \infty} P(\bm z, \lambda) = 0$, and then derive its convergence rate.}
	
	\noindent {\bf Step 1: Consistency of $P(\bm z, \lambda)$}\\
	Based on the decomposition of the hypothesis error $P(\bm z, \lambda)$ in Proposition~\ref{prohyto}, due to $P(\bm z, \lambda) \geq 0$ for any $m \in \mathbb{N}$, we have $\lim_{m \rightarrow \infty} \big(P_1(\bm z, \lambda) + P_2(\bm z, \lambda)\big) \geq 0$ if the limits $\lim_{m \rightarrow \infty} P_1(\bm z, \lambda) $ and  $\lim_{m \rightarrow \infty} P_2(\bm z, \lambda) $ exist.
	Next we analyse $P_1(\bm z, \lambda) $ and $P_2(\bm z, \lambda) $, respectively.
	
	According to the expression of $P_1(\bm z, \lambda)$, it can be bounded by
	\begin{equation}\label{P1zlambdabound}
	\begin{split}
	P_1(\bm z, \lambda) &= \frac{2}{m} \bm y^{\!\top}\bm K \widetilde{\bm \alpha_{\bm z,\lambda}} = -\frac{2}{m^2 \tilde{\mu}} \bm y^{\!\top} \bm K \bm K^{\dag} \bm y \\
	&=  -\frac{2}{m^2 \tilde{\mu}} \bm y^{\! \top} \underbrace{\left( \sum_{i=1}^{p} \bm v_i \bm v_i^{\!\top} + \sum_{i=m-q+1}^{m} \bm v_i \bm v_i^{\!\top}  \right)}_{\triangleq \bm \Xi} \bm y \\
	&\leq \frac{2 \| \bm y \|_2 r}{\sqrt{m}} \,,
	\end{split}
	\end{equation}
	where $\bm v_i$ is the $i$-th column of the orthogonal matrix $\bm V$ from the eigenvalue decomposition $\bm K=\bm V \bm \Sigma \bm V^{\!\top}$.
	The inequality in the above equation holds by $\bm y^{\!\top} \bm \Xi \bm y = \bm y^{\!\top} (\bm I - \sum_{i=p+1}^{m-q}  \bm v_i \bm v_i^{\!\top}) \bm y \leq \bm y^{\!\top} \bm y$.
	%The above inequality indicates that $P_1(\bm z, \lambda)$ is a constant.
	
	According to the expression of $P_2(\bm z, \lambda)$, it can be rewritten as
	\begin{equation*}\label{p2f}
	\begin{split}
	P_2(\bm z, \lambda)  &= -\frac{2}{m^2} \bm y^{\!\top} \bm K (\lambda \bm I - {\mu} \bm K)^{\dag}  \bm y = \frac{2}{m^2} \bm y^{\!\top} \left( \sum_{i=1}^{p}  \frac{-\bm v_i \bm v_i^{\!\top}}{\frac{\lambda}{\sigma_i}-{\mu}} + \sum_{i=m-q+1}^{m}  \frac{-\bm v_i \bm v_i^{\!\top}}{\frac{\lambda}{\sigma_i}-{\mu}} \right) \bm y \,.
	\end{split}
	\end{equation*}
	Since the function $h(\sigma_i) = \frac{-1}{\frac{\lambda}{\sigma_i}-{\mu}}$ is an increasing function of $\sigma_i$, $P_2(\bm z, \lambda)$ can be bounded by
	\begin{equation}\label{P2zlambdabound}
	-\frac{2}{m^2} \cdot \frac{1}{\frac{\lambda}{\sigma_1}-{\mu}} \bm y^{\! \top} \bm \Xi \bm y \! \leq \! P_2(\bm z, \lambda) \! \leq \! -\frac{2}{m^2} \cdot \frac{1}{\frac{\lambda}{\sigma_{m-q+1}}-{\mu}} \bm y^{\! \top} \bm \Xi \bm y\,.
	\end{equation}
	By Proposition~\ref{promu}, plugging Eq.~\eqref{mu1for} into the above inequality, when $\eta + \gamma > 1$, we have
	\begin{equation*}
	\begin{split}
	\lim_{ m \rightarrow \infty} -\frac{2}{m^2} \cdot \frac{1}{\frac{\lambda}{\sigma_1}-{\mu}} \bm y^{\! \top} \bm \Xi \bm y 
	& = \lim_{ m \rightarrow \infty} -\frac{2}{m^2} \cdot \frac{1}{\frac{\lambda}{\sigma_{m-q+1}}-{\mu}} \bm y^{\! \top} \bm \Xi \bm y
	= \lim_{ m \rightarrow \infty} \frac{2 \bm y^{\! \top} \bm \Xi \bm y }{\sqrt{m}\| \bm y \|} \! \cdot \! \frac{r}{-\widetilde{c_a}} \\
	& \leq \lim_{ m \rightarrow \infty} \frac{2\| \bm y \|_2r}{-\widetilde{c_a}\sqrt{m}}< \infty \,,
	\end{split}
	\end{equation*}
	which holds by $\| \bm y \|_2 = \mathcal{O}(\sqrt{m})$ and $\widetilde{c_a} \neq 0$.
	According to the squeeze theorem, we conclude that the limit $\lim_{m \rightarrow \infty}P_2(\bm z, \lambda)$ exists.
	Because of $ P(\bm z, \lambda) \geq 0$, we have
	\begin{equation*}
	\begin{split}
	0 & \leq \lim_{m \rightarrow \infty} \Big(P_1(\bm z, \lambda) + P_2(\bm z, \lambda)\Big)
	\leq   \lim_{m \rightarrow \infty} \left[ \frac{2\| \bm y \|_2r}{ \sqrt{m}} \left(1-\frac{1}{\widetilde{c_a}} \right) \right] \,,
	\end{split}
	\end{equation*}
	which indicates that $1-\frac{1}{\widetilde{c_a}} \geq 0 $, i.e., $\widetilde{c_a} \geq 1$.
	Accordingly, the coefficient in Eq.~\eqref{mu1for} $\widetilde{c_a} \in [-1, 0) \bigcup (0,1]$ can be further improved to $\widetilde{c_a} =1$.
	In this case, it is obvious that $\lim_{m \rightarrow \infty} \Big(P_1(\bm z, \lambda) + P_2(\bm z, \lambda)\Big) = 0$ implies the consistency for $P(\bm z, \lambda)$.
	
	\noindent{\bf Step 2: Convergence rate of $P(\bm z, \lambda)$}\\
	Based on the consistency of $P(\bm z, \lambda)$, we derive its convergence rate as follows.
	For notational simplicity, we denote
	$\widetilde{c_e} \coloneqq  \left[ \frac{C_2}{c_m} + \widetilde{c_d} \left( \frac{C_1}{c_1} -  \frac{C_2}{c_m} \right) \right] $.
	Accordingly, by virtue of Eqs.~\eqref{P1zlambdabound},\eqref{P2zlambdabound} and Proposition~\ref{promu} for $\mu$, we have
	\begin{equation*}
	\begin{split}
	P(\bm z, \lambda) & = P_1(\bm z, \lambda) + P_2(\bm z, \lambda) \\
	& \leq \frac{2 \| \bm y \|_2 r}{\sqrt{m}} + \frac{2\| \bm y \|_2^2}{m^2} \cdot \frac{1}{\frac{\lambda}{\sigma_{m-q+1}}-{\mu}} \\
	& \leq \frac{2 \| \bm y \|_2 r}{\sqrt{m}} + \frac{2\| \bm y \|_2^2}{m}  \frac{1}{\frac{-1}{\sigma_{m-q+1}}m^{1-\gamma} - \frac{\| \bm y \|_2}{\sqrt{m}r} - \frac{ \widetilde{c_b} \| \bm y \|_2^2}{m^2 r^2} - \widetilde{c_e}  m^{-\gamma}} \\
	& \leq  \frac{2 \| \bm y \|_2 r}{\sqrt{m}} + \frac{2\| \bm y \|_2^2}{m} \Big( - \frac{\sqrt{m}r}{\| \bm y \|} \frac{-c_m}{C_2} m^{1-\gamma-\eta} + \frac{\| \bm y \|_2^2}{mr^2} m^{-1} + |\widetilde{c_e}| m^{-(\gamma+\eta)} \Big)   \\
	&  \leq \left( 2Mr + 2M^2 \Big( \frac{-c_m}{C_2} + \frac{M^2}{r^2} + \frac{C_1}{c_1} \Big)  \right) m^{-\Theta_1} \\
	& \triangleq \widetilde{C_1} m^{-\Theta_1} \,,
	\end{split}
	\end{equation*}
	where $\widetilde{C_1} \coloneqq  2Mr + 2M^2 \big( \frac{-c_m}{C_2} + \frac{M^2}{r^2} + \frac{C_1}{c_1} \big) $ and the power index is
	$ \Theta_1 = \min \big\{ 1, \gamma+\eta-1 \big\}$.
	Finally, we conclude the proof for Proposition~\ref{propos2pz}.
\end{proof}

\subsection{Estimate Sample Error}
\label{sec:se}
%Estimating sample error has been well studied in a series of research papers \cite{Wu2006Learning,cucker2007learning,Steinwart2009Optimal}.
%This part is not our contribution and thus we briefly introduce the estimation here for the sake of completeness.
The sample error can be decomposed into $ S(\bm z, \lambda)  =  S_1(\bm z, \lambda) +  S_2(\bm z, \lambda)$ with
\begin{equation*}
\begin{split}
& S_1(\bm z, \lambda) = \mathcal{E}\big(\pi_B(f_{\bm{z},\lambda})\big) -  \mathcal{E}(f_{\rho}) - \mathcal{E}_{\bm{z}}\big(\pi_B(f_{\bm{z},\lambda})\big) \! +  \! \mathcal{E}_{\bm{z}}(f_{\rho}) \,,\\
& S_2(\bm z, \lambda) = \Big\{ \mathcal{E}_{\bm{z}}\big(f_{\lambda}\big) - \mathcal{E}_{\bm{z}}(f_{\rho}) \Big\} - \Big\{ \mathcal{E}(f_{\lambda}) - \mathcal{E}(f_{\rho}) \Big\}\,.
\end{split}
\end{equation*}
%Before we bound the sample error, we need a lemma about the variance-expectation bound.
Note that $S_1(\bm z, \lambda)$ involves the samples $\bm z$.
Thus a uniform concentration inequality for a family of functions containing $f_{\bm z, \lambda}$ is needed to estimate $S_1(\bm z, \lambda)$.
Since we have $f_{\bm z, \lambda} \in \mathcal{B}_R$ defined by Eq.~\eqref{BRradius}, we shall bound $S_1$ by the following proposition with a properly chosen $R$.
{Considering that the estimates for $S_1(\bm z, \lambda)$ and $S_2(\bm z, \lambda)$ have been extensively investigated in \cite{Wu2006Learning,cucker2007learning,Shi2014Quantile},
	we directly present the corresponding results in Appendix~\ref{sec:appsamp} under the existence of $f_{\rho}$ in Assumption~\ref{assrho}, and the regularity condition on $\rho$ in Assumption~\ref{asscap}.}

\subsection{Derive Learning Rates}
\label{sec:learn}

Combining the bounds in Proposition~\ref{errdec},~\ref{propos2pz} and estimates for the sample error, the excess error $ \mathcal{E}\big(\pi_B(f_{\bm{z},\lambda})\big) - \mathcal{E}(f_{\rho})$ can be estimated.
Specifically, as aforementioned, algorithmically, the radius $r$ or $R$ in Eq.~\eqref{BRradius} is determined by cross validation in our experiments.
Theoretically, in our analysis, it is estimated by giving a bound for $\lambda \langle f_{\bm{z},\lambda} ,Tf_{\bm{z},\lambda} \rangle_{\mathcal{H_K}}$.
This is conducted by the iteration technique \cite{Wu2006Learning} to improve learning rates.
Under Assumption~\ref{assrho} to \ref{eigenassump}, the proof for learning rates in Theorem~\ref{theorem1ls} can be found in Appendix~\ref{sec:applearnrate}.

\section{Numerical Experiments}
\label{sec:exp}
%In this section, we report the results of our numerical experiments aimed at validating our theoretical results.
In this section, we validate our theoretical results by numerical experiments in the following three aspects.

\subsection{Eigenvalue assumption} 
Here we verify the justification of our eigenvalue decay assumption in Assumption~\ref{eigenassump} on four indefinite kernels, including
\begin{itemize}
	\item the spherical polynomial (SP) kernel \cite{pennington2015spherical}: $k_p(\bm x, \bm x') = (1+ \langle \bm x, \bm x' \rangle)^p$ with $p=10$ on the unit sphere is shift-invariant but indefinite.
	\item the \textit{TL1} kernel \cite{huang2017classification}:  $k_{\tau'}(\bm{x},\bm{x}')= \max\{\tau'-\|\bm{x}-\bm{x}' \|_1,0\}$ with $\tau' = 0.7d$ as suggested.
	\item the Delta-Gauss kernel \cite{oglic18a}: It is formulated as the difference of two Gaussian kernels, i.e., $k\left(\bm x, \bm x' \right)=\exp \left(-\left\|\bm x - \bm x' \right\|^{2}/ \tau_{1} \right)-\exp \left(-\left\|\bm x - \bm x' \right\|^{2}/ \tau_{2} \right)$ with $\tau_1 = 1$ and $\tau_2=0.1$.
	\item the \textit{log} kernel \cite{boughorbel2005conditionally}: $k(\bm x,\bm x') = - \log(1+\| \bm x - \bm x'\|)$.
\end{itemize}
Here the Delta-Gaussian kernel \cite{oglic18a} and the log kernel \cite{boughorbel2005conditionally} are associated with RKKS while the SP and \verb"TL1" kernels have not been proved as reproducing kernels in RKKS. It is still an open problem to verify that a kernel admits the decomposition \cite{liu2020fast}.
The Delta-Gaussian kernel is defined as the difference of two Gaussian kernels, and thus it is clear that $\sigma_1$ and $\sigma_m$ follow with the exponential decay in the same rate, i.e., $\eta_1 = \eta_2$. 
For the \verb"log" kernel \cite{boughorbel2005conditionally} is a conditionally positive definite kernel of order one\footnote{The order in conditionally positive definite kernels is an important concept, refer to \cite{wendland2004scattered} for details.} associated with RKKS.
According to Theorem 8.5 in \cite{wendland2004scattered}, the kernel matrix induced by this kernel has only one negative eigenvalue.
Further, we can conclude that the only one negative eigenvalue admits $\sigma_m = -\sum_{i=1}^{m-1} \sigma_i$ because of $k(\bm 0)=\frac{1}{n}\mathrm{tr}(\bm K)= \frac{1}{n}\sum_{i=1}^n \lambda_i = 0$, which implies $\eta_2 > \eta_1$.
%Besides, we will also verify the eigenvalue assumption in Definition~\ref{eigenassump} in our numerical experiments. 

Figure~\ref{eigkernels} experimentally shows eigenvalue distributions of the above four indefinite kernels on the \emph{monks3} dataset\footnote{\url{https://archive.ics.uci.edu/ml/datasets.html}}.
It can be found that our eigenvalue assumption:  $\sigma_1 \geq c_1 m^{\eta_1}$ ($c_1>0$, $\eta_1 > 0$) and $\sigma_m \leq c_m m^{\eta_2}$ ($c_m<0$, $\eta_2>0$) in Definition~\ref{eigenassump} is reasonable.
Specifically, our experiments on the \verb"log" kernel verify that it has only one negative eigenvalue admitting $\sigma_m = -\sum_{i=1}^{m-1} \sigma_i$.
Note that although the SP and \verb"TL1" kernels have not been proved as reproducing kernels in RKKS, our eigenvalue assumption still covers them, which demonstrates the feasibility of our assumption.

\begin{figure*}[t]
	\centering
	\subfigure[\emph{SP kernel}]{\label{spk}
		\includegraphics[width=0.32\textwidth]{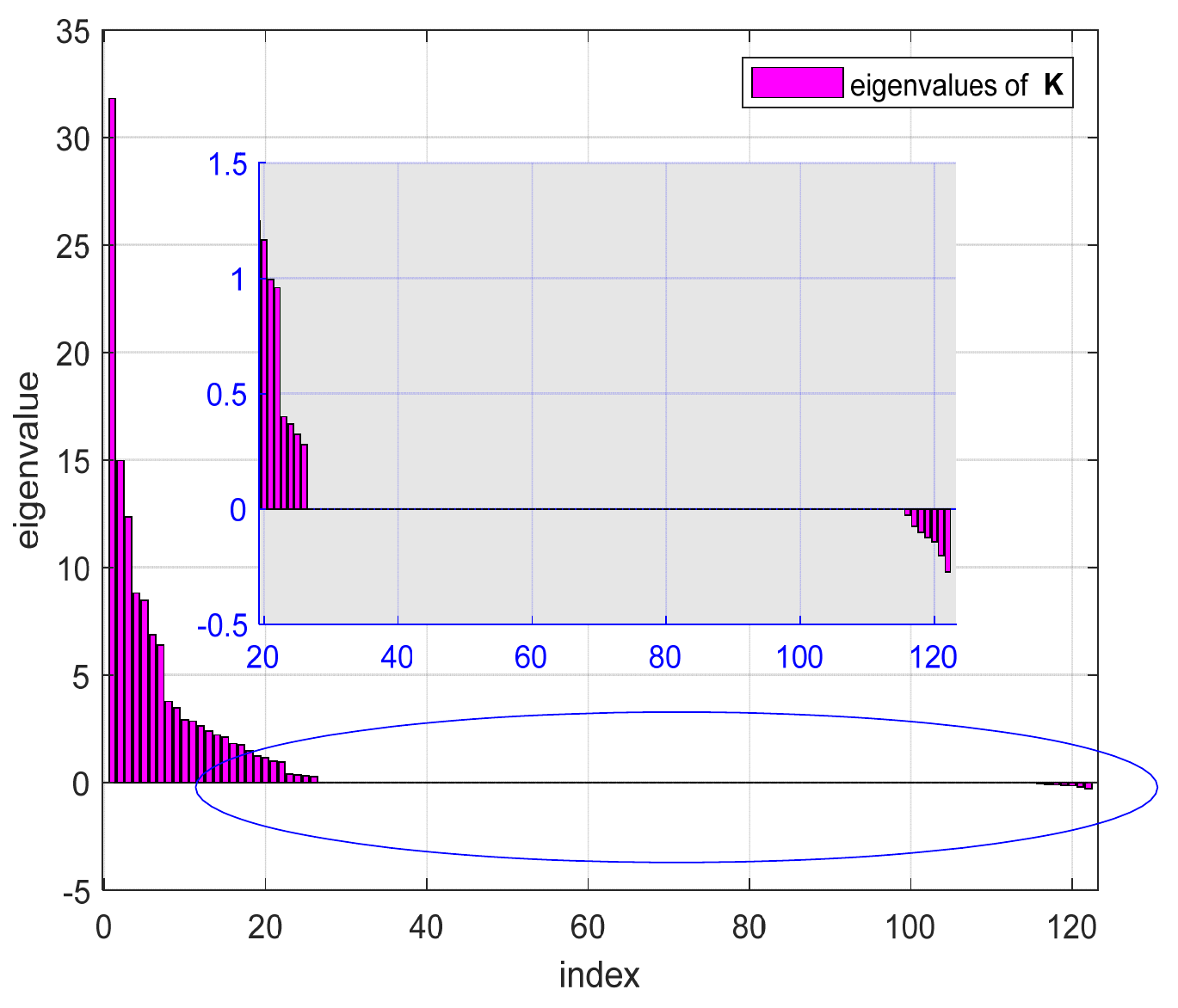}}
	\hspace{0.5cm}
	\subfigure[\emph{TL1 kernel}]{\label{tl1acc}
		\includegraphics[width=0.34\textwidth]{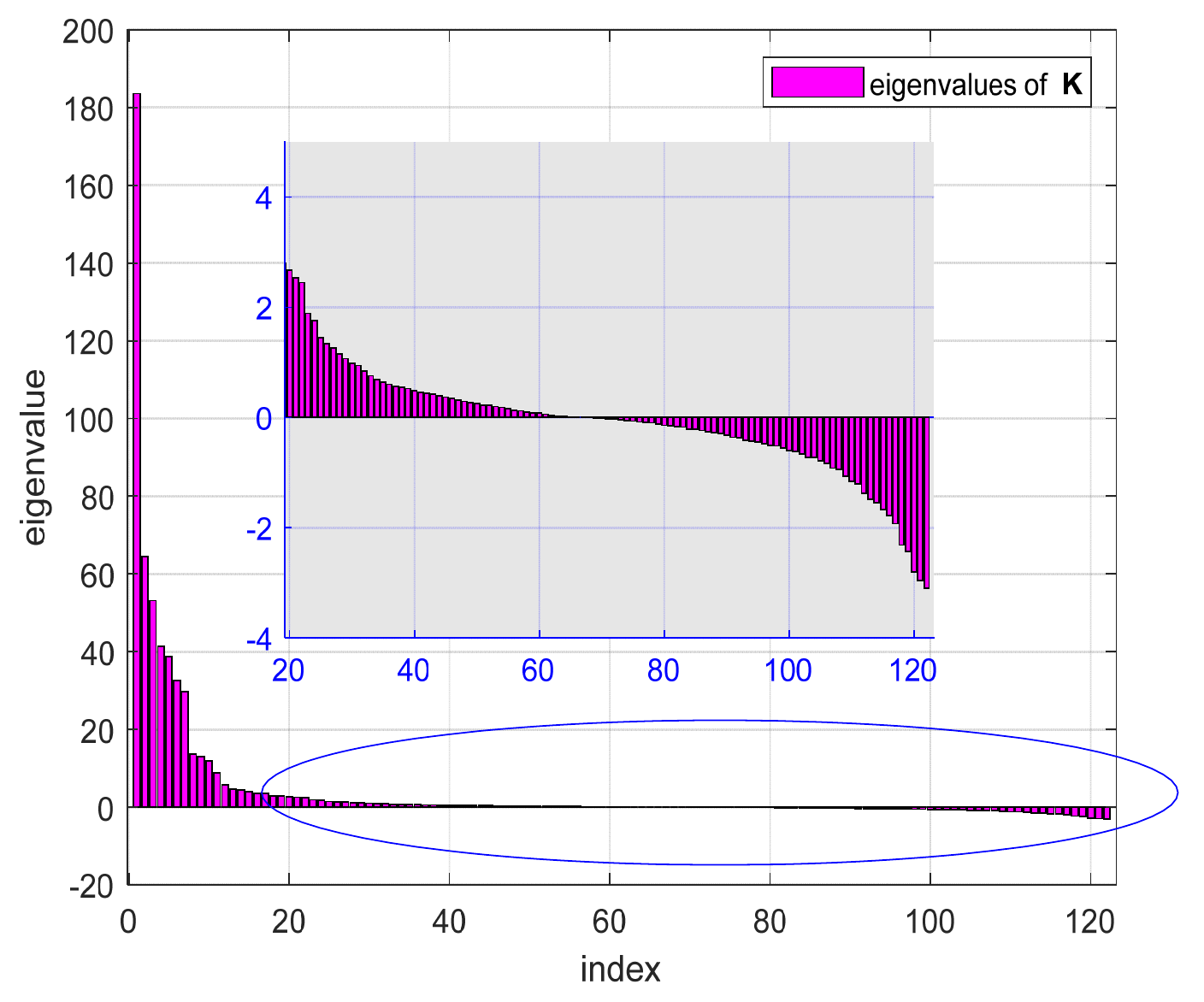}}
	\subfigure[\emph{Delta-Gauss kernel}]{\label{deltg}
		\includegraphics[width=0.32\textwidth]{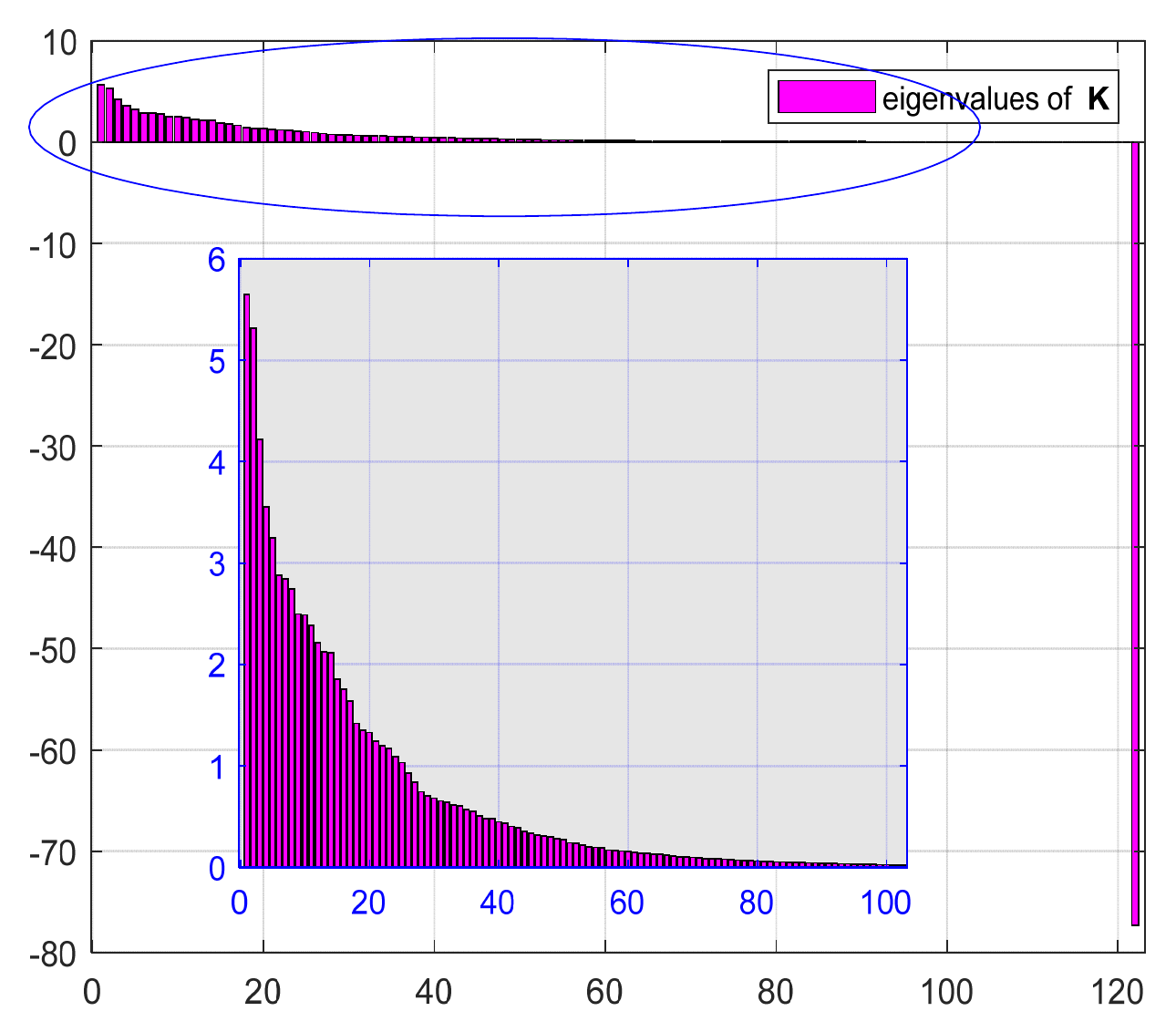}}
	\hspace{0.5cm}
	\subfigure[\emph{log kernel}]{\label{logk}
		\includegraphics[width=0.32\textwidth]{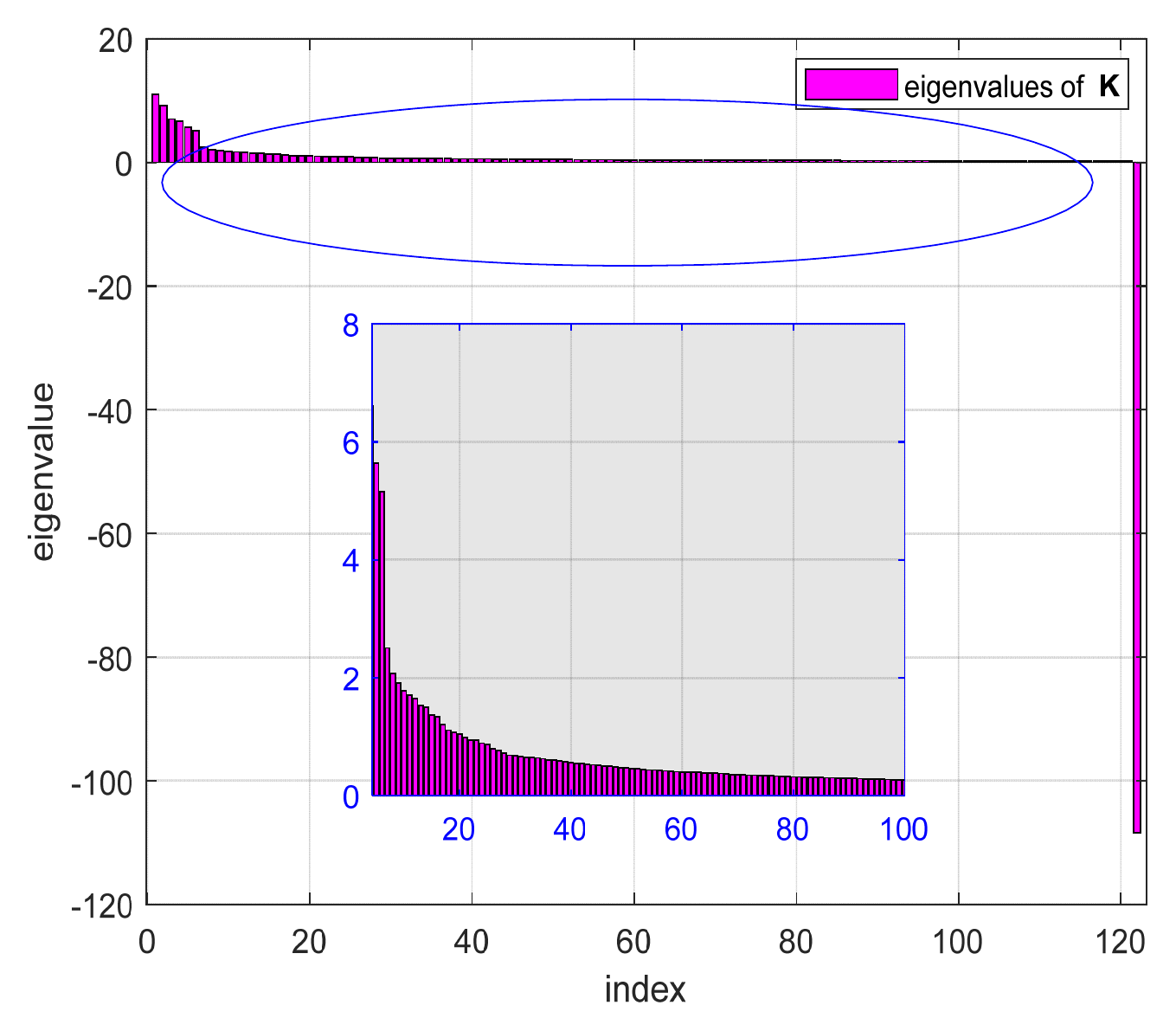}}
	\caption{Eigenvalue distribution of kernel matrices generated by various indefinite kernels on the \emph{monks3} dataset.}	\label{eigkernels}
\end{figure*}

\begin{figure}[t]
	\centering
	\subfigure[\emph{Delta-Gauss kernel}]{\label{errgauss}
		\includegraphics[width=0.32\textwidth]{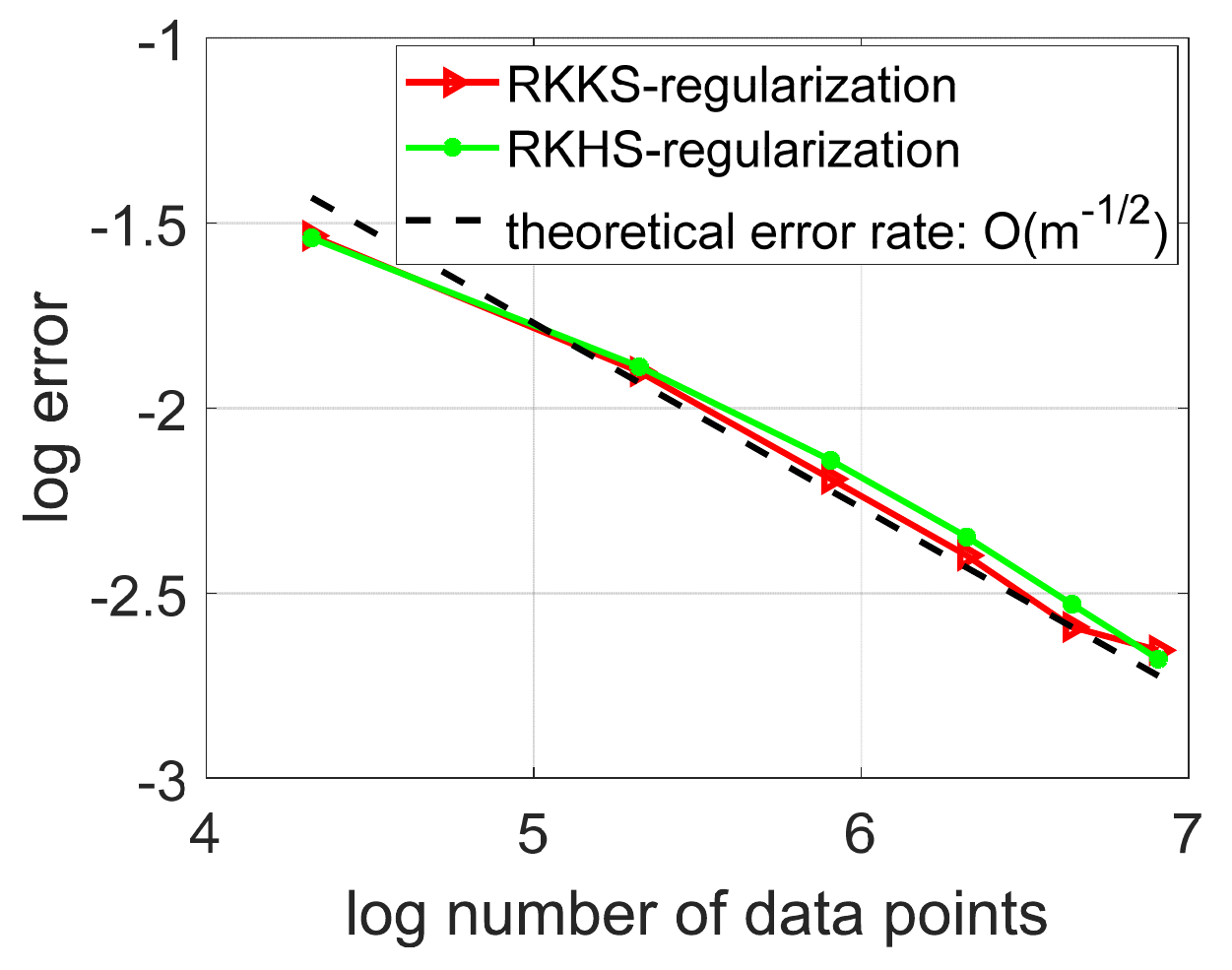}}
	\hspace{0.5cm}
	\subfigure[\emph{log kernel}]{\label{errlog}
		\includegraphics[width=0.32\textwidth]{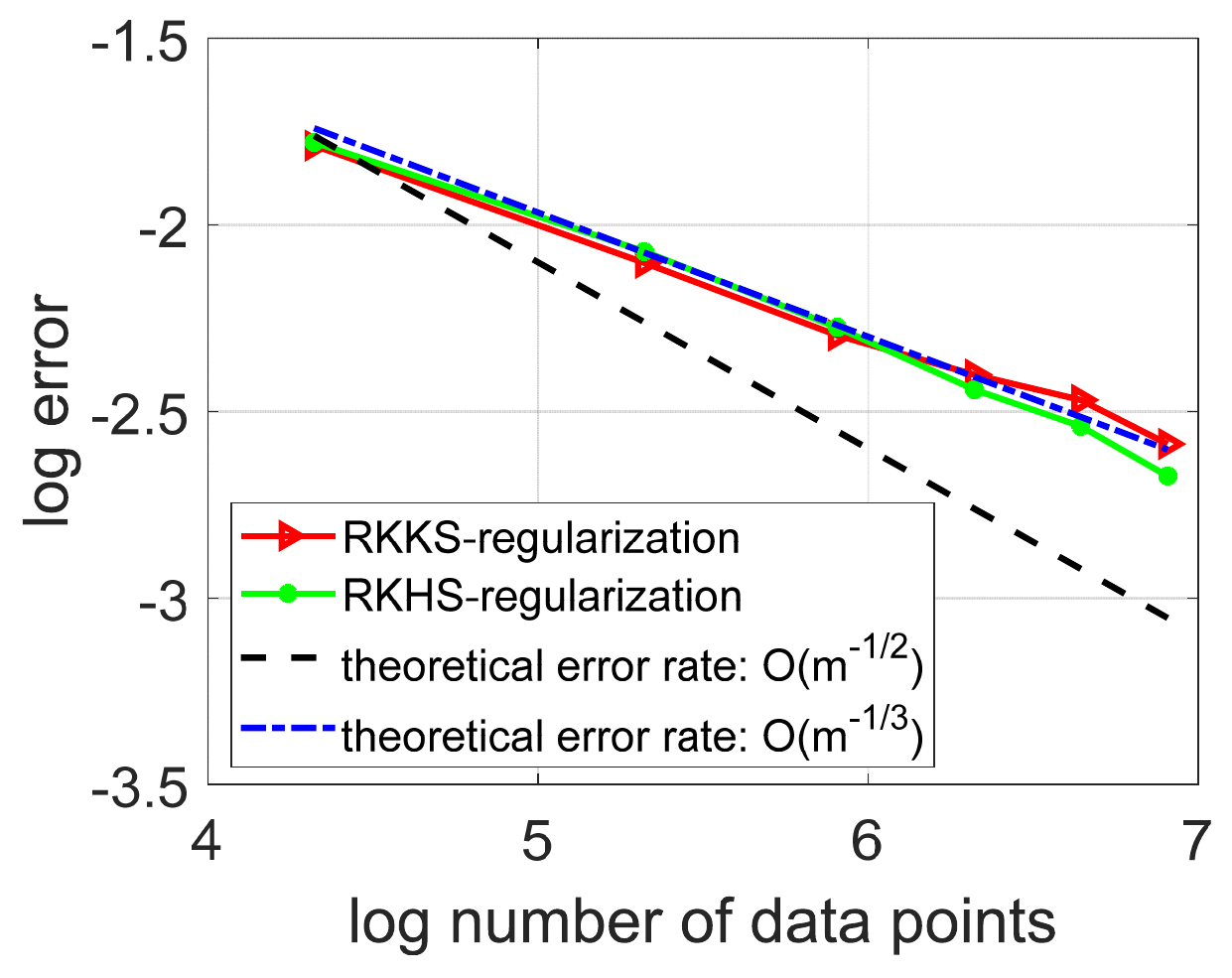}}
	\caption{The log-log plot of the theoretical and observed risk convergence rates averaged on 100 trials.}\label{err-rate}
	\vspace{-0.4cm}
\end{figure}

\subsection{Empirical validations of derived learning rates}
Here we verify the derived convergence rates on the \emph{monks3} dataset effected by different indefinite kernels.
In our experiment, we choose $\lambda \coloneqq  1/m$ and two indefinite kernels including the Delta-Gauss kernel and the \verb"log" kernel on \emph{monks3} to study in what degree they would effect the learning rates.
Since the selected two kernels are $C^{\infty}(X \times X)$, $s$ can be arbitrarily small.
%And also, their eigenvalues decay fast as shown in Figure~\ref{eigkernels}, so $\eta$ often satisfies $\eta \geq 1/2$.
In this case, by Theorem~\ref{theorem1ls} and Corollary~\ref{corolr}, the learning rate of problem~\eqref{fzrs} with the RKKS regularizer $\langle f,f\rangle_{\mathcal{H_K}}$ or the RKHS regularizer $\| f\|_{{\mathcal{H_{\bar K}}}}^2$ is close to $\min \{ \beta, \eta\}$.
Here the two parameters $\beta$ and $\eta$ indicate the approximation ability for $f_{\rho}$ and the size of RKKS by different indefinite kernels, and thus they will influence the expected risk rate.
%It can be found that, the eigenvalues of these two kernel matrix decay fast as shown in Figure~\ref{eigkernels}, and thus $\eta$ is large.
%So the derived learning rate is mainly effected by $\beta$.
%This is reasonable since the $\mathcal{H_K}$ spanned by various indefinite kernels has different approximation ability for $f_{\rho}$, i.e., $\beta$ is different.
Figure~\ref{errgauss} shows the observed learning rate associated with the Delta-Gauss kernel is $\mathcal{O}(1/\sqrt{m})$, while the excess risk associated with the \verb"log" kernel converges at $\mathcal{O}(m^{-1/3})$ in Figure~\ref{errlog}.
Hence, Figure~\ref{err-rate} demonstrates this difference that the excess risk of problem~\eqref{fzrs} with the Delta-Gauss kernel converges faster than that with the \verb"log" kernel.
This is reasonable and demonstrated by Theorem~\ref{theorem1ls}, i.e., different $\mathcal{H_K}$ spanned by various indefinite kernels lead to different convergence rates due to their different approximation ability for $f_{\rho}$.

The above experiments validate the rationality of our eigenvalue assumption and the consistency with theoretical results.

\section{Conclusion}
\label{sec:conclusion}
In this paper, we provide approximation analysis of the least squares problem associated with the $\langle f,f\rangle_{\mathcal{H_K}}$ regularization scheme in RKKS.
For this non-convex problem with the bounded hyper-sphere constraint, we can get an attainable optimal solution, which makes it possible to conduct approximation analysis in RKKS.
Accordingly, we start the analysis from the learning problem that has an analytical solution, and thus obtain the first-step to understand the learning behavior in RKKS.
Our analysis and experimental validation bridge the gap between the regularized risk minimization problem in RKHS and RKKS.

\section*{Acknowledgement}
The research leading to these results has received funding from the European Research Council under the European Union's Horizon 2020 research and innovation program / ERC Advanced Grant E-DUALITY (787960). This paper reflects only the authors' views and the Union is not liable for any use that may be made of the contained information.
This work was supported in part by Research Council KU Leuven: Optimization frameworks for deep kernel machines C14/18/068; Flemish Government: FWO projects: GOA4917N (Deep Restricted Kernel Machines: Methods and Foundations), PhD/Postdoc grant. This research received funding from the Flemish Government (AI Research Program). 
This work was supported in part by Ford KU Leuven Research Alliance Project KUL0076 (Stability analysis and performance improvement of deep reinforcement learning algorithms), EU H2020 ICT-48 Network TAILOR (Foundations of Trustworthy AI - Integrating Reasoning, Learning and Optimization), Leuven.AI Institute; and in part by the National Natural Science Foundation of China (Grants No. 61572315, 61876107, 11631015, 61977046), in part by the National Key Research and Development Project (No. 2018AAA0100702),in part by Program of Shanghai Subject Chief Scientist (Project No.18XD1400700). Fanghui Liu and Lei Shi contributed equally to this work.

\newcommand{\etalchar}[1]{$^{#1}$}
\providecommand{\bysame}{\leavevmode\hbox to3em{\hrulefill}\thinspace}
\providecommand{\MR}{\relax\ifhmode\unskip\space\fi MR }
% \MRhref is called by the amsart/book/proc definition of \MR.
\providecommand{\MRhref}[2]{%
	\href{http://www.ams.org/mathscinet-getitem?mr=#1}{#2}
}
\providecommand{\href}[2]{#2}

\clearpage

\appendix

\section{Proof for the Sample Error}
\label{sec:appsamp}
The asymptotic behaviors of $S_1(\bm z, \lambda)$ and $S_2(\bm z, \lambda)$ are usually illustrated by
the convergence of the empirical mean $\frac{1}{m}\sum_{i=1}^m\xi_{i}$
to its expectation $\mathbb{E}\xi$, where
$\left\{\xi_{i}\right\}_{i=1}^m$ are independent random variables on
$(Z,\rho)$ defined as
\begin{equation*}\label{randomvariables}
\xi(\bm x, y) \coloneqq \big(y-f_{\lambda}(\bm x)\big)^2 - \big(y-f_{\rho}(\bm x)\big)^2\,.
\end{equation*}
For $R\geq 1$, denote
\begin{equation*}\label{set}
\mathscr{W}(R)=\left\{{\bm z}\in Z^{m}: \sqrt{\langle f_{\bm{z},\lambda} ,Tf_{\bm{z},\lambda} \rangle_{\mathcal{H_K}}} \leq R\right\}.
\end{equation*}

\begin{lemma}\label{lemma4}
	If $\xi$ is a symmetric real-valued function on $X \times Y$ with mean $\mathbb{E}(\xi)$. Assume that $\mathbb{E}(\xi) \geq 0$, $|\xi - \mathbb{E}\xi|\leq T$ almost surely and $\mathbb{E}\xi^2 \leq c'_1 (\mathbb{E}\xi)^{\theta}$ for some $0 \leq \theta \leq 1$ and $c'_1 \geq 0$, $T \geq 0$.
	Then for every $\epsilon>0$ there holds
	\begin{equation*}\label{leamm4eq}
	\mathop{\rm Prob} \left\{
	\frac{\frac{1}{m}\sum_{i=1}^m
		\xi(\bm z_i) \!-\! \mathbb{E}\xi}{{\sqrt{(\mathbb{E}\xi)^{\theta}+\epsilon^{\theta}}}}  \geq
	\epsilon^{1-\frac{\theta}{2}}  \right\}  \leq
	\exp \!   \left\{ \frac{-m\epsilon^{2-\theta}}{2c'_1+\frac{2}{3}T \epsilon^{1-\theta}} \!\right\}\!.
	\end{equation*}
\end{lemma}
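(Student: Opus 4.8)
The plan is to deduce this from the classical one-sided Bernstein inequality applied to the i.i.d.\ sequence $\{\xi(\bm z_i)\}_{i=1}^m$, and then to reconcile the two exponents by an elementary estimate on powers of $\epsilon$ and $\mathbb{E}\xi$. Recall the Bernstein bound: if $\eta_1,\dots,\eta_m$ are i.i.d.\ copies of $\eta$ with $|\eta-\mathbb{E}\eta|\le T$ almost surely and $\mathrm{Var}(\eta)=\sigma^2$, then for every $t>0$,
\begin{equation*}
\mathrm{Prob}\Big\{\tfrac1m\textstyle\sum_{i=1}^m\eta_i-\mathbb{E}\eta\ge t\Big\}\le\exp\Big(-\frac{mt^2}{2\sigma^2+\tfrac23Tt}\Big).
\end{equation*}
I would apply this with $\eta=\xi$, using $\mathrm{Var}(\xi)\le\mathbb{E}\xi^2\le c'_1(\mathbb{E}\xi)^\theta$; the symmetry of $\xi$ and $\mathbb{E}\xi\ge0$ only serve to make the statement meaningful, and the degenerate case $\mathbb{E}\xi=0$ forces $\xi=0$ almost surely (when $\theta=0$ one reads $(\mathbb{E}\xi)^0=1$ and the claim is the plain Bernstein bound).

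The choice that couples the two forms is
\begin{equation*}
t=\epsilon^{1-\theta/2}\sqrt{(\mathbb{E}\xi)^\theta+\epsilon^\theta},
\end{equation*}
since then the event $\big\{(\tfrac1m\sum_i\xi(\bm z_i)-\mathbb{E}\xi)/\sqrt{(\mathbb{E}\xi)^\theta+\epsilon^\theta}\ge\epsilon^{1-\theta/2}\big\}$ is exactly $\big\{\tfrac1m\sum_i\xi(\bm z_i)-\mathbb{E}\xi\ge t\big\}$, and Bernstein gives the bound $\exp\big(-mt^2/(2\sigma^2+\tfrac23Tt)\big)$. It then remains to show this exponent dominates the asserted one, i.e.
\begin{equation*}
\frac{t^2}{2c'_1(\mathbb{E}\xi)^\theta+\tfrac23Tt}\ \ge\ \frac{\epsilon^{2-\theta}}{2c'_1+\tfrac23T\epsilon^{1-\theta}}.
\end{equation*}
Since $t^2=\epsilon^{2-\theta}\big((\mathbb{E}\xi)^\theta+\epsilon^\theta\big)$, cross-multiplying, cancelling $\epsilon^{2-\theta}$, and dropping the matching term $2c'_1(\mathbb{E}\xi)^\theta$ from both sides reduces this to $\epsilon^{1-\theta}(\mathbb{E}\xi)^\theta+\epsilon\ge\epsilon^{1-\theta/2}\sqrt{(\mathbb{E}\xi)^\theta+\epsilon^\theta}$; dividing by $\epsilon^{1-\theta/2}$ and writing $a=(\mathbb{E}\xi)^{\theta/2}$, $b=\epsilon^{\theta/2}$, this is $(a^2+b^2)/b\ge\sqrt{a^2+b^2}$, i.e. $\sqrt{a^2+b^2}\ge b$, which is trivial.

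There is no genuine obstacle here; the whole proof is the bookkeeping of the exponents so that the Bernstein exponent is correctly lower-bounded by the stated quantity. The only points needing care are the degenerate cases ($\theta=0$ and $\mathbb{E}\xi=0$, both handled above), the use of $\mathrm{Var}(\xi)\le\mathbb{E}\xi^2$ rather than an identity (only the second moment is controlled), and the harmless case $T=0$ where the last step is immediate. This is the "ratio-probability" version of Bernstein's inequality that underlies the sample-error estimates in \cite{Wu2006Learning,cucker2007learning,Shi2014Quantile}.
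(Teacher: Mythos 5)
Your proof is correct: the paper states this lemma without proof (it is the standard one-sided Bernstein ``ratio-probability'' inequality imported from \cite{Wu2006Learning,cucker2007learning}), and your derivation --- applying Bernstein with $t=\epsilon^{1-\theta/2}\sqrt{(\mathbb{E}\xi)^{\theta}+\epsilon^{\theta}}$, bounding $\mathrm{Var}(\xi)\le c_1'(\mathbb{E}\xi)^{\theta}$, and reducing the comparison of exponents to $\sqrt{a^2+b^2}\ge b$ --- is exactly the standard argument behind that cited result. The bookkeeping of the degenerate cases ($T=0$, $\theta=0$, $\mathbb{E}\xi=0$) is also handled correctly, so there is nothing to add.
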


Now we can bound $S_2(\bm z, \lambda)$ by the following proposition.
\begin{proposition}\label{propos2}
	Suppose that $|f_{\rho}(\bm x)|\leq M^*$ with $M^* \geq 1$, for any $0 < \delta < 1$, there exists a subset of $Z_1$ of $Z^{m}$
	with confidence at least $1-\delta/2$, such that for any $\forall {\bm z} \in Z_1$
	\begin{equation*}\label{bound7}
	S_2(\bm z, \lambda)
	\leq \frac{1}{2}{D}(\lambda)
	+ \frac{1}{m} \bigg(\kappa \sqrt{\frac{D(\lambda)}{\lambda}} + M^*+ 12\bigg) \log\frac{2}{\delta}\,.
	\end{equation*}
\end{proposition}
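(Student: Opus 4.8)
The quantity $S_2(\bm z,\lambda)$ is a pure deviation of an empirical mean from its expectation: with the random variable $\xi(\bm x,y)=(y-f_\lambda(\bm x))^2-(y-f_\rho(\bm x))^2$ introduced above, $\mathbb{E}\xi=\mathcal{E}(f_\lambda)-\mathcal{E}(f_\rho)$ and $\frac1m\sum_{i=1}^m\xi(\bm z_i)=\mathcal{E}_{\bm z}(f_\lambda)-\mathcal{E}_{\bm z}(f_\rho)$, so that $S_2(\bm z,\lambda)=\frac1m\sum_{i=1}^m\xi(\bm z_i)-\mathbb{E}\xi$. The plan is to apply the one-sided Bernstein bound of Lemma~\ref{lemma4} to $\xi$ with variance exponent $\theta=1$, choose its free parameter $\epsilon$ so the failure probability is exactly $\delta/2$, and absorb the resulting cross term into a fraction of the regularization error $D(\lambda)$.

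First I would assemble the three ingredients Lemma~\ref{lemma4} needs. (i) Since $f_\rho$ minimizes $\mathcal{E}$, $\mathbb{E}\xi=\|f_\lambda-f_\rho\|^2_{L_{\rho_X}^{2}}\ge 0$; and since $\langle f_\lambda,Tf_\lambda\rangle_{\mathcal{H_K}}\ge 0$ by positivity of $T$ in the Kre\u{\i}n sense (Definition~\ref{defemkernel}), the optimality defining $f_\lambda$ gives both $\mathbb{E}\xi\le D(\lambda)$ and $\lambda\langle f_\lambda,Tf_\lambda\rangle_{\mathcal{H_K}}\le D(\lambda)$. (ii) Combining the latter with the reproducing-kernel estimate~\eqref{fnormdiff} yields the pointwise control $\|f_\lambda\|_\infty\le\kappa\sqrt{D(\lambda)/\lambda}$. (iii) Writing $\xi=(f_\lambda-f_\rho)(f_\lambda+f_\rho-2y)$ and using the bounds $|y|\le M^*$ and $|f_\rho|\le M^*$, one gets an almost-sure bound $|\xi-\mathbb{E}\xi|\le T$ with $T$ a fixed multiple of $\big(\kappa\sqrt{D(\lambda)/\lambda}+M^*\big)^2$, and the variance inequality $\mathbb{E}\xi^2\le\|f_\lambda+f_\rho-2y\|_\infty^2\,\mathbb{E}(f_\lambda-f_\rho)^2=c_1'\,\mathbb{E}\xi$, so the hypothesis of Lemma~\ref{lemma4} holds with $\theta=1$ and $c_1'$ of the same order as $T$.

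Feeding these into Lemma~\ref{lemma4} with $\theta=1$, on an event of probability at least $1-\exp\{-m\epsilon/(2c_1'+\tfrac23T)\}$ one has, for every $\epsilon>0$,
\[
S_2(\bm z,\lambda)\ \le\ \sqrt{\epsilon}\,\sqrt{\mathbb{E}\xi+\epsilon}\ \le\ \tfrac12\,\mathbb{E}\xi+\tfrac32\,\epsilon\ \le\ \tfrac12\,D(\lambda)+\tfrac32\,\epsilon,
\]
the middle inequality being $\sqrt{ab}\le\tfrac12(a+b)$ applied to $\sqrt{\epsilon\,\mathbb{E}\xi}$. Now take $\epsilon=(2c_1'+\tfrac23T)\,m^{-1}\log\tfrac2\delta$, which makes the failure probability equal $\delta/2$; writing $Z_1$ for the corresponding event produces a bound of the asserted shape $S_2(\bm z,\lambda)\le\tfrac12 D(\lambda)+\tfrac{c}{m}\log\tfrac2\delta$ for all $\bm z\in Z_1$, and the last step is the consolidation of constants so that $c$ becomes $\kappa\sqrt{D(\lambda)/\lambda}+M^*+12$.

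The probabilistic core — Lemma~\ref{lemma4} with $\theta=1$ — is routine; the work lies in the surrounding estimates. The pointwise control $\|f_\lambda\|_\infty\le\kappa\sqrt{D(\lambda)/\lambda}$ in step (ii) and the fact that $f_\lambda$ may be treated as a fixed reference function both require care because the regularizer $\langle f,Tf\rangle_{\mathcal{H_K}}$ is built from the sample; and one must carry $T$ and $c_1'$ in a shape in which, after the arithmetic--geometric-mean absorption of $\sqrt{\epsilon\,\mathbb{E}\xi}$ into $\tfrac12 D(\lambda)$, the residual $m^{-1}$-term collapses to the stated linear expression $\kappa\sqrt{D(\lambda)/\lambda}+M^*+12$ rather than a quadratic one — that final consolidation is the fussiest step.
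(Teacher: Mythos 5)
Your proposal follows the paper's proof essentially verbatim: the same random variable $\xi(\bm x,y)=(y-f_\lambda(\bm x))^2-(y-f_\rho(\bm x))^2$, the same one-sided Bernstein bound (Lemma~\ref{lemma4}) applied with $\theta=1$, the same sup-norm control $\|f_\lambda\|_\infty\le\kappa\sqrt{D(\lambda)/\lambda}$ obtained from Eq.~\eqref{fnormdiff} and Eq.~\eqref{Dlamdadef}, the same arithmetic--geometric absorption of the cross term into $\tfrac12\mathbb{E}\xi\le\tfrac12 D(\lambda)$, and the same choice of the free parameter to make the failure probability $\delta/2$. The only divergence is in the bookkeeping of $T$ and $c_1'$: the paper simply asserts $|\xi-\mathbb{E}\xi|\le\kappa\sqrt{D(\lambda)/\lambda}+M^*$ and $\mathbb{E}\xi^2\le 4\mathbb{E}\xi$, so that $T+3c_1'$ is exactly the stated constant $\kappa\sqrt{D(\lambda)/\lambda}+M^*+12$, whereas your honest quadratic bound on $|\xi|$ (it is a difference of squares, hence a product of two factors each of size $\kappa\sqrt{D(\lambda)/\lambda}+M^*$) would give a larger constant --- the ``fussy consolidation'' you rightly flag is in fact glossed over in the paper rather than carried out.
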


\begin{proof}
	From the definition of $f_{\lambda}$ in Proposition~\ref{errdec}, combining Eq.~\eqref{fnormdiff} and Eq.~\eqref{Dlamdadef}, we have
	\begin{equation}\label{flambound}
	\| f_{\lambda}\|_{\infty} \leq \kappa \sqrt{\langle f_{\lambda} ,Tf_{\lambda} \rangle_{\mathcal{H_K}}} \leq \kappa \sqrt{\frac{D(\lambda)}{\lambda}} \leq \kappa \sqrt{C_0}\lambda^{\frac{\beta-1}{2}}\,,
	\end{equation}
	which leads to $\|f_{\lambda} \|_{\infty} \leq \kappa \sqrt{\frac{D(\lambda)}{\lambda}}$.
	The first equality holds because the reproducing kernel $k_+  + k_-$ associated with ${\mathcal{H_{\bar K}}}$ is the square root of the limiting kernel in \cite{Guo2017Optimal} associated with the empirical covariance operator $T$.
	Due to $f_{\rho}(\bm x)$ contained in $[-M^*,M^*]$, we can get
	\begin{equation*}
	\big|\xi - \mathbb{E}(\xi) \big| \leq   \kappa \sqrt{\frac{D(\lambda)}{\lambda}}+M^*\,.
	\end{equation*}
	For least squared loss, $\mathbb{E}(\xi^2) \leq 4 \mathbb{E}(\xi)$ indicates $c'_1=4$ and $\theta = 1$.
	Applying Lemma \ref{lemma4}, there exists a subset $Z_1$ of $Z^m$ with confidence $1-\delta/2$, we have
	\begin{equation*}\label{bound6}
	\begin{split}
	\frac{1}{m} \sum_{i=1}^{m}\xi(\bm z_i) - \mathbb{E} \xi \leq
	\sqrt{(\mathbb{E}\xi)^{\theta}+\epsilon^{\theta}}
	\epsilon^{1-\frac{\theta}{2}} \leq \frac{1}{2}\mathbb{E}\xi +\frac{3}{2}\epsilon,
	\end{split}
	\end{equation*}
	Then, we obtain
	\begin{equation*}
	\begin{split}
	\frac{1}{m}\sum_{i=1}^m \xi(\bm z_{i})-\mathbb{E}\xi
	&\leq
	\frac{\theta}{2} \! \Big\{ \mathcal{E}(f_{\lambda}) \!-\! \mathcal{E}(f_{\rho}) \Big\}
	\!+\! \frac{ T + 3c'_1}{m} \log\frac{2}{\delta} \\
	& \leq \frac{1}{2}{D}(\lambda)
	\!+\! \frac{\kappa \sqrt{\frac{D(\lambda)}{\lambda}} \!+\! M^*\!+\! 12}{m} \log\frac{2}{\delta}\,,
	\end{split}
	\end{equation*}
	which concludes the proof.
\end{proof}

In the next, we attempt to bound $S_1(\bm z, \lambda)$ with respect to the samples $\bm z$.
Thus a uniform concentration inequality for a family of functions containing $f_{\bm z, \lambda}$ is needed to estimate $S_1$.
Since we have $f_{\bm z, \lambda} \in \mathcal{B}_R$, which is defined by Eq.~\eqref{BRradius}, we shall bound $S_1$ by the following proposition with a properly chosen $R$.
\begin{proposition}\label{propos1}
	Suppose that $|f_{\rho}(\bm x)|\leq M^*$ with $M^* \geq 1$ in Assumption~\ref{assrho}, and $\rho$ satisfies the regularity condition in Assumption~\ref{asscap}, for any $0 < \delta < 1$, $R \geq 1$, $B > 0$, there exists a subset $Z_2$ of $Z^{m}$
	with confidence at least $1-\delta/2$, such that for any $\bm z \in \mathscr{W}(R) \cap Z_2$,
	\begin{equation*}\label{boundS1}
	\begin{split}
	S_1(\bm z, \lambda)
	&\leq \! \frac{136(M^*+B)}{m}\log\frac{2}{\delta} \!+\!  \frac{1}{2}\Big\{ \! \mathcal{E}\big(\pi_B(f_{\bm{z},\lambda})\big)
	\!-\! \mathcal{E}(f_{\rho}) \! \Big\} + 144C_s(M^*+B)m^{-\frac{1}{1+s}}R^{\frac{s}{1+s}}  \,.
	\end{split}
	\end{equation*}
\end{proposition}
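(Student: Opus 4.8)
The plan is to recognize $S_1(\bm z,\lambda)$ as a one-sided deviation of an empirical mean from its expectation and to control it uniformly over a \emph{data-independent} function class by combining the ratio-type Bernstein inequality of Lemma~\ref{lemma4} with the capacity bound of Assumption~\ref{asscap}. For any measurable $g:X\to\mathbb{R}$ I introduce the random variable
\[
\xi_g(\bm x,y) \coloneqq \big(y-g(\bm x)\big)^2 - \big(y-f_\rho(\bm x)\big)^2 ,
\]
so that $\mathbb{E}\xi_g=\mathcal{E}(g)-\mathcal{E}(f_\rho)$, $\tfrac1m\sum_{i=1}^m\xi_g(\bm z_i)=\mathcal{E}_{\bm z}(g)-\mathcal{E}_{\bm z}(f_\rho)$, and hence $S_1(\bm z,\lambda)=\mathbb{E}\xi_g-\tfrac1m\sum_i\xi_g(\bm z_i)$ with $g=\pi_B(f_{\bm{z},\lambda})$. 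When $\bm z\in\mathscr{W}(R)$ we have $f_{\bm z,\lambda}\in\mathcal{B}_R$ (Eq.~\eqref{BRradius}), so $g$ lies in the fixed class $\mathcal{G}_R\coloneqq\{\pi_B(f):f\in\mathcal{B}_R\}$. Since $|g|\le B$, $|f_\rho|\le M^*$ and the outputs are bounded, one checks that $\xi_g$ satisfies the boundedness hypothesis $|\xi_g-\mathbb{E}\xi_g|\le c(M^*+B)$ and the variance--mean inequality $\mathbb{E}\xi_g^2\le c'(M^*+B)\,\mathbb{E}\xi_g$ of the squared loss, so Lemma~\ref{lemma4} applies with $\theta=1$ (and $\mathbb{E}\xi_g\ge0$ since $f_\rho$ minimizes $\mathcal{E}$).

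Because $g$ depends on the sample, the next step is to pass to a uniform estimate over $\mathcal{G}_R$ through a covering argument. As $\pi_B$ is a contraction in $L^\infty$, $\mathscr{N}(\mathcal{G}_R,\epsilon)\le\mathscr{N}(\mathcal{B}_R,\epsilon)=\mathscr{N}(\mathcal{B}_1,\epsilon/R)\le\exp\!\big(C_s (R/\epsilon)^s\big)$ by Assumption~\ref{asscap}. Fixing an $\epsilon$-net $\{g_1,\dots,g_N\}$ of $\mathcal{G}_R$ with $N\le\exp(C_s(R/\epsilon)^s)$, I would apply Lemma~\ref{lemma4} to each $\xi_{g_j}$ and take a union bound, so that the failure event has probability at most $N\exp\{-c\,m\epsilon/(M^*+B)\}$, which defines the set $Z_2$. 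The discretization error is absorbed using the Lipschitz dependence $|\xi_g-\xi_{g'}|\le c(M^*+B)\|g-g'\|_\infty$ together with the matching bound on expectations, transferring the deviation estimate from the net to all of $\mathcal{G}_R$ at the cost of an extra $c(M^*+B)\epsilon$ term.

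With $\theta=1$ the ratio bound $\sqrt{(\mathbb{E}\xi_g)+\epsilon}\,\epsilon^{1/2}$ is handled by $\sqrt{ab}\le\tfrac12(a+b)$, producing $\tfrac12\mathbb{E}\xi_g+\tfrac32\epsilon=\tfrac12\{\mathcal{E}(\pi_B(f_{\bm z,\lambda}))-\mathcal{E}(f_\rho)\}+\tfrac32\epsilon$ plus lower-order terms; this is exactly the self-absorbing $\tfrac12$-of-the-excess-risk contribution appearing in the statement. It then remains to choose $\epsilon$ so that the union-bound tail is $\le\delta/2$, i.e. to solve an inequality of the form $C_s(R/\epsilon)^s\lesssim m\epsilon/(M^*+B)+\log(2/\delta)$; balancing the entropy term $C_s(R/\epsilon)^s$ against $m\epsilon$ gives the choice $\epsilon\asymp(R^s/m)^{1/(1+s)}$, which yields the term $144C_s(M^*+B)m^{-1/(1+s)}R^{s/(1+s)}$, while the residual deviation and the discretization term collapse into $\tfrac{136(M^*+B)}{m}\log\tfrac2\delta$ after a crude consolidation of constants.

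The main obstacle is not any single inequality but the bookkeeping required to preserve the variance--mean structure ($\theta=1$) through both the union bound and the $L^\infty$-discretization, since only then does the $\tfrac12$-absorption of the excess risk go through and the rate come out as $m^{-1/(1+s)}$ rather than the weaker $m^{-1/(2+s)}$. This is the classical one-sided uniform Bernstein-plus-entropy estimate carried out, e.g., in \cite{Wu2006Learning,cucker2007learning,Shi2014Quantile}; the only point specific to our setting is that the hypothesis ball $\mathcal{B}_R$ and its covering numbers are defined through the \emph{non-negative} quadratic form $\langle f,Tf\rangle_{\mathcal{H_K}}$ of Definition~\ref{defemkernel} rather than an RKHS norm, which is harmless here because $\pi_B$ and the capacity assumption are stated directly for this ball as a subset of $L^\infty(X)$.
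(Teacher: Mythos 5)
Your proposal is correct and follows essentially the same route as the paper: a uniform ratio-type Bernstein bound with $\theta=1$ over the class $\{(y-\pi_B(f)(\bm x))^2-(y-f_\rho(\bm x))^2 : f\in\mathcal{B}_R\}$, combined with the covering-number bound $C_s(R/\epsilon)^s$ and the balancing choice $\epsilon\asymp (R^s/m)^{1/(1+s)}$, followed by absorbing half the excess risk. The only cosmetic difference is that you unpack the $\epsilon$-net-plus-union-bound derivation of the uniform inequality explicitly, whereas the paper applies the uniform version of Lemma~\ref{lemma4} to the function set directly and bounds the resulting quantity $\epsilon^*(m,R,\tfrac{\delta}{2})$ via Lemma~7.2 of \cite{cucker2007learning}.
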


\begin{proof}
	Consider the function set $\mathcal{F}_R$ with $R>0$ by
	\begin{equation*}
	\mathcal{F}_R \coloneqq  \left\{ \big(y-\pi_B(f)(\bm x)\big)^2 - \big(y-f_{\rho}(\bm x)\big)^2 : f \in \mathcal{B}_R \right\}\,.
	\end{equation*}
	%Each function $g \in \mathcal{F}_R$ has the form $g(x,y) = V\big(y,\pi_B(f)(x)\big) - V\big(y,f_{\rho}(x)\big)$ with some $f \in \mathcal{B}_R$.
	We can easily see that each function $g \in \mathcal{F}_R$ satisfies $\| g\|_{\infty} \leq B+M^*$, and thus we have $| g-\mathbb{E}g| \leq B+M^*$.
	%For least-squares loss, $\mathbb{E}(\xi^2) \leq 4 \mathbb{E}(\xi)$ indicates $c_1=4$ and $\theta = 1$.
	So using $\mathscr{N}(\mathcal{F}_R,\epsilon) \leq \mathscr{N}(\mathcal{B}_1,\epsilon)$ and applying Lemma~\ref{lemma4} to the function set $\mathcal{F}_R$ with the covering number condition in Eq.~\eqref{assumpN} in Assumption~\ref{asscap}, we have
	\begin{equation*}
	\begin{split}
	&\mathop{\mathrm{Prob}} \limits_{\bm z \in Z^{m}} \Bigg\{ \sup_{f \in \mathcal{F}_R}  \frac{\mathbb{E}g - \frac{1}{m} \sum_{i=1}^{m} g(\bm x_i, y_{i})  }{\sqrt{(\mathbb{E}g)^{\theta} +\epsilon^{\theta}}} \geq 4\epsilon^{1-\frac{\theta}{2}} \Bigg\} 
	\leq \exp \left\{ C_s \Big( \frac{R}{\epsilon} \Big)^s -\frac{m \epsilon^{2-\theta}}{2c'_1 +\frac{2}{3}(B+M^*) \epsilon^{1-\theta}} \right\}\,,
	\end{split}
	\end{equation*}
	with $\mathbb{E}g = \mathcal{E}\big(\pi_B(f)\big) - \mathcal{E}(f_{\rho})$.
	Hence there holds a subset $Z_2$ of $Z^{m}$ with confidence at least $1-\delta/2$ such that $\forall \bm z \in Z_2 \cap \mathscr{W}(R)$
	\begin{equation*}
	\sup_{f \in \mathcal{F}_R}  \frac{\mathbb{E}g - \frac{1}{m} \sum_{i=1}^{m}  g( \bm x_i, y_{i})  }{\sqrt{(\mathbb{E}g)^{\theta} +\Big(\epsilon^*(m,R,\frac{\delta}{2})\Big)^{\theta}}} \leq  4\big(\epsilon^*(m,R,\frac{\delta}{2})\big)^{1-\frac{\theta}{2}} \,,
	\end{equation*}
	where $\epsilon^*(m,R,\frac{\delta}{2})$ is the smallest positive number $\epsilon$ satisfying
	\begin{equation*}
	C_s \Big( \frac{R}{\epsilon} \Big)^s -\frac{m \epsilon^{2-\theta}}{2c'_1 +\frac{2}{3}(M^*+B) \epsilon^{1-\theta}} = \log \frac{\delta}{2}\,,
	\end{equation*}
	using Lemma 7.2 in \cite{cucker2007learning}, we have
	\begin{equation*}
	\begin{split}
	\epsilon^* & \leq \max\Bigg\{\frac{48+2(M^*+B)}{3m}\log\frac{2}{\delta}, \left( \frac{48+4(B+M^*)}{3m}C_sR^s \right)^{\frac{1}{1+s}} \Bigg\} \\
	& \leq \frac{17(M^*+B)}{m}\log\frac{2}{\delta} + 18C_s(M^*+B)m^{-\frac{1}{1+s}}R^{\frac{s}{1+s}}\,,
	\end{split}
	\end{equation*}
	where we use $M^*\geq 1$.
	For $\bm z \in \mathcal{B}(R) \cap Z_2$, we have
	\begin{equation*}
	\begin{split}
	S_1(\bm z, \lambda)
	\leq 8 \epsilon^*\big(m,R,\frac{\delta}{2}\big) +  \frac{1}{2}\Big\{ \mathcal{E}\big(\pi_B(f_{\bm{z},\lambda})\big) - \mathcal{E}(f_{\rho}) \Big\}  \,.
	\end{split}
	\end{equation*}
\end{proof}

\section{Proof for Learning Rates}
\label{sec:applearnrate}
Combining the bounds in Proposition~\ref{errdec},~\ref{propos2pz},~\ref{propos2},~\ref{propos1}, and Eq.~\eqref{flambound},
%\begin{equation*}\label{finalb}
%\begin{split}
%  & \mathcal{E}\big(\pi_B(f_{\bm{z},\lambda})\big) - \mathcal{E}(f_{\rho}) +\lambda \| f_{\bm{z},\lambda}\|_{\widetilde{\mathcal{H}}}^2 \\
%   & \leq  \frac{1}{m}\Big(274M^*+272B+2\kappa \sqrt{C_0}\lambda^{\frac{\beta-1}{2}} + 24 \Big) \log\frac{3}{\delta} \\
%  & +3C_0\lambda^{\beta} + 288C_s(M^*+B) m^{-\frac{1}{1+s}}R^{\frac{s}{1+s}} + \widetilde{C_1} m ^{- \Theta_1} \,.
%    \end{split}
%\end{equation*}
let Eq.~\eqref{assumpN} with $s>0$, Eq.~\eqref{Dlambda} with $0 < \beta \leq 1$, take $\lambda=m^{-\gamma}$ with $ 0 < \gamma < 1$, the excess error $ \mathcal{E}\big(\pi_B(f_{\bm{z},\lambda})\big) - \mathcal{E}(f_{\rho})$ can be bounded by
\begin{equation}\label{finalbf}
\begin{split}
\mathcal{E}\big(\pi_B(f_{\bm{z},\lambda})\big) - \mathcal{E}(f_{\rho}) +\lambda \langle f_{\bm{z},\lambda} ,Tf_{\bm{z},\lambda} \rangle_{\mathcal{H_K}} 
& \leq 3C_0m^{-\gamma \beta}  
+\widetilde{C_1} m ^{- \Theta_1} 
+\widetilde{C_2}\log\frac{2}{\delta}m^{-1}  \\& + \widetilde{C_3} m^{-\frac{1}{1+s}}R^{\frac{s}{1+s}} \log\frac{2}{\delta}  + 2\kappa \sqrt{C_0}m^{-\big( \frac{\gamma(\beta-1)}{2}+1\big)}\log\frac{2}{\delta} \,,
\end{split}
\end{equation}
where $\widetilde{C_1}$ is given in Proposition~\ref{propos2pz}.
Two constants $\widetilde{C_2}$ and $\widetilde{C_3}$ are given by
\begin{eqnarray*}
	\widetilde{C_2} = 274M^*+272B+24, \quad
	\widetilde{C_3} = 288(M^*+B) C_s\,.
\end{eqnarray*}

In the next, we attempt to find a $R>0$ by giving a bound for $\lambda \langle f_{\bm{z},\lambda} ,Tf_{\bm{z},\lambda} \rangle_{\mathcal{H_K}}$.
\begin{lemma}\label{lemmafzr}
	Suppose that $\rho$ satisfies the condition in Eq.~\eqref{Dlambda} with $0 < \beta \leq 1$ in Assumption~\ref{assreg}.
	For some $s>0$ in Assumption~\ref{asscap},  take $\lambda=m^{-\gamma}$ with $ 0 < \gamma \leq 1$. Then for $0 < \epsilon < 1$ and $0<\delta<1$ with confidence $1-\delta$, we have
	\begin{equation}\label{fzrR}
	\sqrt{\langle f_{\bm{z},\lambda} ,Tf_{\bm{z},\lambda} \rangle_{\mathcal{H_K}}} \leq 4\widetilde{C_3} \widetilde{C_{X}} \left(\log\frac{2}{\epsilon}\right)^2 \sqrt{\log\frac{2}{\delta}} m^{\theta_{\epsilon}}\,,
	\end{equation}
	where $\widetilde{C_{X}}$ is given by
	\begin{equation*}
	\widetilde{C_{X}} =  \left(1+\sqrt{\widetilde{C_2}  } + \sqrt{2\kappa \sqrt{C_0} } + \sqrt{3C_0} + \sqrt{\widetilde{C_1}} \right) \,,
	\end{equation*}
	and $\theta_{\epsilon}$ is
	\begin{equation}\label{thetaep}
	\theta_{\epsilon} \!=\!  \max \left\{ \!  \frac{\gamma(1-\beta)}{2}, \! \frac{1 - \eta }{2}, \! (\gamma(1+s)-1)(2+s)+\epsilon \! \right\} \,.
	\end{equation}
\end{lemma}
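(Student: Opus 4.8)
The plan is to turn the assembled excess-error estimate of Eq.~\eqref{finalbf} into a self-improving inequality for $\langle f_{\bm z,\lambda},Tf_{\bm z,\lambda}\rangle_{\mathcal{H_K}}$, exploiting the fact that the radius $R$ on the right of Eq.~\eqref{finalbf} is only constrained by the membership $f_{\bm z,\lambda}\in\mathscr{W}(R)$, i.e.\ by $R\ge\sqrt{\langle f_{\bm z,\lambda},Tf_{\bm z,\lambda}\rangle_{\mathcal{H_K}}}$. First I would combine Propositions~\ref{errdec},~\ref{propos2pz},~\ref{propos2},~\ref{propos1} and Eq.~\eqref{flambound} on the event $Z_1\cap Z_2$ of confidence at least $1-\delta$; dropping the non-negative term $\mathcal{E}\big(\pi_B(f_{\bm z,\lambda})\big)-\mathcal{E}(f_\rho)$ from Eq.~\eqref{finalbf} leaves, for every $\bm z\in\mathscr{W}(R)\cap Z_1\cap Z_2$,
\begin{equation*}
\lambda\,\langle f_{\bm z,\lambda},Tf_{\bm z,\lambda}\rangle_{\mathcal{H_K}}\le 3C_0m^{-\gamma\beta}+\widetilde{C_1}m^{-\Theta_1}+\widetilde{C_2}\log\tfrac2\delta\,m^{-1}+2\kappa\sqrt{C_0}\log\tfrac2\delta\,m^{-(\gamma(\beta-1)/2+1)}+\widetilde{C_3}\log\tfrac2\delta\,m^{-\frac1{1+s}}R^{\frac s{1+s}}.
\end{equation*}
Multiplying by $\lambda^{-1}=m^\gamma$, using $\sqrt{\sum_i t_i}\le\sum_i\sqrt{t_i}$, and recalling $\Theta_1=\min\{1,\gamma+\eta-1\}$ so that $m^{\gamma-\Theta_1}\le m^{1-\eta}$, I would arrive at an inequality of the shape
\begin{equation*}
\sqrt{\langle f_{\bm z,\lambda},Tf_{\bm z,\lambda}\rangle_{\mathcal{H_K}}}\ \le\ a_m\ +\ b_m\,R^{\frac{s}{2(1+s)}}\qquad\text{on }\mathscr{W}(R),
\end{equation*}
where, after checking that the exponents $\tfrac{\gamma-1}{2}$ and $\tfrac{\gamma(3-\beta)-2}{4}$ coming from the $\widetilde{C_2}$- and $\kappa\sqrt{C_0}$-terms are both dominated by $\tfrac{\gamma(1-\beta)}{2}$ (here one only needs $\gamma,\beta\le1$), the $R$-free part satisfies $a_m\le\widetilde{C_X}\sqrt{\log\tfrac2\delta}\,m^{\max\{\gamma(1-\beta)/2,\,(1-\eta)/2\}}$ and $b_m\le\sqrt{\widetilde{C_3}\log\tfrac2\delta}\;m^{\frac{\gamma(1+s)-1}{2(1+s)}}$.

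The core of the argument is then the iteration technique of \cite{Wu2006Learning}. Since $f_{\bm z,\lambda}\in\mathscr{W}(R)$ for \emph{any} $R\ge\sqrt{\langle f_{\bm z,\lambda},Tf_{\bm z,\lambda}\rangle_{\mathcal{H_K}}}$, the displayed estimate shows that any valid bound $R$ can be refined to $\Phi(R):=a_m+b_mR^{s/(2(1+s))}$, which is again valid. Because the exponent $\tfrac{s}{2(1+s)}<\tfrac12<1$, the map $\Phi$ is contracting in the relevant range: starting from a crude a priori bound $R^{(0)}$ — for instance $R^{(0)}=r$, available from the hyper-sphere constraint since $\langle f_{\bm z,\lambda},Tf_{\bm z,\lambda}\rangle_{\mathcal{H_K}}=\tfrac1m\sum_i\big(f_{\bm z,\lambda}(\bm x_i)\big)^2\le r^2$, or a polynomial-in-$m$ bound deduced from $|y|\le M$ and Eq.~\eqref{fzlambdasol} — and setting $R^{(j+1)}=\Phi(R^{(j)})$, the contribution of $R^{(0)}$ after $N$ steps is raised to the power $\big(\tfrac{s}{2(1+s)}\big)^N$; choosing $N\asymp\log\tfrac2\epsilon$ shrinks this contribution to a factor at most $m^\epsilon$, while the geometric series of the accumulated multiplicative constants and of the powers of $\log\tfrac2\delta$ stays bounded and collapses into the prefactor $4\widetilde{C_3}\widetilde{C_X}\big(\log\tfrac2\epsilon\big)^2\sqrt{\log\tfrac2\delta}$. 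The saturated exponent of the $b_m$-term, together with the two surviving $a_m$-exponents $\tfrac{\gamma(1-\beta)}{2}$, $\tfrac{1-\eta}{2}$ and the extra $\epsilon$, then reproduces exactly the index $\theta_\epsilon$ in Eq.~\eqref{thetaep}, giving Eq.~\eqref{fzrR}.

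One technical point that requires care is that the subset $Z_2$ supplied by Proposition~\ref{propos1} depends on the radius $R$, so the refinement ``$\bm z\in\mathscr{W}(R)\Rightarrow\sqrt{\langle f_{\bm z,\lambda},Tf_{\bm z,\lambda}\rangle_{\mathcal{H_K}}}\le\Phi(R)$'' is a priori only available on an $R$-dependent good event. I would handle this in the standard way: apply Proposition~\ref{propos1} simultaneously to the finite collection of radii $R^{(0)},\dots,R^{(N)}$ that actually occur in the iteration, each with confidence $1-\tfrac{\delta}{2(N+1)}$, and combine by a union bound; since $N\asymp\log\tfrac2\epsilon$ this only perturbs the confidence parameter inside the logarithm and is absorbed into the stated bound. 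I expect the main obstacle to be precisely this bookkeeping of accumulated constants and $\log$-factors through the $N$ iterations — ensuring the prefactor collapses to exactly $4\widetilde{C_3}\widetilde{C_X}(\log\tfrac2\epsilon)^2\sqrt{\log\tfrac2\delta}$ and that the $\epsilon$ in $\theta_\epsilon$ absorbs the residual influence of $R^{(0)}$ — while the contraction estimate and the identification of $\theta_\epsilon$ itself are routine.
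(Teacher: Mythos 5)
Your proposal is correct and follows essentially the same route as the paper's proof: Eq.~\eqref{finalbf} is turned into the self-improving bound $\sqrt{\langle f_{\bm z,\lambda},Tf_{\bm z,\lambda}\rangle_{\mathcal{H_K}}}\le a_m R^{s/(2+2s)}+b_m$ on $\mathscr{W}(R)$ minus a bad set of measure $\delta$, which is then iterated $J\asymp\log\frac{2}{\epsilon}$ times with a union bound over the radii $R^{(0)},\dots,R^{(J)}$ (the paper replaces $\delta$ by $\delta/J$ exactly as you do), and the exponent bookkeeping you sketch reproduces the paper's $\zeta=\max\{\gamma(1-\beta)/2,(1-\eta)/2\}$ and hence $\theta_\epsilon$. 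Your extra observation that the hyper-sphere constraint gives the explicit starting radius $R^{(0)}=r$ is a harmless refinement of the paper's unspecified $Z^m=\mathscr{W}(R^{(0)})$.
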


\begin{proof}
	According to Eq.~\eqref{finalbf}, we
	know that for any $R \geq 1$ there exists a subset $V_R$ of $Z_m$ with measure at most $\delta$ such that
	\begin{equation*}
	\sqrt{\langle f_{\bm{z},\lambda} ,Tf_{\bm{z},\lambda} \rangle_{\mathcal{H_K}}} \leq a_m R^{\frac{s}{2+2s}} +b_m, \quad \forall \bm z \in \mathscr{W}(R) \backslash V_R\,,
	\end{equation*}
	where $a_m = \sqrt{ \widetilde{C_3}} m^{\frac{\gamma}{2} - \frac{1}{2(1+s)}}$, and $b_m$ is defined as
	\begin{equation*}
	b_m \!=\! \left( \sqrt{\widetilde{C_2} \log\frac{2}{\delta} } \!+\! \sqrt{2\kappa \sqrt{C_0} \log\frac{2}{\delta}} \!+\! \sqrt{3C_0} \!+\! \sqrt{\widetilde{C_1}} \right) \! m^{\zeta},
	\end{equation*}
	where the power index $\zeta$ is
	\begin{equation*}
	\begin{split}
	\zeta & = \max \left\{  \frac{\gamma(1-\beta)}{2}, \frac{\gamma-1}{2}, \frac{\gamma}{2} - \frac{\gamma (\beta - 1)+2}{4},  \frac{1 - \eta }{2}  \right\} \\
	& = \max \left\{  \frac{\gamma(1-\beta)}{2},  \frac{1 - \eta }{2}  \right\} \,.
	\end{split}
	\end{equation*}
	It tells us that $\mathscr{W}(R) \subseteq \mathscr{W} \left( a_mR^{\frac{s}{2+2s}} + b_m  \right) \bigcup V_R$.
	Define a sequence $\{ R^{(j)} \}_{j=0}^J$ with $ R^{(j)}  = a_m (R^{(j-1)})^{s/(2+2s)} + b_m$ with $J \in \mathbb{N}$, we have $ Z^m = \mathscr{W}(R^{(0)})$ satisfying
	\begin{equation*}
	\mathscr{W}(\!R^{(0)}) \! \subseteq \! \mathscr{W}(\!R^{(1)})  \bigcup  V_{R^{(0)}} \! \subseteq \! \cdots \! \subseteq \! \mathscr{W}(R^{(J)}) \! \bigcup \! \left(\! \bigcup_{j=0}^{J-1} \!\! V_{R^{(j)}} \!\!\! \right)\!\!.
	\end{equation*}
	Since each set $V_{R^{(j)}}$ is at most $\delta$, the set $\mathscr{W}(R^{(J)})$ has measure at least $1-J \delta$.
	
	Denote $\Delta = s/(2+2s) < 1/2$, the definition of the sequence $\{ R^{(j)} \}_{j=0}^J$ indicates that
	\begin{equation*}
	R^{(J)} \!=\! \underbrace{ a_m^{1+\Delta+\cdots+\Delta^{J-1}}\! (R^{(0)})^{\Delta^J} }_{R^{(J)}_1} \!\!+\!\! \underbrace{ \sum_{j=1}^{J-1}\! a_m^{1+\Delta+\cdots+\Delta^{j-1}} b_m^{\Delta^j} \!+\! b_m }_{R^{(J)}_2}\!.
	\end{equation*}
	The first term $R^{(J)}_1$ can be bounded by
	\begin{equation*}
	R^{(J)}_1 \leq \widetilde{C_3} m^{(\gamma(1+s)-1)(2+s)} m^{\frac{1}{1+s}2^{-J}}\,,
	\end{equation*}
	where $J$ is chosen to be the smallest integer satisfying $ J \geq \frac{\log(1/ \epsilon) }{ \log2}$.
	Besides, $R^{(J)}_2$ can be bounded by
	\begin{equation*}
	R^{(J)}_2 \! \leq \! m^{(\gamma(1+s)-1)(2+s)}  \widetilde{C_3} b_1 \!\! \sum_{j=0}^{J-1} \! m^{\!\big(\! \zeta - (\gamma(1+s)-1)(2+s)  \! \big) \! \frac{s^j}{(2+2s)^j} }\!,
	\end{equation*}
	with $b_1\coloneqq \sqrt{\widetilde{C_2} \log\frac{2}{\delta} } + \sqrt{2\kappa \sqrt{C_0} \log\frac{2}{\delta}} + \sqrt{3C_0} + \sqrt{\widetilde{C_1}}$.
	When $\zeta \leq (\gamma(1+s)-1)(2+s) $, $R^{(J)}_2$ can be bounded by $\widetilde{C_3} b_1 J m^{(\gamma(1+s)-1)(2+s)}$.
	When $\zeta > (\gamma(1+s)-1)(2+s) $, $R^{(J)}_2$ can be bounded by $\widetilde{C_3} b_1 J m^{\zeta}$.
	Based on the above discussion, we have
	\begin{equation*}
	R^{(J)} \leq (\widetilde{C_3} + \widetilde{C_3} b_1 J) m^{\theta_{\epsilon}}\,,
	\end{equation*}
	with $\theta_{\epsilon}=\max \{ \zeta, (\gamma(1+s)-1)(2+s)+\epsilon  \}$.
	So with confidence $1-J\delta$, there holds
	\begin{equation*}
	\sqrt{\langle f_{\bm{z},\lambda} ,Tf_{\bm{z},\lambda} \rangle_{\mathcal{H_K}}} \leq R^{(J)} \leq \widetilde{C_3} \widetilde{C_{X}}  J \sqrt{\log\frac{2}{\delta}} m^{\theta_{\epsilon}}\,,
	\end{equation*}
	which follows by replacing $\delta$ by $\delta / J$ and noting $J \leq 2 \log(2/ \epsilon)$.
	Finally, we conclude the proof.
\end{proof}

Now, by Lemma~\ref{lemmafzr} and Eq.~\eqref{finalbf}, we are able to prove our main result in Theorem~\ref{theorem1ls}.
\begin{proof}
	Take $R$ to be the right hand side of Eq.~\eqref{fzrR} by Lemma~\ref{lemmafzr}, there exists a subset $V'_R$ of $Z_m$ with measure at most $\delta$ such that $Z^m / V'_R \subseteq \mathscr{W}(R)$.
	Therefore, there exists another subset $V_R$ of $Z^m$ with measure at most $\delta$ such that for any $\bm z \in \mathscr{W}(R) / V_R$, Eq.~\eqref{finalbf} can be formulated as
	\begin{equation*}
	\begin{split}
	\mathcal{E}\big(\pi_B(f_{\bm{z},\lambda})\big) - \mathcal{E}(f_{\rho}) & \leq 3C_0m^{-\gamma \beta} +\widetilde{C_1} m ^{- \Theta_1} +\widetilde{C_2}\log\frac{2}{\delta}m^{-1}
	+ 2\kappa \sqrt{C_0}m^{-\big( \frac{\gamma(\beta-1)}{2}+1\big)}\log\frac{2}{\delta} \\
	& ~~~~+ \widetilde{C_4} \left(\log\frac{2}{\epsilon} \right)^2 \sqrt{\log\frac{2}{\delta}} m^{\frac{s\theta_{\epsilon} - 1}{1+s}} \,,
	\end{split}
	\end{equation*}
	where $\widetilde{C_4} = \widetilde{C_X} (4\widetilde{C_3})^{\frac{s}{1+s}}$.
	Accordingly, by setting the constant $\widetilde{C}$ with
	\begin{equation*}
	\widetilde{C} = 3C_0 +\widetilde{C_1}+ \widetilde{C_2} +  2\kappa \sqrt{C_0} +  \widetilde{C_4} \,,
	\end{equation*}
	we have the following error bound
	\begin{equation*}
	\begin{split}
	\big\| \pi_{M^*}(f_{\bm{z},\lambda})  - f_{\rho} \big\|^2_{L_{\rho_X}^{2}}  \leq  \widetilde{C}  \left(\log\frac{2}{\epsilon} \right)^2 \log\frac{2}{\delta} m^{- \Theta } \,,
	\end{split}
	\end{equation*}
	with confidence $1-\delta$ and the power index $\Theta$ is
	\begin{equation}\label{thetav}
	\Theta = \min \left\{ \gamma \beta, \gamma + \eta -1, \frac{1 - s \theta_{\epsilon} }{1+s} \right\}\,,
	\end{equation}
	provided that $\theta_{\epsilon} < 1 / s$.
	Combining Eq.~\eqref{thetaep} and Eq.~\eqref{thetav}, when $0 < \eta < 1$, we have
	\begin{equation*}
	\begin{split}
	\Theta & =\! \min \! \bigg\{ \gamma \beta, \gamma + \eta -1, \frac{2 - s \gamma (1 -\beta) }{2(1+s)},
	\frac{2-s(1-\eta)}{2(1+s)}, \frac{1-s(\gamma(1+s)-1)(2+s)-s\epsilon}{1+s} \bigg\},
	\end{split}
	\end{equation*}
	where $\epsilon$ is given by Eq.~\eqref{epst} and $\eta$ needs to be further restricted by $\max\{0, 1-2/s\} < \eta < 1$.
	These two restrictions ensure that $\Theta$ is positive for a valid learning rate.
	Specifically, when $\eta \geq 1$, the power index $\Theta$ can be simplified as
	\begin{equation*}
	\begin{split}
	\Theta & = \min \bigg\{ \gamma \beta, \frac{2 - s \gamma (1 -\beta) }{2(1+s)}, \frac{1-s(\gamma(1+s)-1)(2+s)-s\epsilon}{1+s} \bigg\}\,,
	\end{split}
	\end{equation*}
	which concludes the proof.
\end{proof}
\end{document}